\documentclass{article}

\usepackage[preprint]{neurips_2026}

\usepackage{hyperref}
\usepackage[utf8]{inputenc}
\usepackage[T1]{fontenc}
\usepackage{microtype}
\usepackage{enumitem}

\usepackage{amsmath,amssymb,amsfonts,amsthm}
\usepackage{mathtools}
\usepackage{bm}

\usepackage{multirow}

\usepackage{tikz}
\usepackage{pgfplots}
\usepgfplotslibrary{groupplots}
\pgfplotsset{compat=1.18} 

\usepackage{graphicx}
\usepackage{booktabs}
\usepackage{nicefrac}
\usepackage{algorithm}
\usepackage{algpseudocode}
\usepackage{tabularx}

\usepackage{hyperref}
\usepackage{url}
\usepackage{natbib}

\usepackage[table]{xcolor}
\usepackage{colortbl}
\usepackage[most]{tcolorbox}
\usepackage{amsmath,amssymb,amsthm}

\newcommand{\KL}{\operatorname{KL}}

\usepackage{pifont}

\definecolor{darkblue}{RGB}{0,51,102}
\definecolor{alertred}{RGB}{180,0,0}
\definecolor{successgreen}{RGB}{0,110,0}
\definecolor{lightyellow}{RGB}{255,252,220}
 
\hypersetup{
  colorlinks=true,
  linkcolor=blue!60!black,
  citecolor=teal!60!black,
  urlcolor=magenta!60!black
}

\definecolor{headerblue}{RGB}{220,230,242}
\definecolor{lightgrayrow}{RGB}{245,245,245}
\definecolor{poegreen}{RGB}{228,242,228}

\definecolor{headerorange}{RGB}{252,235,205}
\definecolor{shiftmethod}{RGB}{255,245,230}

\definecolor{headerpurple}{RGB}{234,226,245}
\definecolor{alphahighlight}{RGB}{245,240,250}

\definecolor{headerslate}{RGB}{224,228,235}
\definecolor{ablationhighlight}{RGB}{242,244,247}

\definecolor{headerbeige}{RGB}{245,236,220}
\definecolor{ablationsoft}{RGB}{252,247,240}

\definecolor{headerteal}{RGB}{221,238,236}
\definecolor{ablationteal}{RGB}{240,248,247}

\definecolor{headerrose}{RGB}{244,228,236}
\definecolor{priorrose}{RGB}{252,243,247}

\definecolor{appblue}{RGB}{226,236,248}
\definecolor{appgreen}{RGB}{232,244,234}
\definecolor{appyellow}{RGB}{250,243,214}
\definecolor{apppink}{RGB}{246,231,238}
\definecolor{apppurple}{RGB}{236,229,247}
\definecolor{appteal}{RGB}{227,242,240}

\definecolor{boxborder}{RGB}{120,120,120}
\definecolor{boxbg}{RGB}{248,248,248}

\newtheorem{theorem}{Theorem}
\newtheorem{lemma}{Lemma}
\newtheorem{proposition}{Proposition}
\newtheorem{corollary}{Corollary}
\theoremstyle{definition}

\newcommand{\TV}{\mathrm{TV}}

\newcommand{\supp}{\mathrm{supp}}

\title{When Policies Cannot Be Retrained: A Unified Closed-Form View of Post-Training Steering in Offline Reinforcement Learning}

\author{%
  Elias Hossain$^{1}$,
  Mohammad Jahid Ibna Basher$^{1}$,
  Ivan Garibay$^{1}$,
  Ozlem Garibay$^{1}$,
  Niloofar Yousefi$^{1}$ \\
  $^{1}$College of Engineering and Computer Science\\
  University of Central Florida, Orlando, FL, USA \\\\
  \texttt{\{mdelias.hossain, niloofar.yousefi\}@ucf.edu}
}

\begin{document}

\maketitle

\begin{abstract}
Offline RL can learn effective policies from fixed datasets, but deployment objectives often change after training, and in many applications the trained actor cannot be retrained (restricted data access, costly retraining, or governance requirements). We study deployment-time adaptation for frozen offline actors through a Product-of-Experts (PoE) composition with a goal-conditioned prior, and our central practical finding is one of graceful degradation rather than universal performance gain: under degraded or random priors, precision-weighted composition anchors to the frozen actor while additive and prior-only adaptation collapse, and a KL-budget selector typically lands at a near-oracle operating point. As a clarifying unification in this deployment setting we make explicit a closed-form identity that the frozen-actor adaptation literature has left implicit: for diagonal-Gaussian actors and priors, PoE with coefficient $\alpha$ is the same deterministic policy as the KL-regularized update with $\beta=\alpha/(1-\alpha)$, the posterior covariances differing only by a global scalar $1+\beta$. As an empirical characterization of where this actor-anchored composition is worth applying, four D4RL environments ($3{,}900$ MuJoCo episodes) give an illustrative $4$ / $5$ / $3$ \textsc{Help}/\textsc{Frozen}/\textsc{Hurt} split; a full $\alpha\in\{0.1,0.3,0.5,0.7,0.9\}$ sweep on six harder cells (medium-replay, medium-expert; $5{,}850$ episodes) and two AntMaze diagnostics then expose an \emph{actor-competence ceiling}: medium-expert remains \textsc{Hurt} in all $9$ cells at every $\alpha$, medium-replay is strongly $\alpha$-responsive ($7$ of $9$ Frozen ties at $\alpha=0.9$), and on AntMaze a BC frozen actor solves zero episodes so every composition rule inherits zero. CQL/IQL-guided and SF/GPI baselines (appendix-only) are competitive in places but do not consistently overturn this picture. PoE and KL-Reg therefore position as a single actor-anchored safety layer with a regime boundary set by actor competence rather than knob choice.
\end{abstract}

\section{Introduction}

Offline reinforcement learning (RL) \citep{janner2021offline} trains policies from previously collected data without environment interaction \citep{chenreddy2026epistemic, park2025pretraining, zhang2026reform}. Deployment objectives often change after training, but existing offline RL methods address such changes during training through conservative value estimation or behavior-regularized optimization \citep{chen2023conservative, li2023crop, guan2024poce}. A practically important setting remains less explored: the actor has already been trained and approved, the objective is revised afterward, and modifying the actor is not feasible. This frozen-actor setting arises when data access is limited, retraining is expensive, or any policy change requires governance or revalidation.

We consider \emph{actor-preserving deployment-time adaptation}: steering a frozen offline actor toward a revised objective while remaining conservative relative to the validated policy. We combine a fixed actor $\pi_\theta(a\mid s)$ with a goal-conditioned deployment prior $\rho_\phi(a\mid s,g)$ through a Product-of-Experts (PoE) refinement \citep{welling2007product}. The refined policy is the exact solution to an entropy-regularized objective, and for diagonal-Gaussian actors and priors it reduces to a closed-form precision-weighted update. As a clarifying unification in the frozen-actor deployment setting we make explicit that this closed form is algebraically the same deterministic policy as the KL-regularized frozen-actor adaptation rule under $\beta=\alpha/(1-\alpha)$; the underlying Gaussian KL-barycenter algebra is standard, but the two rules have been treated as distinct mechanisms in prior work, and we organize the empirical analysis around this identity.

Empirically, we study this actor-anchored composition on four D4RL continuous-control environments using MuJoCo rollouts ($3{,}900$ episodes; $95\%$ bootstrap CIs) \citep{fu2020d4rl,todorov2012mujoco} and then extend to six harder cells and two AntMaze diagnostics. The central practical finding is \emph{graceful degradation}: under degraded or random priors, precision-weighted composition remains close to the frozen actor while prior-only and additive adaptation collapse, and a KL-budget selector typically lands near the oracle operating point. An $\alpha\in\{0.1,0.3,0.5,0.7,0.9\}$ sweep then exposes an \emph{actor-competence ceiling} (medium-expert \textsc{Hurt} in all $9$ cells at every $\alpha$; AntMaze with a BC actor zero across all methods), so composition is bounded by actor competence rather than knob choice.

\paragraph{Contributions.}
\begin{itemize}[leftmargin=1.2em]
    \item \textbf{Closed-form unification.} We formalize \emph{actor-preserving deployment-time adaptation} and make explicit that the closed-form Gaussian PoE refinement $\pi_{\mathrm{ref}}\propto \pi_\theta^{\alpha}\rho_\phi^{1-\alpha}$ is the same deterministic policy as the KL-regularized adaptation update with $\beta=\alpha/(1-\alpha)$, with the exact covariance relationship $\Sigma_{\mathrm{ref}}^{\mathrm{PoE}}(\alpha) = (1+\beta)\,\Sigma_{\mathrm{ref}}^{\mathrm{KL\text{-}Reg}}(\beta)$. The underlying Gaussian KL-barycenter algebra is standard; the contribution is to make this identity explicit in the frozen-actor deployment setting, where the two rules have been treated as separate mechanisms.

    \item \textbf{Practical robustness under degraded priors.} The central empirical finding is graceful degradation, not universal gain: PoE/KL-Reg anchor to the frozen actor while additive and prior-only collapse as the prior is made noisy or random, and a KL-budget rule recovers near-oracle $\alpha$ without hindsight tuning. This holds on the same four D4RL envs used for the headline package ($3{,}900$ MuJoCo episodes, $95\%$ bootstrap CIs, seed-matched pairs).

    \item \textbf{Boundary characterization.} As an empirical characterization of when actor-anchored composition is worth applying, we report a \textsc{Help}/\textsc{Frozen}/\textsc{Hurt} split of $4$/$5$/$3$ on the four envs and extend to harder regimes. A full $\alpha$ sweep on six cells (medium-replay, medium-expert) plus AntMaze diagnostics exposes an \emph{actor-competence ceiling}: medium-expert is \textsc{Hurt} in all $9$ cells at every $\alpha$ we test, medium-replay recovers to Frozen ties at conservative $\alpha$, and AntMaze with a BC frozen actor yields zero success across all composition and critic-guided rules.

    \item \textbf{Supporting baselines and honest theory.} CQL/IQL-guided and SF/GPI baselines are benchmarked in the appendix and are competitive in places without consistently overturning the actor-anchored robustness picture. We include a conservative-improvement decomposition (Theorem~\ref{thm:cpi}) as a structural reading only; its penalty coefficient is loose at $\gamma{=}0.99$ and we do not claim predictive power.
\end{itemize}

\section{Related Work}
\label{sec:related-work}

\paragraph{Offline reinforcement learning.}
Offline RL controls distributional mismatch between the behavior and learned policies \citep{kumar2020conservative,fujimoto2019off,kostrikov2021offline}, and advantage-weighted regression / AWAC \citep{peng2019advantage,nair2020awac} give the closed-form KL-regularized update $\pi(a\mid s)\propto \pi_\beta(a\mid s)\,\exp(A(s,a)/\beta)$ as the canonical Gaussian trust-region reference. All of these adapt at training time; our setting holds the actor fixed and adapts only at deployment.

\paragraph{Transfer with changing reward.}
Successor features and generalized policy improvement handle transfer across reward weightings \citep{barreto2017successor,barreto2018transfer}, and goal-conditioned RL conditions policies or value functions on a goal at training time \citep{chane2021goal,liu2022goal}. These modify the training pipeline; we adapt only at deployment through a separate goal-conditioned prior.

\paragraph{Multiplicative composition of policies.}
Product-of-Experts combines distributions multiplicatively \citep{hinton2002training}. In RL, energy-based max-entropy policies \citep{haarnoja2017reinforcement} and composable deep RL \citep{haarnoja2018composable} use multiplicative composition of Gaussian policies with the closed-form precision-weighted mean and covariance we reuse; decoding-time composition of base and expert models has been studied for language generation \citep{liu2021dexperts}. We do not introduce PoE composition; the contribution is to make its identity with the KL-regularized frozen-actor update explicit and to use it to organize an empirical characterization.

\section{Problem Setup}
\label{sec:prelim}

Let $\mathcal{M}=(\mathcal{S},\mathcal{A},P,r,\gamma)$ be a discounted MDP and let $g \in \mathcal{G}$ denote a deployment goal that modifies the effective reward through $r_g(s,a)$. In our D4RL locomotion experiments, $g\in\mathbb{R}^3$ is a weight vector over the reward components (forward, control, alive) and $r_g(s,a)=g\cdot\mathit{rc}(s,a)$. The deployment objective is $J_g(\pi)=\mathbb{E}_{\tau\sim\pi}[\sum_{t\ge 0}\gamma^t r_g(s_t,a_t)]$.

We are given a fixed offline dataset
\[
\mathcal{D}=\{(s_i,a_i,r_i,s_i',d_i)\}_{i=1}^{N},
\]
collected by some behavior policy. Using only $\mathcal{D}$, we pretrain an actor $\pi_\theta(a\mid s)$ and then freeze its parameters. At deployment time, no new interaction, no new data collection, and no actor retraining are allowed. The problem is therefore not to learn a new policy from scratch, but to construct a deployment policy that remains conservative with respect to $\pi_\theta$ while adapting to the updated deployment goal $g$.

For a policy $\pi$ we write $V_g^\pi$, $Q_g^\pi$, and $A_g^\pi$ for the goal-conditioned value, action-value, and advantage functions, and $d^\pi(s)=(1-\gamma)\sum_{t\ge 0}\gamma^t \Pr(s_t=s\mid\pi)$ for the discounted state occupancy. We study goal shift as the main deployment mismatch in the experiments; train-to-deploy transition shift is addressed structurally in Appendix~\ref{app:theory-extended}.

\section{Deployment-Time Policy Refinement via Product-of-Experts}
\label{sec:method}

We adapt a frozen offline actor at deployment time by composing it with a goal-conditioned prior. The central idea is to modify action selection without modifying actor parameters.

\subsection{PoE Refinement}
\label{sec:poe}

Let $\pi_\theta(a\mid s)$ denote the frozen actor and let $\rho_\phi(a\mid s,g)$ denote a goal-conditioned deployment prior encoding the updated preference under goal $g$. We define the refined deployment policy as
\begin{equation}
\pi_{\mathrm{ref}}(a\mid s,g)
=
\frac{
\pi_\theta(a\mid s)^{\alpha}\rho_\phi(a\mid s,g)^{1-\alpha}
}{
\sum_{a'\in\mathcal{A}}
\pi_\theta(a'\mid s)^{\alpha}\rho_\phi(a'\mid s,g)^{1-\alpha}
},
\label{eq:poe-policy}
\end{equation}
where $\alpha\in[0,1]$ controls the adaptation--conservatism tradeoff. As $\alpha\to1$, the refined policy recovers the frozen actor; as $\alpha\to0$, it approaches the deployment prior restricted to actor-supported actions.

This multiplicative form is actor-preserving in finite action spaces: for any $\alpha>0$, if $\pi_\theta(a\mid s)=0$, then $\pi_{\mathrm{ref}}(a\mid s,g)=0$. Thus, refinement cannot assign mass to actions ruled out by the frozen actor.

\subsection{Goal-Conditioned Prior}
\label{sec:prior}

Our primary instantiation uses a lightweight goal-conditioned network that maps $(s,g)$ to a distribution over actions or, in continuous control, to the parameters of a Gaussian density. The prior is trained offline and separately from the actor, so deployment-time adaptation requires only recombining the frozen actor and the learned prior under a specified goal $g$.

This design keeps actor pretraining and deployment-time adaptation modular: the actor captures behavior supported by the offline dataset, while the prior provides goal-specific steering at inference time.

\subsection{Continuous-Action Gaussian Form}
\label{sec:continuous-actions}

In the D4RL continuous-control setting, both the frozen actor and the deployment prior are diagonal Gaussian policies,
\[
\pi_\theta(\cdot\mid s)=\mathcal{N}(\mu_\theta(s),\Sigma_\theta(s)),
\qquad
\rho_\phi(\cdot\mid s,g)=\mathcal{N}(\mu_\phi(s,g),\Sigma_\phi(s,g)).
\]
The PoE-refined policy is then Gaussian with precision and mean
\[
\Sigma_{\mathrm{ref}}^{-1}
=
\alpha\,\Sigma_\theta^{-1}
+
(1-\alpha)\,\Sigma_\phi^{-1},
\qquad
\mu_{\mathrm{ref}}
=
\Sigma_{\mathrm{ref}}
\bigl(
\alpha\,\Sigma_\theta^{-1}\mu_\theta
+
(1-\alpha)\,\Sigma_\phi^{-1}\mu_\phi
\bigr),
\]
a closed-form precision-weighted combination of the frozen actor and the deployment prior.

\paragraph{Equivalence with KL-regularized adaptation.}
\label{sec:equivalence}
The Gaussian PoE update above coincides with the closed-form KL-regularized frozen-actor adaptation update $\arg\min_\pi \mathbb{E}_\pi[-\log\rho_\phi(a\mid s,g)] + \beta\,\mathrm{KL}(\pi\|\pi_\theta)$, which has Gaussian precision $\beta\Sigma_\theta^{-1}+\Sigma_\phi^{-1}$ and mean $(\beta\Sigma_\theta^{-1}\mu_\theta+\Sigma_\phi^{-1}\mu_\phi)/(\beta\Sigma_\theta^{-1}+\Sigma_\phi^{-1})$. Factoring $(1-\alpha)$ out of the PoE mean and setting $\beta=\alpha/(1-\alpha)$ recovers this expression exactly. The refined means agree, and the refined covariances satisfy $\Sigma_{\mathrm{ref}}^{\mathrm{PoE}}(\alpha) = (1+\beta)\,\Sigma_{\mathrm{ref}}^{\mathrm{KL\text{-}Reg}}(\beta)$: a global scalar that cancels under deterministic action selection. Under the deterministic deployment used in our rollouts, PoE$(\alpha)$ and KL-Reg$(\beta=\alpha/(1-\alpha))$ are the same policy. Appendix~\ref{app:equivalence_audit} reports a dataset-state and rollout audit.

\section{Theoretical Analysis}
\label{sec:theory}

We include two theorems for structural interpretation only; empirical claims in the main body are based on rollouts, not on the right-hand sides of these bounds. Theorem~\ref{thm:variational} is a standard KL-barycenter identity that ties the coefficient $\alpha$ to a weighted KL objective. Theorem~\ref{thm:cpi} decomposes the refined policy's return into a one-step advantage gain and a total-variation deviation penalty; the resulting bound is loose at standard discount factors (Appendix~\ref{app:cpi_diagnostic}), so we use it as a qualitative gain--deviation reading of the composition, not as a predictor of return. Both are stated in the finite-action setting for clarity, with a Wasserstein continuous-action analogue and additional guarantees in Appendix~\ref{app:theory-extended}.

\subsection{Variational Characterization}
\label{sec:theory-variational}

For fixed state $s$ and deployment goal $g$, PoE refinement is the unique optimizer of an entropy-regularized objective that trades off attraction to the frozen actor and attraction to the deployment prior. The identity is the standard KL-barycenter statement; we include it to ground the Gaussian closed form of Sec.~\ref{sec:continuous-actions} and to tie $\alpha$ to a weighted KL objective.

\begin{theorem}[Variational optimality]
\label{thm:variational}
Fix $s$ and $g$. Assume a finite action space and let $\Gamma_{s,g}=\{a\in\mathcal{A}:\pi_\theta(a\mid s)>0,\ \rho_\phi(a\mid s,g)>0\}$ be the effective common support of the actor and prior, on which both log-densities are well defined. Optimization is restricted to distributions supported on $\Gamma_{s,g}$; actions outside $\Gamma_{s,g}$ receive zero mass in any maximizer because the weighted KL objective is otherwise infinite and the PoE numerator vanishes (see the proof in Appendix~\ref{app:proof-theorem1}). Then the unique maximizer of
\begin{equation}
\mathcal{F}_{s,g}(\pi)
=
\mathbb{E}_{a\sim\pi}\!\left[
\alpha\log\pi_\theta(a\mid s)
+(1-\alpha)\log\rho_\phi(a\mid s,g)
\right]
+\mathcal{H}(\pi)
\label{eq:variational-obj}
\end{equation}
is the PoE policy in Eq.~\eqref{eq:poe-policy}. Equivalently,
\[
\pi_{\mathrm{ref}}
=
\operatornamewithlimits{argmin}_{\pi}
\Big[
\alpha\,\mathrm{KL}(\pi\|\pi_\theta)
+
(1-\alpha)\,\mathrm{KL}(\pi\|\rho_\phi)
\Big].
\]
\end{theorem}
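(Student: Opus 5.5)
The approach is the Gibbs variational principle: rewrite $\mathcal{F}_{s,g}$ as a negative KL divergence to the PoE policy plus a constant, and then use nonnegativity of KL with equality only at identity.

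First, on $\Gamma_{s,g}$ define the unnormalized weight $w(a)=\pi_\theta(a\mid s)^{\alpha}\rho_\phi(a\mid s,g)^{1-\alpha}>0$. Let $Z=\sum_{a\in\Gamma_{s,g}}w(a)$. Then $\pi_{\mathrm{ref}}=w/Z$, which agrees with Eq.~\eqref{eq:poe-policy}: for $\alpha\in(0,1)$ the numerator vanishes off $\Gamma_{s,g}$, so the normalizer in the paper equals $Z$. I would note that $Z>0$ requires $\Gamma_{s,g}\neq\emptyset$ and state this as implicit.

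Second, for any $\pi$ supported on $\Gamma_{s,g}$, I expand
\[
\mathcal{F}_{s,g}(\pi)=\sum_{a}\pi(a)\bigl[\log w(a)-\log\pi(a)\bigr]
=\log Z-\sum_a \pi(a)\log\frac{\pi(a)}{\pi_{\mathrm{ref}}(a)}
=\log Z-\mathrm{KL}(\pi\|\pi_{\mathrm{ref}}).
\]
By Gibbs' inequality, $\mathrm{KL}\ge0$ with equality iff $\pi=\pi_{\mathrm{ref}}$. This follows from strict concavity of $\log$ via Jensen, since the action space is finite. Hence $\pi_{\mathrm{ref}}$ is the unique maximizer and the maximal value is $\log Z$.

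For the equivalent barycenter form, I would use $\mathrm{KL}(\pi\|q)=-\mathcal{H}(\pi)-\mathbb{E}_\pi\log q$. Then $\alpha\,\mathrm{KL}(\pi\|\pi_\theta)+(1-\alpha)\mathrm{KL}(\pi\|\rho_\phi)=-\mathcal{F}_{s,g}(\pi)$, because the weights sum to one and so the entropy appears with coefficient exactly $1$. Minimizing the weighted KL is therefore the same as maximizing $\mathcal{F}_{s,g}$.

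The main obstacle is the support bookkeeping and the endpoints. For a $\pi$ that places mass outside $\Gamma_{s,g}$, there is an action where $\pi_\theta=0$ or $\rho_\phi=0$. For $\alpha\in(0,1)$, the corresponding KL term is $+\infty$ and $\mathcal{F}=-\infty$, so such $\pi$ are never optimal; this justifies the restriction. At $\alpha\in\{0,1\}$, one KL term drops out and the restriction to $\Gamma_{s,g}$ becomes a stated convention rather than a consequence. I would state that, under this convention, the same computation gives $\pi_\theta$ or $\rho_\phi$ renormalized on $\Gamma_{s,g}$. I would also use $0\log0=0$ throughout so that the entropy and the KL identities remain valid when $\pi$ vanishes on part of $\Gamma_{s,g}$.
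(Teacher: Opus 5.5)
Your proof is correct, but it takes a different route from the paper's. The paper introduces a Lagrange multiplier for the simplex constraint on $\Gamma_{s,g}$. It solves the first-order condition to get $p_a = e^{\lambda-1}\pi_\theta(a\mid s)^{\alpha}\rho_\phi(a\mid s,g)^{1-\alpha}$, normalizes, and appeals to strict concavity of the entropy for uniqueness. The weighted-KL form then comes from the same expansion you use.

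Your identity $\mathcal{F}_{s,g}(\pi)=\log Z-\mathrm{KL}(\pi\|\pi_{\mathrm{ref}})$ is instead a global certificate. Gibbs' inequality gives optimality, uniqueness, and the optimal value $\log Z$ at once, and it handles distributions that vanish on part of $\Gamma_{s,g}$ automatically. The Lagrangian computation, by contrast, differentiates $p_a\log p_a$ (not differentiable at $p_a=0$) and ignores the constraints $p_a\ge 0$. It also relies, without saying so, on the fact that a strictly positive stationary point of a concave function on the simplex is its global maximizer. What the Lagrangian route buys is that it derives the product form, rather than requiring you to guess $\pi_{\mathrm{ref}}$ in advance.

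Your support bookkeeping is also sharper than the paper's. You make $\Gamma_{s,g}\neq\emptyset$ explicit. You also correctly note that the claim ``the PoE numerator vanishes off $\Gamma_{s,g}$'' holds only for $\alpha\in(0,1)$. At $\alpha\in\{0,1\}$, with the usual convention $0^0=1$, Eq.~\eqref{eq:poe-policy} can put mass outside $\Gamma_{s,g}$, so it matches the restricted maximizer only after renormalizing on $\Gamma_{s,g}$.
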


Theorem~\ref{thm:variational} makes the multiplicative composition rule a variational consequence rather than a heuristic; the identity itself is textbook KL-barycenter. The continuous-action Gaussian closed form of Sec.~\ref{sec:continuous-actions} is its precision-space solution, and the equivalence with KL-regularized adaptation (Sec.~\ref{sec:equivalence}) follows directly. In finite action spaces, the same identity preserves actor support: actions ruled out by the frozen actor receive zero mass in $\pi_{\mathrm{ref}}$ for every $\alpha>0$.

\subsection{Conservative Improvement Relative to the Frozen Actor}
\label{sec:theory-cpi}

We next characterize how refinement affects return under the deployment objective. Let
\[
\bar{A}_g^{\pi_\theta}(s)
=
\mathbb{E}_{a\sim\pi_{\mathrm{ref}}(\cdot\mid s,g)}
\left[A_g^{\pi_\theta}(s,a)\right]
\]
denote the one-step expected advantage of the refined policy under the frozen actor's value function, and define
\[
\delta_\pi
=
\sup_s
\mathrm{TV}\!\Big(
\pi_{\mathrm{ref}}(\cdot\mid s,g),
\pi_\theta(\cdot\mid s)
\Big).
\]

\begin{theorem}[Conservative improvement decomposition]
\label{thm:cpi}
Assume bounded rewards and bounded prior log-density. Then
\begin{equation}
J_g(\pi_{\mathrm{ref}})-J_g(\pi_\theta)
\;\ge\;
\frac{1}{1-\gamma}
\mathbb{E}_{s\sim d^{\pi_\theta}}
\!\left[\bar{A}_g^{\pi_\theta}(s)\right]
-
\frac{2\gamma}{(1-\gamma)^2}\,\varepsilon_A\,\delta_\pi,
\label{eq:cpi-bound}
\end{equation}
where
\[
\varepsilon_A
=
\sup_s \left|\bar{A}_g^{\pi_\theta}(s)\right|.
\]
\end{theorem}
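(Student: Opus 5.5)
The plan follows the Kakade--Langford / TRPO argument (Schulman et al.; Achiam et al.'s CPO bound), with $\pi_{\mathrm{ref}}$ as the new policy and $\pi_\theta$ as the reference.

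\textbf{Step 1: exact identity.} I would start from the performance difference lemma applied to the deployment reward $r_g$:
\[
J_g(\pi_{\mathrm{ref}})-J_g(\pi_\theta)=\frac{1}{1-\gamma}\,\mathbb{E}_{s\sim d^{\pi_{\mathrm{ref}}}}\bigl[\bar{A}_g^{\pi_\theta}(s)\bigr].
\]
Bounded rewards make $A_g^{\pi_\theta}$ bounded, so every expectation is finite. The bounded prior log-density assumption only serves to keep $\pi_{\mathrm{ref}}$ well defined on $\Gamma_{s,g}$, via Theorem~\ref{thm:variational}. The inner expectation over actions is exactly $\bar A_g^{\pi_\theta}(s)$ as defined before the theorem, because $\pi_{\mathrm{ref}}$ is the policy generating actions.

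\textbf{Step 2: change of state distribution.} I would add and subtract $\mathbb{E}_{s\sim d^{\pi_\theta}}[\bar A_g^{\pi_\theta}(s)]$. The remainder satisfies
\[
\Bigl|\mathbb{E}_{d^{\pi_{\mathrm{ref}}}}\bar A-\mathbb{E}_{d^{\pi_\theta}}\bar A\Bigr|\le 2\,\varepsilon_A\,\mathrm{TV}\bigl(d^{\pi_{\mathrm{ref}}},d^{\pi_\theta}\bigr),
\]
since $|\bar A|\le\varepsilon_A$ pointwise and $|\mathbb{E}_p f-\mathbb{E}_q f|\le 2\|f\|_\infty\mathrm{TV}(p,q)$, with TV taken as half the $\ell_1$ norm. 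The factor convention will need to be matched to the paper's $2\gamma$ constant at the end.

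\textbf{Step 3 (main obstacle): occupancy divergence.} I need
\[
\mathrm{TV}\bigl(d^{\pi_{\mathrm{ref}}},d^{\pi_\theta}\bigr)\le\frac{\gamma}{1-\gamma}\,\delta_\pi.
\]
I would prove it by writing $d^\pi=(1-\gamma)(I-\gamma P_\pi)^{-1}\mu_0$ and using the resolvent identity
\[
d^{\pi_{\mathrm{ref}}}-d^{\pi_\theta}=\gamma\,(I-\gamma P_{\pi_{\mathrm{ref}}})^{-1}(P_{\pi_{\mathrm{ref}}}-P_{\pi_\theta})\,d^{\pi_\theta}/(1-\gamma)\cdot(1-\gamma).
\]
Two facts then give the bound. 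First, $\|(I-\gamma P)^{-1}\|_1\le 1/(1-\gamma)$. Second, $\|(P_{\pi_{\mathrm{ref}}}-P_{\pi_\theta})\nu\|_1\le 2\sup_s\mathrm{TV}(\pi_{\mathrm{ref}}(\cdot|s),\pi_\theta(\cdot|s))\|\nu\|_1=2\delta_\pi$, because the state-transition kernel induced by a policy mixes $P(\cdot|s,a)$ over actions.

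The delicate part is the constants. Going through $\ell_1$ gives $\|d^{\pi_{\mathrm{ref}}}-d^{\pi_\theta}\|_1\le 2\gamma\delta_\pi/(1-\gamma)$, which is the same as $\mathrm{TV}\le\gamma\delta_\pi/(1-\gamma)$. Combined with Step 2 and the prefactor $1/(1-\gamma)$, this yields exactly
\[
\frac{2\gamma}{(1-\gamma)^2}\,\varepsilon_A\,\delta_\pi.
\]
If the bookkeeping instead gives the penalty with a different factor of 2, I would fall back on a coupling argument. Couple the two trajectories so they disagree at each step with probability at most $\delta_\pi$. Then the probability that they have diverged by time $t$ is at most $1-(1-\delta_\pi)^t\le t\,\delta_\pi$, and summing $\gamma^t t$ gives the $\gamma/(1-\gamma)^2$ scaling.

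\textbf{Step 4: assemble.} Lower-bounding the exact identity by the $d^{\pi_\theta}$ term minus the penalty gives Eq.~\eqref{eq:cpi-bound}. As the text notes, nothing beyond bounded rewards is used, and the result is loose in the $1/(1-\gamma)^2$ factor.
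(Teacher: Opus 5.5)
Your proposal is correct and follows the same route as the paper. Both arguments apply the performance-difference lemma, add and subtract the $d^{\pi_\theta}$ expectation, bound the remainder by $\varepsilon_A\|d^{\pi_{\mathrm{ref}}}-d^{\pi_\theta}\|_1$, and apply $\|d^{\pi_{\mathrm{ref}}}-d^{\pi_\theta}\|_1\le \frac{2\gamma}{1-\gamma}\delta_\pi$. Your resolvent-identity derivation of that occupancy bound, which the paper only cites as standard, is valid and yields exactly the stated constant, so the coupling fallback is not needed.
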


The decomposition makes the one-step gain versus deviation tradeoff explicit; as noted above, at $\gamma=0.99$ the penalty coefficient $2\gamma/(1-\gamma)^2=19{,}800$ with empirical sup-state TV near $1$, so we use Theorem~\ref{thm:cpi} only as a qualitative reading. Proofs are in Appendix~\ref{app:theory-extended}.

\section{Experimental Setup}
\label{sec:setup}

We evaluate on four D4RL continuous-control environments (\texttt{halfcheetah-medium-v2}, \texttt{hopper-medium-v2}, \texttt{walker2d-medium-v2}, \texttt{hopper-medium-expert-v2}). A diagonal-Gaussian actor is trained by behavioral cloning and then frozen; a goal-conditioned Gaussian prior is trained separately by contrastive goal-weighted behavioral cloning. Their PoE composition is Gaussian in closed form (Sec.~\ref{sec:continuous-actions}). For the harder-regime analysis (Sec.~\ref{sec:harder_regimes}, Appendix~\ref{app:harder_rollouts}) we additionally train and evaluate on the matching \texttt{-medium-replay-v2} and \texttt{-medium-expert-v2} variants of halfcheetah, hopper, and walker2d, plus \texttt{antmaze-umaze-v2} and \texttt{antmaze-medium-play-v2} as diagnostic boundary cases. All rollouts use the same protocol as the headline package unless noted.

\paragraph{Evaluation protocol.}
All main-body returns come from real MuJoCo simulator rollouts: five seeds $\times$ five episodes per seed, $25$ episodes per (method, environment, goal) cell, $3{,}900$ episodes total. Deployment goals are three-dimensional weight vectors over the per-step reward components (forward reward, control cost, alive bonus): $G_1\!=\!(1,0.1,0.1)$ speed, $G_2\!=\!(0.5,0.5,0.5)$ balanced, $G_3\!=\!(0.1,1,0.1)$ efficient. Reward components are recovered from the MuJoCo reward using the same decomposition used at prior training, so the goal-weighted return $\widehat{J}_g(\pi)=\sum_{t} g\cdot rc_t$ and the prior's objective are aligned. We also report D4RL-normalized return, and $\mathrm{KL}(\pi(\cdot\mid s,g)\,\|\,\pi_\theta(\cdot\mid s))$ averaged along rollout states.

\paragraph{Methods.}
We run Frozen Actor, Prior Only, Additive Mix ($\lambda=0.5$ in mean-and-std), matched KL-Reg at $\beta\in\{0.111,0.429,1.000,2.333,9.000\}$, and PoE at $\alpha\in\{0.1,0.3,0.5,0.7,0.9\}$ (with $\beta=\alpha/(1-\alpha)$). An AWR/AWAC-style \citep{peng2019advantage,nair2020awac} first-order critic-gradient step $\mu_\theta(s)+(\sigma_\theta^2(s)/\beta)\,\nabla_a Q_g(s,\mu_\theta(s))$ at $\beta\in\{0.5,1,3\}$ and representative CQL/IQL-guided steps are appendix-only comparisons (Appendices~\ref{app:awr_baseline}, \ref{app:cql_iql_baselines}).

\paragraph{Statistical reporting.}
Per-cell $95\%$ bootstrap CIs over $25$ episodes, seed-matched paired differences with $95\%$ bootstrap CIs, and rliable-style \citep{agarwal2021deep} probability of improvement, all resampled from per-seed summaries. ``Return'' means the MuJoCo rollout return throughout the main body.

\section{Results}
\label{sec:results}

From real MuJoCo rollouts we report three findings, visualized in Figures~\ref{fig:equivalence}, \ref{fig:headline_rollout}, and \ref{fig:prior_degradation} in that order: (i)~PoE refinement and KL-regularized adaptation are the same deterministic Gaussian policy under $\alpha\leftrightarrow\beta=\alpha/(1-\alpha)$; (ii)~composition with a reliable prior helps on some cells, ties the frozen actor on others, and hurts on the remainder; (iii)~under degraded priors, precision-weighted composition anchors to the frozen actor while additive and prior-only adaptation collapse. Those prior-degradation results are the strongest practical evidence in the paper, because they show when the conservative anchor matters most.

\subsection{PoE Refinement and KL-Regularized Adaptation Are the Same Update}
\label{sec:d4rl_equivalence}

We verify the identity of Sec.~\ref{sec:equivalence} at two levels. On $5{,}000$ sampled dataset states per environment and every $\alpha\in\{0.1,0.3,0.5,0.7,0.9\}$, per-state $|\mu_{\mathrm{PoE}}-\mu_{\mathrm{KL\text{-}Reg}}|$ is at float-$32$ precision ($\le 2\times 10^{-6}$). In MuJoCo rollouts at $\alpha=0.5,\beta=1.0$ the matched pair produces bit-identical returns and episode lengths in all $12$ cells; at the other matched $\alpha$, paired $95\%$ CIs include zero and residuals are consistent with chaotic amplification of float-precision jitter, not a policy-level difference. Figure~\ref{fig:equivalence} summarizes both views; the full audit is in Appendix~\ref{app:equivalence_audit}.

\begin{figure*}[t]
\centering
\includegraphics[width=0.95\textwidth]{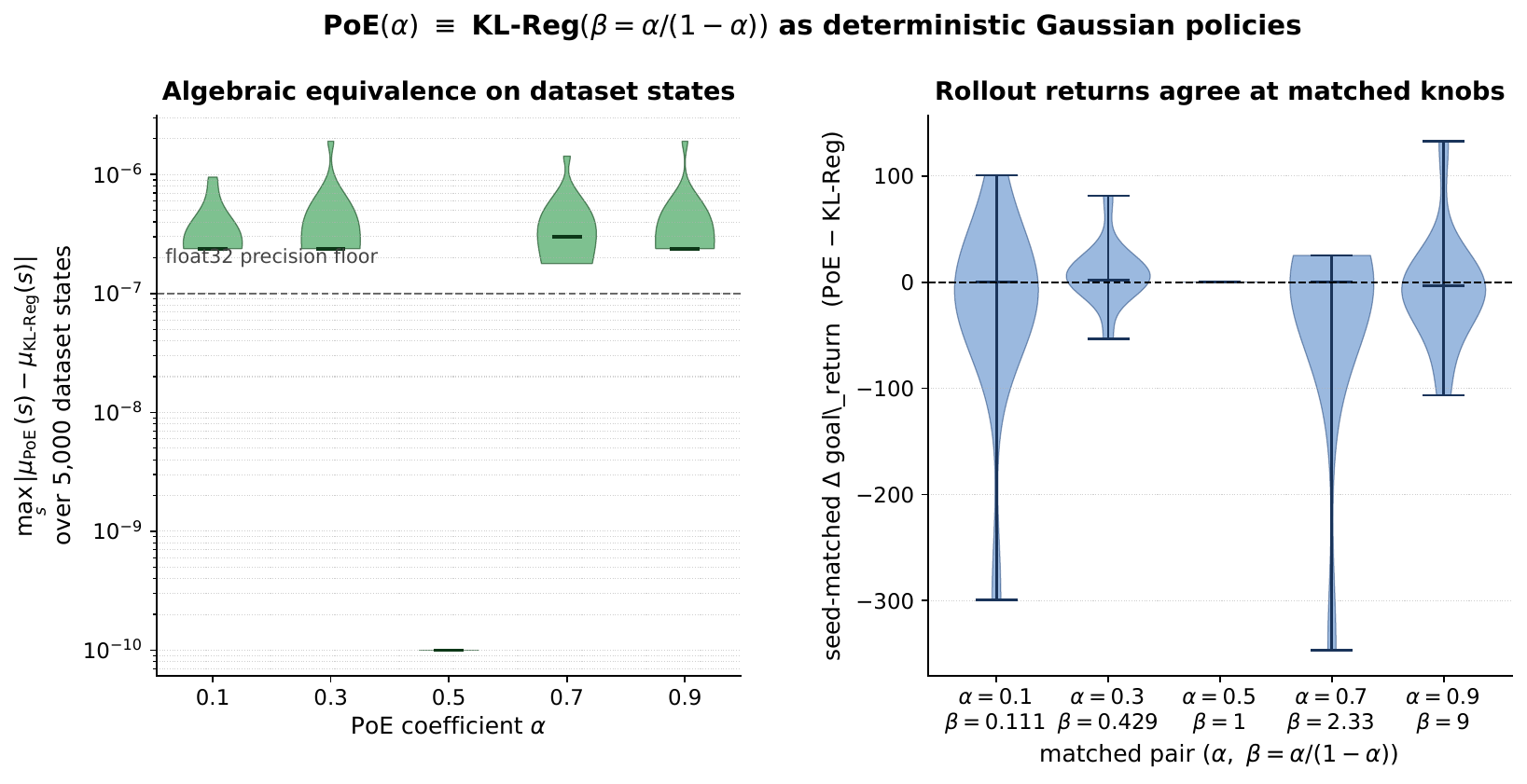}
\caption{Empirical verification that $\mathrm{PoE}(\alpha)$ and $\mathrm{KL\text{-}Reg}(\beta=\alpha/(1-\alpha))$ are the same deterministic Gaussian policy. \textbf{Left}: per-state $|\mu_{\mathrm{PoE}}-\mu_{\mathrm{KL\text{-}Reg}}|$ over $5{,}000$ dataset states per $\alpha$, across four D4RL environments and three deployment goals. Violin scale is logarithmic; the dashed line marks float-$32$ machine precision. \textbf{Right}: seed-matched rollout goal-weighted-return difference (PoE minus matched KL-Reg) at each matched $\alpha$, pooled across all $12$ environment-by-goal cells. The residual spread away from $\alpha=0.5$ is MuJoCo chaotic amplification of float-precision action jitter, not a policy-level difference.}
\label{fig:equivalence}
\end{figure*}

\subsection{Headline Rollout Results}
\label{sec:d4rl_headline}

Figure~\ref{fig:headline_rollout} reports the mean goal-weighted return with $95\%$ bootstrap CI per method family and per $(\text{env},\text{goal})$ cell. For each composition family (Additive, KL-Reg, PoE) we plot the best-in-family operating point on goal-weighted return; exact per-cell values are in Appendix~\ref{app:d4rl_single_env_detail}.

\begin{figure*}[t]
\centering
\includegraphics[width=0.95\textwidth]{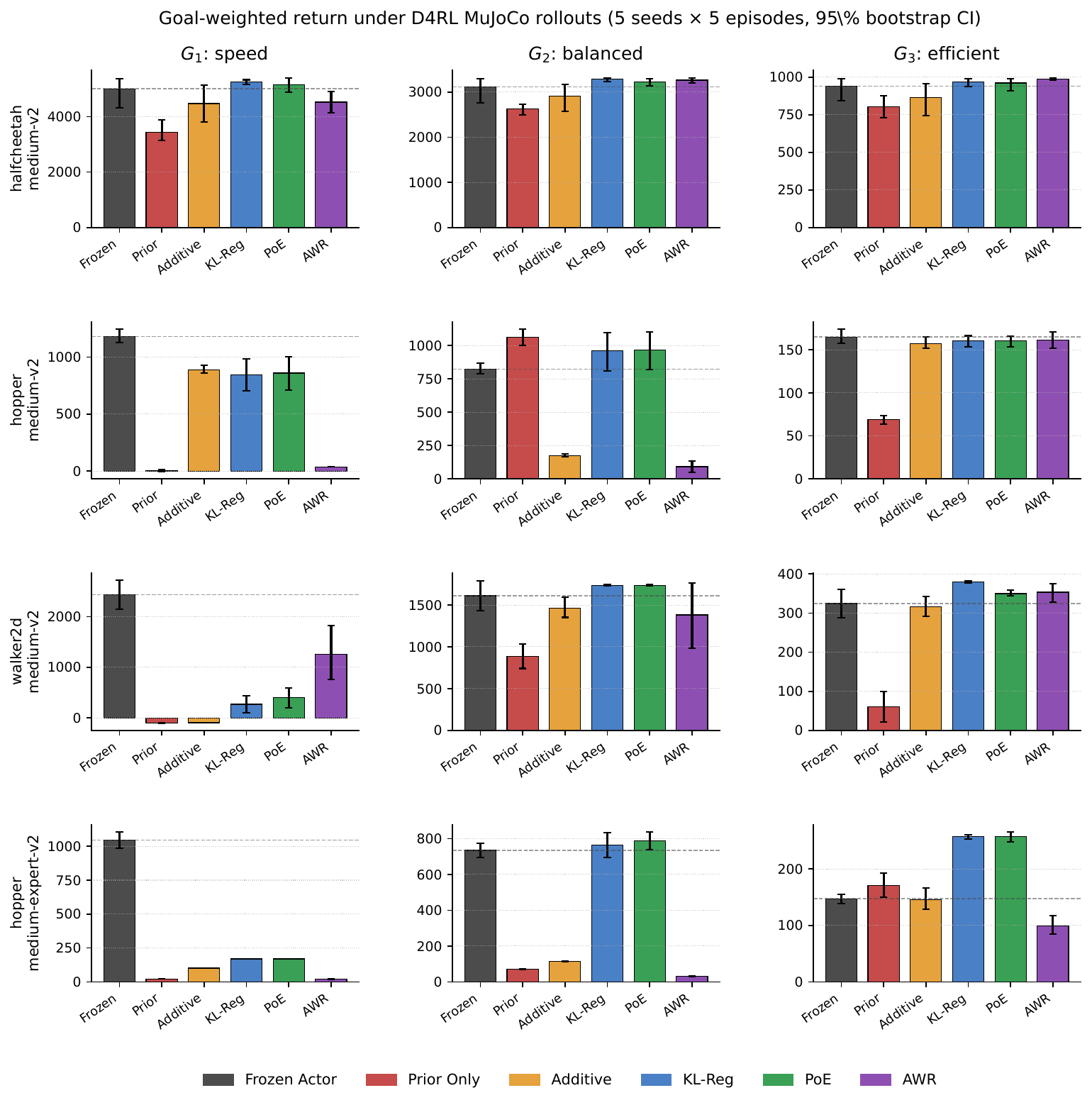}
\caption{Headline D4RL MuJoCo rollout comparison. Bars are means and error bars are $95\%$ bootstrap confidence intervals over $5$ seeds $\times\, 5$ episodes $=\,25$ episodes per cell. Rows are environments; columns are deployment goals. For each composition family, the plotted operating point is the best-in-family point on goal-weighted return.}
\label{fig:headline_rollout}
\end{figure*}

A seed-matched CI-half-width classifier (Appendix~\ref{app:diagnostic}) labels each cell \textsc{Help}/\textsc{Frozen}/\textsc{Hurt}; verdicts split $4$/$5$/$3$. On \texttt{halfcheetah-medium-v2} composition sits within overlapping CIs of the frozen actor on all three goals (\textsc{Frozen}). On \texttt{hopper-medium-v2} $G_2$, \texttt{hopper-medium-expert-v2} $G_2$, $G_3$, and \texttt{walker2d-medium-v2} $G_3$, the best composition method beats the frozen actor by $+144$, $+55$, $+110$, $+55$ (\textsc{Help}). On the three $G_1$ speed-dominant cells the frozen actor strictly dominates every composition method ($-291$, $-878$, $-2{,}030$): the actor is already near the velocity ceiling and redirection destabilizes the agent. All three \textsc{Hurt} cells are $G_1$ goals.

\subsection{Critic-Gradient Baseline (AWR/AWAC)}
\label{sec:d4rl_awr}

An AWR/AWAC-style baseline replaces the refinement prior with a first-order step along $\nabla_a Q_g(s,\mu_\theta(s))$ (Appendix~\ref{app:awr_baseline}). It beats Frozen on $3$ of $12$ cells (halfcheetah $G_2$/$G_3$, walker2d $G_3$), falls below Frozen by $>\!100$ points on $7$, and collapses to near Prior Only on $5$ hopper- and walker2d-$G_1$ cells: offline FQE is reliable in-distribution, but the AWR shift pushes $\mu_\theta(s)$ off the support ridge and early-terminates hopper-type agents. Density-anchored composition avoids this pathology. Representative CQL/IQL-guided steps at $\beta=1.0$ are more stable (CQL and IQL beat Frozen on $4$ and $5$ of $12$ cells) but neither consistently dominates the actor-anchored baseline (Appendix~\ref{app:cql_iql_baselines}).

\subsection{Robustness to Degraded Deployment Priors}
\label{sec:d4rl_prior_robustness_main}

We evaluate the four environments and three goals under four prior variants (the trained checkpoint, \emph{undertrained}, \emph{noisy} with Gaussian parameter noise at $\sigma=0.05$, and \emph{random}) with three seeds and three episodes per cell. Figure~\ref{fig:prior_degradation} reports the gap in goal-weighted return relative to the frozen actor, averaged across goals.

\begin{figure*}[t]
\centering
\includegraphics[width=0.95\textwidth]{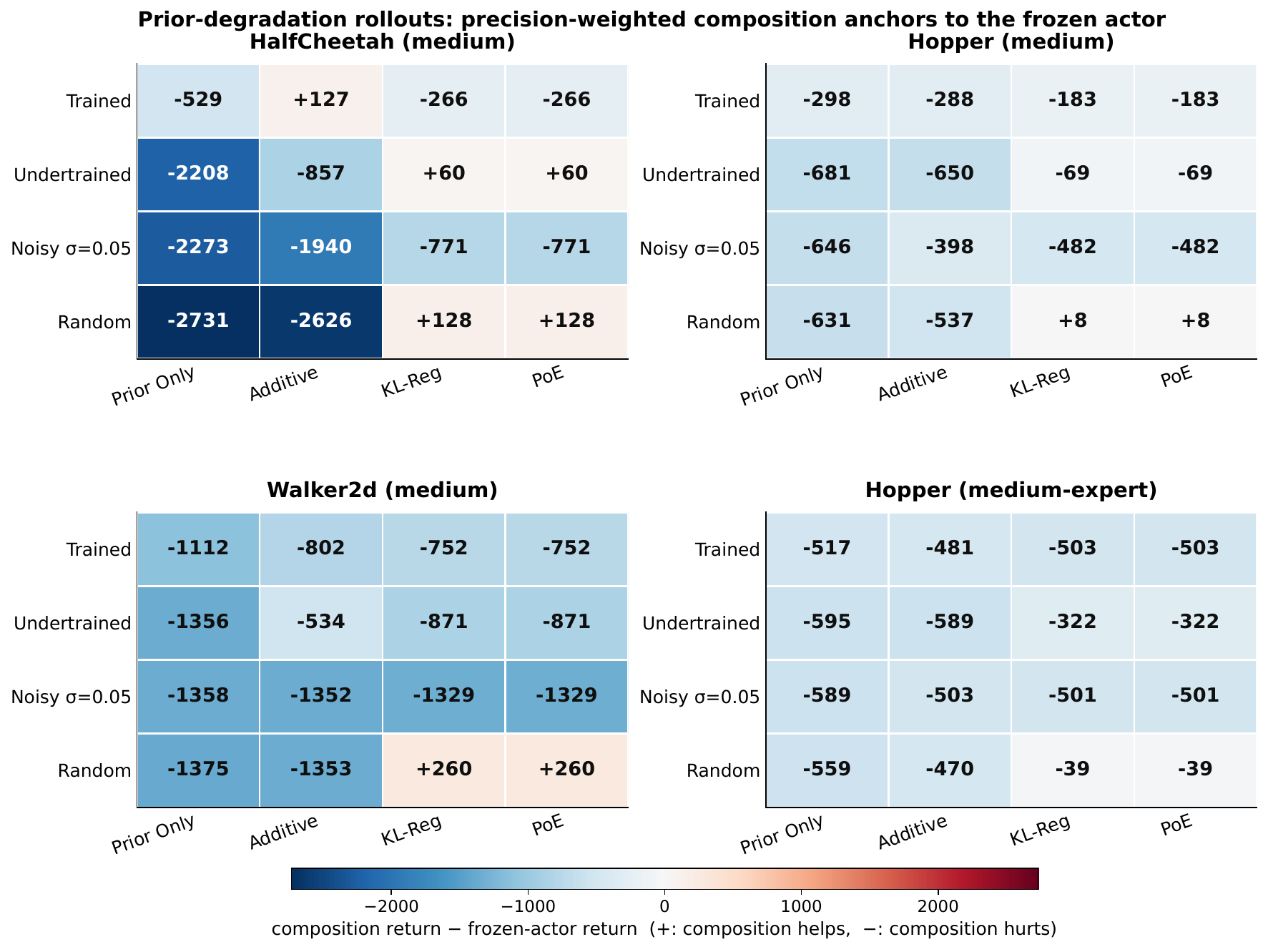}
\caption{Prior-degradation rollouts. Each cell is the mean goal-weighted return of a composition family minus the frozen-actor return on the same environment, averaged across goals. Blue: composition helps. Red: composition hurts. PoE and matched KL-Reg are the same Gaussian update (Sec.~\ref{sec:d4rl_equivalence}); a single representative column is shown.}
\label{fig:prior_degradation}
\end{figure*}

Additive and prior-only deployment collapse as the prior degrades, while precision-weighted composition stays near the frozen actor. Under \emph{random} prior: Prior Only $86.3$, Additive $164.2$, Frozen $1{,}410.5$, PoE/KL-Reg $1{,}499.8$; under \emph{noisy} ($\sigma=0.05$): $193.8/362.0/\text{--}/639.6$; under \emph{undertrained}: $200.4/753.3/\text{--}/1{,}110.0$ (per-cell numbers: Appendix~\ref{app:d4rl_prior_quality}). A degraded prior has low precision, so the posterior mean shrinks toward the frozen actor; additive mixing carries damaged means through. Precision-weighted composition is an actor-anchored \emph{safety layer} under untrustworthy priors, not a return-maximization rule. A practical KL-budget $\alpha$ selector (Appendix~\ref{app:d4rl_adaptive_alpha}) typically lands at or near the oracle operating point, making $\alpha$ a deployable knob rather than an oracle-only curiosity.

\subsection{Harder Regimes and the Actor-Competence Ceiling}
\label{sec:harder_regimes}

To test whether the \textsc{Help}/\textsc{Frozen}/\textsc{Hurt} boundary depends on regime or $\alpha$, we swept $\alpha\in\{0.1,0.3,0.5,0.7,0.9\}$ on six harder cells (\texttt{halfcheetah/hopper/walker2d-medium-replay-v2} and \texttt{-medium-expert-v2}, $5{,}850$ episodes). Equivalence persists: per-cell $|\mu_{\mathrm{PoE}}-\mu_{\mathrm{KL\text{-}Reg}}|$ is exactly $0$ at $\alpha=0.5$ in all $18$ cells. Table~\ref{tab:harder_regime_verdict_alpha} gives verdict counts. Medium-replay is $\alpha$-responsive: $7$ of $9$ cells are Frozen ties at $\alpha=0.9$. Medium-expert is not: all $9$ cells remain \textsc{Hurt} at every $\alpha$, including $\alpha=0.9$ (refined policy $90\%$ frozen). On \texttt{walker2d-medium-expert} $G_1$ Frozen reaches $4920$ while PoE at $\alpha\in\{0.1,0.3,0.5,0.7,0.9\}$ collapses to $\{399,72,62,63,78\}$ with early termination. We read this as an \emph{actor-competence ceiling}: once the frozen actor is at the locomotion ceiling, the knob has no room to leave the support ridge. Per-cell values are in Appendix~\ref{app:harder_alpha_sweep}.

\begin{table}[tb]
\centering
\small
\caption{Harder-regime $\alpha$ sweep: \textsc{Help}/\textsc{Frozen}/\textsc{Hurt} verdict counts per regime and $\alpha$, using the CI-half-width classifier of Appendix~\ref{app:diagnostic}. Each row uses $9$ env-by-goal cells (three envs, three goals). Medium-expert stays uniformly \textsc{Hurt}; medium-replay approaches Frozen ties as $\alpha$ rises.}
\label{tab:harder_regime_verdict_alpha}
\vspace{2pt}
\begin{tabular}{lccccc|ccccc}
\toprule
\rowcolor[RGB]{232,236,241}
& \multicolumn{5}{c|}{\textbf{medium-replay}} & \multicolumn{5}{c}{\textbf{medium-expert}} \\
\rowcolor[RGB]{232,236,241}
$\alpha$ & $0.1$ & $0.3$ & $0.5$ & $0.7$ & $0.9$ & $0.1$ & $0.3$ & $0.5$ & $0.7$ & $0.9$ \\
\midrule
\textsc{Help}   & 0 & 0 & 0 & 1 & 0 & 1 & 0 & 1 & 1 & 0 \\
\textsc{Frozen} & 1 & 2 & 5 & 5 & \textbf{7} & 0 & 1 & 0 & 0 & \textbf{0} \\
\textsc{Hurt}   & 8 & 7 & 4 & 3 & 2 & 8 & 8 & 8 & 8 & \textbf{9} \\
\bottomrule
\end{tabular}
\end{table}

\paragraph{AntMaze as a diagnostic boundary case.}
We include \texttt{antmaze-umaze-v2} and \texttt{antmaze-medium-play-v2} as a frozen-actor-competence diagnostic rather than as a task-success benchmark: a BC frozen actor solves $0$ of $125$ episodes per env, and every deployment method (composition, critic-guided, AWR) inherits this zero (Appendix~\ref{app:antmaze_diagnostic}). This establishes the extreme of the actor-competence ceiling and is not a claim about PoE performance on AntMaze-style tasks. An SF/GPI baseline \citep{barreto2017successor,barreto2018transfer} is competitive on halfcheetah but fragile on hopper and walker2d; we keep it appendix-only (Appendix~\ref{app:sf_gpi}).

\section{Discussion}
\label{sec:discussion}

Three readings summarize the evidence. First, PoE and KL-regularized adaptation are the same deterministic Gaussian policy under $\beta=\alpha/(1-\alpha)$, so the choice between them is a reparameterization. Second, the method's central practical value is \emph{graceful degradation under degraded priors}: PoE/KL-Reg anchor to the frozen actor while additive and prior-only adaptation collapse, giving an actor-anchored safety layer rather than a return-maximization rule. Third, the harder-regime $\alpha$ sweep identifies an \emph{actor-competence ceiling}: on medium-expert every $\alpha$ in our grid is \textsc{Hurt} in every cell, so composition is bounded by actor competence rather than knob choice, and AntMaze with a BC actor is the extreme of the same phenomenon. Stronger critic-guided baselines (AWR, CQL, IQL) and an SF/GPI comparison sharpen the regime boundary rather than overturn it: CQL wins a handful of harder-regime cells, IQL is either a numerical no-op or collapses on hopper-ME, and AWR remains fragile under hopper-type early termination.

\paragraph{Diagnosing unsafe steering \emph{a priori}.}
The actor-competence ceiling we document is an empirical finding; we do not yet provide a formal predictor of whether a given frozen actor is safely steerable. Candidate indicators that a practitioner could compute without on-policy rollouts, and that our evidence is qualitatively consistent with, include (i) baseline return near the environment's return ceiling (medium-expert saturates and composition has no slack), (ii) high sensitivity of the frozen-actor return to small steering perturbations (early termination in hopper- and walker2d-ME under $\alpha{=}0.7$ despite a $70\%$ actor weight), (iii) large log-density mismatch between $\pi_\theta(\cdot\mid s)$ and $\rho_\phi(\cdot\mid s,g)$ on dataset states, and (iv) a dataset collected by BC-like imitation of a policy whose offline success rate is near zero (the AntMaze boundary). Turning these into a calibrated pre-deployment safety check is an open direction rather than a contribution of this paper.

\section{Conclusion}
\label{sec:conclusion}

We make explicit that PoE and KL-regularized frozen-actor adaptation are the same deterministic Gaussian policy under $\beta=\alpha/(1-\alpha)$, a clarifying unification in this deployment setting rather than a new technical mechanism. The paper's central practical value is graceful degradation: under degraded or random priors, actor-anchored precision weighting stays close to the frozen actor while additive and prior-only adaptation collapse, and a KL-budget rule recovers a near-oracle operating point. As an empirical characterization of where composition is worth applying, four D4RL envs give a $4$/$5$/$3$ \textsc{Help}/\textsc{Frozen}/\textsc{Hurt} split; an $\alpha$ sweep on six harder cells plus AntMaze diagnostics identifies an \emph{actor-competence ceiling} (medium-expert is uniformly \textsc{Hurt} at every $\alpha$, AntMaze with a BC actor yields zero success). The paper therefore positions PoE and KL-Reg as a single actor-anchored safety layer with a regime boundary set by actor competence, not as a new composition rule or a universal return-maximization recipe.

\paragraph{Limitations.}
Formal guarantees are finite-action; experiments use continuous-action diagonal-Gaussians with a Wasserstein analogue in Appendix~\ref{app:theory-continuous}. Theorem~\ref{thm:cpi} is structural only: penalty coefficient $19{,}800$ at $\gamma=0.99$ with empirical sup-state TV near $1$. Composition does not universally improve over Frozen (Hurt in $3$ of $12$ main-body cells and $15$ of $18$ harder-regime cells at the best per-cell $\alpha$). We do not probe $\alpha>0.9$, but $9/9$ \textsc{Hurt} at $\alpha=0.9$ on medium-expert indicates a structural ceiling, not a tuning failure. AntMaze is diagnostic only (BC actor returns zero); we do not claim the paradigm covers AntMaze-grade exploration. AWR/CQL/IQL/SF/GPI baselines are mixed and do not consistently overturn the actor-anchored robustness picture. Our budget is $5$ seeds $\times\,5$ episodes per cell; larger budgets would tighten the reported intervals.

\bibliographystyle{plainnat}
\bibliography{ref}


\clearpage
\appendix
\renewcommand\thesection{\Alph{section}}
\renewcommand\thesubsection{\thesection.\arabic{subsection}}

\label{sec:final-appendix}
\addcontentsline{toc}{section}{Supplementary Material}
\tableofcontents

\newpage

\section{Theoretical Analysis (Extended)}
\label{app:theory-extended}

\subsection{Proof of Theorem~\ref{thm:variational}}
\label{app:proof-theorem1}

Fix a state $s$ and goal $g$, and abbreviate
\[
p_a := \pi(a\mid s),\quad
u_a := \pi_\theta(a\mid s),\quad
v_a := \rho_\phi(a\mid s,g).
\]
Because logarithms are only defined on positive mass, we work on the effective support
\[
\Gamma_{s,g}
:=
\{a\in\mathcal{A}:\; u_a>0,\; v_a>0\}.
\]
For actions outside $\Gamma_{s,g}$, the weighted KL objective is either infinite or the Product-of-Experts numerator is zero, so such actions receive zero mass in the optimizer. It is therefore sufficient to optimize over probability vectors supported on $\Gamma_{s,g}$.

The per-state objective is
\[
\mathcal{F}_{s,g}(p)
=
\sum_{a\in\Gamma_{s,g}}
p_a\big(\alpha \log u_a + (1-\alpha)\log v_a\big)
-
\sum_{a\in\Gamma_{s,g}} p_a \log p_a,
\]
subject to $p_a\ge 0$ and $\sum_{a\in\Gamma_{s,g}} p_a=1$.

Introduce a Lagrange multiplier $\lambda$ for the simplex constraint:
\[
\mathcal{L}(p,\lambda)
=
\sum_{a\in\Gamma_{s,g}}
p_a\big(\alpha \log u_a + (1-\alpha)\log v_a\big)
-
\sum_{a\in\Gamma_{s,g}} p_a \log p_a
+
\lambda\left(\sum_{a\in\Gamma_{s,g}} p_a - 1\right).
\]
Differentiating with respect to $p_a$ gives
\[
\frac{\partial \mathcal{L}}{\partial p_a}
=
\alpha \log u_a + (1-\alpha)\log v_a - (\log p_a + 1) + \lambda.
\]
Setting the derivative to zero yields
\[
\log p_a
=
\alpha \log u_a + (1-\alpha)\log v_a + \lambda - 1.
\]
Exponentiating,
\[
p_a
=
\exp(\lambda-1)\,u_a^\alpha v_a^{1-\alpha}.
\]
Because $\sum_{a\in\Gamma_{s,g}} p_a=1$, the normalizing constant is
\[
\exp(\lambda-1)
=
\left(\sum_{a'\in\Gamma_{s,g}} u_{a'}^\alpha v_{a'}^{1-\alpha}\right)^{-1}.
\]
Therefore,
\[
p_a
=
\frac{u_a^\alpha v_a^{1-\alpha}}
{\sum_{a'\in\Gamma_{s,g}} u_{a'}^\alpha v_{a'}^{1-\alpha}},
\]
which is exactly the Product-of-Experts policy on the effective support, with zero mass outside that support:
\[
\pi_{\mathrm{ref}}(a\mid s,g)
=
\frac{
\pi_\theta(a\mid s)^\alpha
\rho_\phi(a\mid s,g)^{1-\alpha}
}{
\sum_{a'}
\pi_\theta(a'\mid s)^\alpha
\rho_\phi(a'\mid s,g)^{1-\alpha}
}.
\]

To prove the equivalent KL form, consider
\[
\alpha \KL(\pi\|\pi_\theta) + (1-\alpha)\KL(\pi\|\rho_\phi).
\]
Expanding both KL terms over $\Gamma_{s,g}$ yields
\begin{align*}
&\alpha \sum_{a\in\Gamma_{s,g}} \pi(a\mid s)\log\frac{\pi(a\mid s)}{\pi_\theta(a\mid s)}
+
(1-\alpha)\sum_{a\in\Gamma_{s,g}} \pi(a\mid s)\log\frac{\pi(a\mid s)}{\rho_\phi(a\mid s,g)} \\
&=
\sum_{a\in\Gamma_{s,g}} \pi(a\mid s)\log \pi(a\mid s)
-
\sum_{a\in\Gamma_{s,g}} \pi(a\mid s)\big(\alpha\log\pi_\theta(a\mid s)+(1-\alpha)\log\rho_\phi(a\mid s,g)\big).
\end{align*}
This is exactly $-\mathcal{F}_{s,g}(\pi)$ up to the simplex constraint. Hence maximizing $\mathcal{F}_{s,g}$ is equivalent to minimizing the weighted KL objective. Uniqueness follows from strict concavity of entropy on the simplex.
\hfill $\square$

\begin{proposition}[Interpolation and support preservation]
\label{prop:interp}
For any fixed state $s$ and goal $g$, the PoE refinement interpolates between the frozen actor and the deployment prior as $\alpha\to 1$ and $\alpha\to 0$, and for every $\alpha>0$ it preserves actor support.
\end{proposition}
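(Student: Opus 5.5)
The plan is to work directly with the finite-action formula in Eq.~\eqref{eq:poe-policy}, with $s,g$ fixed and the abbreviations $u_a=\pi_\theta(a\mid s)$, $v_a=\rho_\phi(a\mid s,g)$ from the proof of Theorem~\ref{thm:variational}. The weight of action $a$ is $w_a(\alpha)=u_a^{\alpha}v_a^{1-\alpha}$, with the convention $0^{0}=1$ and $0^{t}=0$ for $t>0$. The refined policy is $w_a(\alpha)/Z(\alpha)$, where $Z(\alpha)=\sum_{a'}w_{a'}(\alpha)$. All three claims reduce to elementary statements about these finitely many weights.

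\textbf{Support preservation.} I will do this first, since it is immediate. If $\alpha>0$ and $u_a=0$, then $u_a^{\alpha}=0$, so $w_a(\alpha)=0$ and hence $\pi_{\mathrm{ref}}(a\mid s,g)=0$. The only point to check is that $Z(\alpha)>0$, so the policy is well defined. This holds whenever $\Gamma_{s,g}\neq\emptyset$, which is the standing assumption of Theorem~\ref{thm:variational}. Equivalently, one can read the claim off Theorem~\ref{thm:variational}: the maximizer is supported on $\Gamma_{s,g}$, which is contained in $\supp\,\pi_\theta$.

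\textbf{Interpolation.} I will take the two limits termwise. On $\Gamma_{s,g}$, the map $\alpha\mapsto w_a(\alpha)=\exp(\alpha\log u_a+(1-\alpha)\log v_a)$ is continuous on $[0,1]$. Since the sum is finite, $Z(\alpha)$ is continuous and positive, and the ratio converges.

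As $\alpha\to1$, $w_a\to u_a$ for $a\in\Gamma_{s,g}$. For $a$ with $u_a>0$ but $v_a=0$, the weight $w_a$ is identically $0$ for $\alpha<1$. So the limit is $\pi_\theta$ restricted to $\supp\,\rho_\phi$ and renormalized. This equals $\pi_\theta$ exactly when $\supp\,\pi_\theta\subseteq\supp\,\rho_\phi$.

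Symmetrically, as $\alpha\to0$ the limit is $\rho_\phi$ restricted to $\supp\,\pi_\theta$ and renormalized. This matches the paper's phrase ``deployment prior restricted to actor-supported actions.''

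\textbf{Main obstacle.} The main obstacle is not analytic but definitional: the one-sided support mismatch at the endpoints. The limits are continuous limits of $\pi_{\mathrm{ref}}(\alpha)$ for $\alpha\in(0,1)$, not the values obtained by plugging in $\alpha=1$ or $\alpha=0$ with the $0^0$ convention. I will therefore state the interpolation claim precisely. The limit at $\alpha\to1$ is $\pi_\theta(\cdot\mid s)$ conditioned on $\supp\,\rho_\phi(\cdot\mid s,g)$, and it equals $\pi_\theta$ under mutual absolute continuity, which holds automatically in the Gaussian case. The limit at $\alpha\to0$ is $\rho_\phi$ conditioned on $\supp\,\pi_\theta$. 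I will handle this by splitting the actions into $\Gamma_{s,g}$ and its complement and noting that complement weights vanish for all $\alpha\in(0,1)$.
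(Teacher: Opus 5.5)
Your proposal is correct and follows the same route as the paper's proof: take termwise limits of the weights $\pi_\theta(a\mid s)^\alpha\rho_\phi(a\mid s,g)^{1-\alpha}$ after normalization, and obtain support preservation from $\pi_\theta(a\mid s)^\alpha=0$ for $\alpha>0$. Your handling of the $\alpha\to1$ endpoint is more careful than the paper's. The paper asserts convergence to $\pi_\theta(a\mid s)$ for every actor-supported action, which silently requires $\supp\pi_\theta(\cdot\mid s)\subseteq\supp\rho_\phi(\cdot\mid s,g)$. This inclusion holds automatically in the Gaussian case, and it is exactly the qualification you make explicit, mirroring the one the paper itself gives at $\alpha\to0$.
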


\subsection{Proof of Proposition~\ref{prop:interp}}
\label{app:proof-proposition1}

Recall that
\[
\pi_{\mathrm{ref}}(a\mid s,g)
\propto
\pi_\theta(a\mid s)^\alpha
\rho_\phi(a\mid s,g)^{1-\alpha}.
\]

As $\alpha\to 1$, the exponent on $\rho_\phi$ tends to zero, so for every action with $\pi_\theta(a\mid s)>0$,
\[
\pi_\theta(a\mid s)^\alpha \rho_\phi(a\mid s,g)^{1-\alpha}
\to
\pi_\theta(a\mid s).
\]
After normalization,
\[
\lim_{\alpha\to 1}\pi_{\mathrm{ref}}(\cdot\mid s,g)
=
\pi_\theta(\cdot\mid s).
\]

As $\alpha\to 0$, support preservation still holds for every $\alpha>0$. Thus the limit is the prior restricted to actor support:
\[
\lim_{\alpha\to 0}\pi_{\mathrm{ref}}(a\mid s,g)
=
\frac{
\rho_\phi(a\mid s,g)\,\mathbf{1}\{\pi_\theta(a\mid s)>0\}
}{
\sum_{a'} \rho_\phi(a'\mid s,g)\,\mathbf{1}\{\pi_\theta(a'\mid s)>0\}
}.
\]
In the special case where
\[
\supp(\rho_\phi(\cdot\mid s,g))
\subseteq
\supp(\pi_\theta(\cdot\mid s)),
\]
this reduces to
\[
\lim_{\alpha\to 0}\pi_{\mathrm{ref}}(\cdot\mid s,g)
=
\rho_\phi(\cdot\mid s,g).
\]

For support preservation, if $\pi_\theta(a\mid s)=0$, then
\[
\pi_\theta(a\mid s)^\alpha = 0
\qquad
\text{for every }\alpha>0,
\]
so the numerator in Eq.~\eqref{eq:poe-policy} is zero and therefore
\[
\pi_{\mathrm{ref}}(a\mid s,g)=0.
\]
\hfill $\square$

\subsection{Auxiliary Lemmas for Policy Improvement}
\label{app:auxiliary-lemmas}

We first state two standard ingredients.

\begin{lemma}[Performance difference lemma]
\label{lem:pdl}
For any two stationary policies $\pi$ and $\pi'$ and deployment goal $g$,
\[
J_g(\pi') - J_g(\pi)
=
\frac{1}{1-\gamma}
\mathbb{E}_{s\sim d^{\pi'}}
\mathbb{E}_{a\sim\pi'(\cdot\mid s)}
\big[A_g^\pi(s,a)\big].
\]
\end{lemma}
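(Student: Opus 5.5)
We prove the performance difference lemma (Lemma~\ref{lem:pdl}) by the standard telescoping argument along trajectories of $\pi'$. Throughout we assume bounded rewards, as in Theorem~\ref{thm:cpi}, so all discounted sums converge absolutely and the interchanges of sums and expectations below are justified. We use the definitions $V_g^\pi(s)=\mathbb{E}_{a\sim\pi}[Q_g^\pi(s,a)]$ and $A_g^\pi=Q_g^\pi-V_g^\pi$, together with the Bellman identity $Q_g^\pi(s,a)=r_g(s,a)+\gamma\,\mathbb{E}_{s'\sim P(\cdot\mid s,a)}[V_g^\pi(s')]$. We also take $J_g(\pi)=\mathbb{E}_{s_0\sim\mu_0}[V_g^\pi(s_0)]$ for the common initial distribution $\mu_0$, which is implicit in the definition of $J_g$ and of $d^\pi$.

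\textbf{Step 1: rewrite the advantage along trajectories of $\pi'$.} Let $\tau=(s_0,a_0,s_1,\dots)$ be generated by $\pi'$. By the Bellman identity and the tower property over $s_{t+1}\sim P(\cdot\mid s_t,a_t)$,
\[
\mathbb{E}_{\tau\sim\pi'}\big[A_g^\pi(s_t,a_t)\big]
=\mathbb{E}_{\tau\sim\pi'}\big[r_g(s_t,a_t)+\gamma V_g^\pi(s_{t+1})-V_g^\pi(s_t)\big].
\]

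\textbf{Step 2: telescope.} Multiply by $\gamma^t$ and sum over $t\ge0$. The reward terms sum to $J_g(\pi')$. The value terms telescope, $\sum_t\gamma^t\big(\gamma V_g^\pi(s_{t+1})-V_g^\pi(s_t)\big)=-V_g^\pi(s_0)$, because the tail $\gamma^{T}V_g^\pi(s_T)\to0$ as $T\to\infty$ by boundedness. Taking expectations over $s_0\sim\mu_0$, the remaining term is $-J_g(\pi)$. This gives
\[
J_g(\pi')-J_g(\pi)=\mathbb{E}_{\tau\sim\pi'}\Big[\sum_{t\ge0}\gamma^tA_g^\pi(s_t,a_t)\Big].
\]

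\textbf{Step 3: convert to the occupancy measure.} We rewrite the right-hand side as
\[
\sum_t\gamma^t\sum_s\Pr(s_t=s\mid\pi')\,\mathbb{E}_{a\sim\pi'(\cdot\mid s)}\big[A_g^\pi(s,a)\big],
\]
using that $a_t\sim\pi'(\cdot\mid s_t)$ for a stationary policy. By the definition $d^{\pi'}(s)=(1-\gamma)\sum_{t\ge0}\gamma^t\Pr(s_t=s\mid\pi')$, this equals
\[
\frac{1}{1-\gamma}\,\mathbb{E}_{s\sim d^{\pi'}}\,\mathbb{E}_{a\sim\pi'(\cdot\mid s)}\big[A_g^\pi(s,a)\big],
\]
which is the claimed identity.

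The only delicate points are justifying the interchange of the infinite sum with the expectation and showing that the telescoping tail vanishes. Both follow from $|V_g^\pi|\le R_{\max}/(1-\gamma)$ together with dominated convergence. In continuous state spaces the sum over $s$ in Step 3 becomes an integral against the state marginals, and the argument is otherwise unchanged.
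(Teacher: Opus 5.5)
Your proposal is correct and follows the same route the paper indicates: telescope the Bellman equation for $V_g^\pi$ along trajectories generated by $\pi'$, then rewrite the discounted sum using the normalized occupancy $d^{\pi'}$. The paper's proof is a one-sentence citation of this standard argument, so your handling of the vanishing tail and of the sum--expectation interchange under bounded rewards fills in details the paper leaves implicit.
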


\begin{proof}
This is the standard performance difference lemma specialized to the goal-indexed reward $r_g$. It follows by telescoping the Bellman equations for $V_g^\pi$ along trajectories generated by $\pi'$.
\hfill $\square$
\end{proof}

\begin{lemma}[Occupancy difference bound]
\label{lem:occ}
Let
\[
\delta_\pi
=
\sup_{s}
\TV(\pi'(\cdot\mid s),\pi(\cdot\mid s)).
\]
Then
\[
\|d^{\pi'} - d^\pi\|_1
\le
\frac{2\gamma}{1-\gamma}\,\delta_\pi.
\]
\end{lemma}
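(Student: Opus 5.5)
We prove Lemma~\ref{lem:occ} with the standard occupancy-measure argument: express the discounted occupancy as a resolvent of the state-transition kernel, and compare the two resolvents through a one-step difference. Write $P_\pi(s'\mid s)=\sum_a \pi(a\mid s)P(s'\mid s,a)$ for the state-to-state kernel under $\pi$, and $\mu_0$ for the initial distribution. Viewing distributions as row vectors, the definition of $d^\pi$ gives
\[
d^\pi=(1-\gamma)\mu_0\sum_{t\ge0}\gamma^tP_\pi^t=(1-\gamma)\mu_0(I-\gamma P_\pi)^{-1},
\]
where the Neumann series converges because $P_\pi$ is stochastic and $\gamma<1$. The same holds for $\pi'$.

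\textbf{Step 1 (resolvent identity).} Since $(I-\gamma P_\pi)-(I-\gamma P_{\pi'})=\gamma(P_{\pi'}-P_\pi)$, we have $A^{-1}-B^{-1}=A^{-1}(B-A)B^{-1}$. This yields
\[
d^{\pi'}-d^\pi=\gamma\, d^{\pi'}(P_{\pi'}-P_\pi)(I-\gamma P_\pi)^{-1}.
\]
Equivalently, $d^{\pi'}$ satisfies the Bellman flow equation $d^{\pi'}=(1-\gamma)\mu_0+\gamma d^{\pi'}P_{\pi'}$. Subtracting the corresponding equation for $\pi$ and solving for the difference gives the same expression.

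\textbf{Step 2 (norm bounds).} Each factor is controlled in the $\ell_1$ (row-vector) norm.
\begin{itemize}
\item Because $\sum_{t}\gamma^t P_\pi^t$ is a nonnegative matrix with row sums $1/(1-\gamma)$, we get $\|x(I-\gamma P_\pi)^{-1}\|_1\le\|x\|_1/(1-\gamma)$.
\item For any probability row vector $d$,
\[
\|d(P_{\pi'}-P_\pi)\|_1\le\sum_s d(s)\sum_{s'}\Big|\sum_a(\pi'(a\mid s)-\pi(a\mid s))P(s'\mid s,a)\Big|\le\sum_s d(s)\,\|\pi'(\cdot\mid s)-\pi(\cdot\mid s)\|_1.
\]
Swapping the sums over $s'$ and $a$ here uses $\sum_{s'}P(s'\mid s,a)=1$. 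Since $\|\pi'(\cdot\mid s)-\pi(\cdot\mid s)\|_1=2\,\TV$, this is at most $2\delta_\pi$.
\end{itemize}
Combining the two bounds gives $\|d^{\pi'}-d^\pi\|_1\le \gamma\cdot 2\delta_\pi/(1-\gamma)$, which is the claim.

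\textbf{Main obstacle.} The delicate point is convention bookkeeping, not the algebra itself.
\begin{itemize}
\item TV must be normalized as $\frac12\|\cdot\|_1$. Otherwise the constant changes by a factor of $2$.
\item $d^\pi$ is normalized by $(1-\gamma)$, so it is a probability vector. This is what makes $\|d^{\pi'}\|_1=1$ in Step 2.
\item The resolvent bound must be applied on the correct side, to row vectors.
\end{itemize}
For continuous state spaces, the same argument goes through with kernels in place of matrices and total-variation norms of signed measures. The one thing to justify there is that $(I-\gamma P_\pi)^{-1}=\sum_t\gamma^tP_\pi^t$ is a bounded operator of norm $1/(1-\gamma)$ on finite signed measures, which follows from the Neumann series.
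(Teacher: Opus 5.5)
Your proof is correct and takes essentially the same route the paper indicates. The paper only sketches this lemma (fixed-point equations for the discounted visitation, contraction, and a kernel-perturbation bound). Its worked analogue, Lemma~\ref{lem:kernelshift}, subtracts the two flow equations and rearranges $\|d'-d\|_1\le\gamma\|d'-d\|_1+2\gamma\varepsilon$, which is the unsolved form of your resolvent identity combined with the $1/(1-\gamma)$ Neumann-series bound; your per-state estimate $\|d(P_{\pi'}-P_\pi)\|_1\le 2\delta_\pi$ supplies the policy-induced kernel perturbation that the paper leaves implicit, with the TV normalization handled correctly.
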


\begin{proof}
This is a standard discounted occupancy perturbation bound obtained from the fixed-point equations for discounted visitation together with contraction and kernel perturbation arguments.
\hfill $\square$
\end{proof}

\subsection{Proof of Theorem~\ref{thm:cpi}}
\label{app:proof-theorem-2}

Starting from Lemma~\ref{lem:pdl},
\[
J_g(\pi_{\mathrm{ref}}) - J_g(\pi_\theta)
=
\frac{1}{1-\gamma}
\mathbb{E}_{s\sim d^{\pi_{\mathrm{ref}}}}
\big[\bar A_g^{\pi_\theta}(s)\big].
\]
Add and subtract the same quantity under $d^{\pi_\theta}$:
\begin{align*}
J_g(\pi_{\mathrm{ref}}) - J_g(\pi_\theta)
&=
\frac{1}{1-\gamma}
\mathbb{E}_{s\sim d^{\pi_\theta}}
\big[\bar A_g^{\pi_\theta}(s)\big] \\
&\quad+
\frac{1}{1-\gamma}
\left(
\mathbb{E}_{s\sim d^{\pi_{\mathrm{ref}}}}
\big[\bar A_g^{\pi_\theta}(s)\big]
-
\mathbb{E}_{s\sim d^{\pi_\theta}}
\big[\bar A_g^{\pi_\theta}(s)\big]
\right).
\end{align*}
By definition of $\varepsilon_A$,
\[
\left|
\bar A_g^{\pi_\theta}(s)
\right|
\le
\varepsilon_A
\quad
\text{for all } s.
\]
Therefore,
\[
\left|
\mathbb{E}_{s\sim d^{\pi_{\mathrm{ref}}}}
\big[\bar A_g^{\pi_\theta}(s)\big]
-
\mathbb{E}_{s\sim d^{\pi_\theta}}
\big[\bar A_g^{\pi_\theta}(s)\big]
\right|
\le
\varepsilon_A \|d^{\pi_{\mathrm{ref}}} - d^{\pi_\theta}\|_1.
\]
Applying Lemma~\ref{lem:occ},
\[
\|d^{\pi_{\mathrm{ref}}} - d^{\pi_\theta}\|_1
\le
\frac{2\gamma}{1-\gamma}\,\delta_\pi.
\]
Combining the inequalities gives
\[
J_g(\pi_{\mathrm{ref}}) - J_g(\pi_\theta)
\ge
\frac{1}{1-\gamma}
\mathbb{E}_{s\sim d^{\pi_\theta}}
\big[\bar A_g^{\pi_\theta}(s)\big]
-
\frac{2\gamma}{(1-\gamma)^2}\,
\varepsilon_A\,\delta_\pi.
\]
\hfill $\square$

\begin{corollary}[Sufficient condition for improvement]
\label{cor:improve}
If the expected local advantage term in Theorem~\ref{thm:cpi} dominates the conservatism penalty, then PoE improves over the frozen actor.
\end{corollary}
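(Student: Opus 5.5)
The corollary follows directly from Theorem~\ref{thm:cpi}, so I will specialize that bound rather than argue afresh. First I will make the informal hypothesis precise. ``The expected local advantage term dominates the conservatism penalty'' will mean the strict inequality
\[
\frac{1}{1-\gamma}\,\mathbb{E}_{s\sim d^{\pi_\theta}}\!\left[\bar A_g^{\pi_\theta}(s)\right]
\;>\;
\frac{2\gamma}{(1-\gamma)^2}\,\varepsilon_A\,\delta_\pi .
\]
Multiplying through by $1-\gamma>0$, this is equivalent to $\mathbb{E}_{s\sim d^{\pi_\theta}}[\bar A_g^{\pi_\theta}(s)] > \frac{2\gamma}{1-\gamma}\varepsilon_A\delta_\pi$. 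I will also note the weak version, with $\ge$ in place of $>$, which yields non-degradation $J_g(\pi_{\mathrm{ref}})\ge J_g(\pi_\theta)$.

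Next I will check that the standing assumptions of Theorem~\ref{thm:cpi} hold: bounded rewards and bounded prior log-density. Bounded rewards make $A_g^{\pi_\theta}$ bounded, so $\varepsilon_A<\infty$. Since $\delta_\pi\le 1$, both sides of the inequality are then finite real numbers, and the comparison is meaningful.

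The conclusion is then immediate. Eq.~\eqref{eq:cpi-bound} gives $J_g(\pi_{\mathrm{ref}})-J_g(\pi_\theta)\ge \text{(gain)}-\text{(penalty)}$, and the hypothesis gives $\text{(gain)}-\text{(penalty)}>0$. Hence $J_g(\pi_{\mathrm{ref}})>J_g(\pi_\theta)$, which says that PoE strictly improves over the frozen actor under goal $g$.

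The only real obstacle is interpretive: the corollary is informally worded. I will state the formalized hypothesis explicitly and remark that it is a sufficient condition, not a necessary one. I will also remark that, because the penalty coefficient is large at $\gamma=0.99$, the condition is restrictive in practice. This is consistent with the paper's framing of Theorem~\ref{thm:cpi} as structural only.
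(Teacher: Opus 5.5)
Your proposal is correct and matches the paper's proof: you formalize the hypothesis as $\mathbb{E}_{s\sim d^{\pi_\theta}}[\bar A_g^{\pi_\theta}(s)] > \frac{2\gamma}{1-\gamma}\varepsilon_A\delta_\pi$, which is exactly the paper's condition, and then read off from Theorem~\ref{thm:cpi} that the lower bound on $J_g(\pi_{\mathrm{ref}})-J_g(\pi_\theta)$ is strictly positive. Your remarks on the finiteness of $\varepsilon_A$ and on the weak-inequality (non-degradation) variant are harmless additions.
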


\subsection{Proof of Corollary~\ref{cor:improve}}
\label{app:proof-corollary-1}

The corollary follows immediately from Theorem~\ref{thm:cpi}. If
\[
\mathbb{E}_{s\sim d^{\pi_\theta}}
\big[\bar A_g^{\pi_\theta}(s)\big]
>
\frac{2\gamma}{1-\gamma}\,\varepsilon_A\,\delta_\pi,
\]
then the lower bound on
\[
J_g(\pi_{\mathrm{ref}})-J_g(\pi_\theta)
\]
is strictly positive.
\hfill $\square$

\subsection{Auxiliary Lemma for Transition Shift}
\label{app:auxiliary-transition-shift}

\begin{lemma}[State occupancy shift under kernel perturbation]
\label{lem:kernelshift}
Assume
\[
\sup_{s,a}
\TV(P_{\mathrm{train}}(\cdot\mid s,a),P_{\mathrm{deploy}}(\cdot\mid s,a))
\le
\varepsilon_P.
\]
Then for any stationary policy $\pi$,
\[
\|d_{\mathrm{deploy}}^\pi - d_{\mathrm{train}}^\pi\|_1
\le
\frac{2\gamma}{1-\gamma}\,\varepsilon_P.
\]
\end{lemma}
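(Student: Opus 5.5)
The plan is a direct perturbation argument on the linear fixed-point equation satisfied by the discounted state occupancy. I assume both MDPs share the initial state distribution $\mu_0$, and I use the convention $\TV(p,q)=\tfrac12\|p-q\|_1$, so the hypothesis gives $\|P_{\mathrm{deploy}}(\cdot\mid s,a)-P_{\mathrm{train}}(\cdot\mid s,a)\|_1\le 2\varepsilon_P$. I write everything for a finite (or countable) state space, where vectors and $\ell_1$ operator norms are transparent. For general spaces the same steps go through with signed measures and the total-variation norm.

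\textbf{Step 1 (state kernels).} For the fixed policy $\pi$, I form the induced state-to-state kernels
\[
P^\pi_{\mathrm{train}}(s'\mid s)=\sum_a \pi(a\mid s)P_{\mathrm{train}}(s'\mid s,a),
\]
and define $P^\pi_{\mathrm{deploy}}$ analogously. By convexity of the $\ell_1$ norm, averaging over $a\sim\pi(\cdot\mid s)$ gives, uniformly in $s$,
\[
\|P^\pi_{\mathrm{deploy}}(\cdot\mid s)-P^\pi_{\mathrm{train}}(\cdot\mid s)\|_1\le 2\varepsilon_P.
\]

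\textbf{Step 2 (fixed-point equations).} From $d^\pi(s)=(1-\gamma)\sum_t\gamma^t\Pr(s_t=s)$, each occupancy satisfies the flow equation
\[
d=(1-\gamma)\mu_0+\gamma (P^\pi)^\top d.
\]
Subtracting the train equation from the deploy equation and adding and subtracting $\gamma (P^\pi_{\mathrm{deploy}})^\top d_{\mathrm{train}}$ gives
\[
\bigl(I-\gamma (P^\pi_{\mathrm{deploy}})^\top\bigr)(d_{\mathrm{deploy}}-d_{\mathrm{train}})
=\gamma\,(P^\pi_{\mathrm{deploy}}-P^\pi_{\mathrm{train}})^\top d_{\mathrm{train}}.
\]

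\textbf{Step 3 (norm bounds).} The operator $(P^\pi_{\mathrm{deploy}})^\top$ maps signed vectors to signed vectors with $\ell_1$ operator norm at most $1$, because each row of $P^\pi_{\mathrm{deploy}}$ is a probability vector. Hence $I-\gamma(P^\pi_{\mathrm{deploy}})^\top$ is invertible via a Neumann series, and its inverse has $\ell_1$ norm at most $1/(1-\gamma)$. For the right-hand side, the triangle inequality together with Step 1 gives
\[
\bigl\|(P^\pi_{\mathrm{deploy}}-P^\pi_{\mathrm{train}})^\top d_{\mathrm{train}}\bigr\|_1
\le\sum_s d_{\mathrm{train}}(s)\,\|P^\pi_{\mathrm{deploy}}(\cdot\mid s)-P^\pi_{\mathrm{train}}(\cdot\mid s)\|_1
\le 2\varepsilon_P,
\]
where the last inequality uses that $d_{\mathrm{train}}$ is a probability distribution. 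Combining the two bounds yields $\|d^\pi_{\mathrm{deploy}}-d^\pi_{\mathrm{train}}\|_1\le \frac{\gamma}{1-\gamma}\cdot 2\varepsilon_P$, which is the claim.

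\textbf{Main obstacle.} The main delicate point is not the algebra but the setup. First, the factor $2$ depends on the TV convention: under the convention $\TV=\tfrac12\ell_1$ the bound is exactly as stated, while if $\TV$ meant the full $\ell_1$ distance the constant would improve to $\gamma/(1-\gamma)$. Second, the argument needs the shared initial distribution $\mu_0$; without it, an extra $(1-\gamma)\|\mu_0-\mu_0'\|_1$ term would enter the right-hand side. Third, in uncountable state spaces one must justify the Neumann-series inversion on the Banach space of finite signed measures. I would handle this by instead unrolling $d=(1-\gamma)\sum_t\gamma^t\mu_0 (P^\pi)^t$ and telescoping $\mu_0 (P')^t-\mu_0 P^t$, which gives the same constant without invoking operator inverses.
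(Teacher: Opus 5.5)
Your proposal is correct and follows essentially the paper's argument: the same policy-induced kernels bounded via convexity, the same subtraction of the occupancy fixed-point equations, and the same $\ell_1$ nonexpansiveness and $2\varepsilon_P$ bounds. The only difference is cosmetic. You invert $I-\gamma(P^\pi_{\mathrm{deploy}})^\top$ by a Neumann series, whereas the paper rearranges $\|\Delta\|_1\le\gamma\|\Delta\|_1+2\gamma\varepsilon_P$ for $\Delta=d^\pi_{\mathrm{deploy}}-d^\pi_{\mathrm{train}}$, which tacitly uses $\|\Delta\|_1\le 2<\infty$. Your remarks on the TV convention and the shared initial distribution make explicit assumptions that the paper leaves implicit.
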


\begin{proof}
Fix a policy $\pi$ and define the policy-induced state kernels
\[
P_{\mathrm{train}}^\pi(s'\mid s)
=
\sum_a \pi(a\mid s)P_{\mathrm{train}}(s'\mid s,a),
\quad
P_{\mathrm{deploy}}^\pi(s'\mid s)
=
\sum_a \pi(a\mid s)P_{\mathrm{deploy}}(s'\mid s,a).
\]
By convexity of total variation,
\[
\sup_s \TV(P_{\mathrm{train}}^\pi(\cdot\mid s),P_{\mathrm{deploy}}^\pi(\cdot\mid s))
\le
\varepsilon_P.
\]
Now write the discounted occupancy equations:
\[
d_{\mathrm{train}}^\pi
=
(1-\gamma)\mu + \gamma (P_{\mathrm{train}}^\pi)^\top d_{\mathrm{train}}^\pi,
\qquad
d_{\mathrm{deploy}}^\pi
=
(1-\gamma)\mu + \gamma (P_{\mathrm{deploy}}^\pi)^\top d_{\mathrm{deploy}}^\pi,
\]
where $\mu$ is the initial state distribution. Subtracting,
\[
d_{\mathrm{deploy}}^\pi - d_{\mathrm{train}}^\pi
=
\gamma (P_{\mathrm{deploy}}^\pi)^\top (d_{\mathrm{deploy}}^\pi - d_{\mathrm{train}}^\pi)
+
\gamma\big((P_{\mathrm{deploy}}^\pi)^\top-(P_{\mathrm{train}}^\pi)^\top\big)d_{\mathrm{train}}^\pi.
\]
Taking $\ell_1$ norms and using that stochastic kernels are nonexpansive in $\ell_1$,
\[
\|d_{\mathrm{deploy}}^\pi - d_{\mathrm{train}}^\pi\|_1
\le
\gamma \|d_{\mathrm{deploy}}^\pi - d_{\mathrm{train}}^\pi\|_1
+
\gamma \big\|(P_{\mathrm{deploy}}^\pi)^\top-(P_{\mathrm{train}}^\pi)^\top\big\|_{1\to 1}\,
\|d_{\mathrm{train}}^\pi\|_1.
\]
Because $\|d_{\mathrm{train}}^\pi\|_1=1$ and the operator norm difference is bounded by $2\varepsilon_P$, we obtain
\[
\|d_{\mathrm{deploy}}^\pi - d_{\mathrm{train}}^\pi\|_1
\le
\gamma \|d_{\mathrm{deploy}}^\pi - d_{\mathrm{train}}^\pi\|_1 + 2\gamma \varepsilon_P.
\]
Rearranging yields
\[
\|d_{\mathrm{deploy}}^\pi - d_{\mathrm{train}}^\pi\|_1
\le
\frac{2\gamma}{1-\gamma}\,\varepsilon_P.
\]
\hfill $\square$
\end{proof}

\begin{theorem}[Robustness to deployment transition shift]
\label{thm:robustness}
Under bounded train-to-deploy transition shift, the return of any policy differs between the train and deployment domains by at most $\frac{2\gamma R_{\max}}{(1-\gamma)^2}\varepsilon_P$.
\end{theorem}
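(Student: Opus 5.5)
The plan is to reduce the return gap to a difference of expected rewards under two discounted state occupancies, and then apply Lemma~\ref{lem:kernelshift}. Fix a stationary policy $\pi$ and goal $g$. Assume the two domains share the initial distribution $\mu$ and the reward $r_g$, with $|r_g(s,a)|\le R_{\max}$. Define the policy-averaged reward $r_g^\pi(s)=\sum_a \pi(a\mid s) r_g(s,a)$. By the definition of $d^\pi$, in each domain
\[
J_g(\pi)=\frac{1}{1-\gamma}\,\mathbb{E}_{s\sim d^\pi}\big[r_g^\pi(s)\big],
\]
where $d^\pi$ is the occupancy induced by that domain's kernel. This follows by exchanging the sum over $t$ with the expectation, using $\sum_t\gamma^t\Pr(s_t=s)=d^\pi(s)/(1-\gamma)$.

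Subtracting the two expressions gives
\[
J_g^{\mathrm{deploy}}(\pi)-J_g^{\mathrm{train}}(\pi)=\frac{1}{1-\gamma}\sum_s \big(d^\pi_{\mathrm{deploy}}(s)-d^\pi_{\mathrm{train}}(s)\big)\,r_g^\pi(s).
\]
Hölder's inequality with $\|r_g^\pi\|_\infty\le R_{\max}$ bounds the absolute value by $\frac{R_{\max}}{1-\gamma}\|d^\pi_{\mathrm{deploy}}-d^\pi_{\mathrm{train}}\|_1$. Lemma~\ref{lem:kernelshift} then supplies the factor $\frac{2\gamma}{1-\gamma}\varepsilon_P$, so the product is exactly $\frac{2\gamma R_{\max}}{(1-\gamma)^2}\varepsilon_P$.

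The main obstacle is making the statement's hypotheses precise, since the computation itself is routine. We need:
\begin{itemize}[leftmargin=1.2em]
\item the shift to be measured as $\sup_{s,a}\TV\le\varepsilon_P$, as in Lemma~\ref{lem:kernelshift};
\item a common initial distribution $\mu$ and an unchanged reward in the two domains;
\item the bound $\|r_g\|_\infty\le R_{\max}$.
\end{itemize}
We would state these explicitly at the start of the proof.

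Two further checks are needed.
\begin{itemize}[leftmargin=1.2em]
\item In the occupancy perturbation of Lemma~\ref{lem:kernelshift}, $\|(P_{\mathrm{deploy}}^\pi-P_{\mathrm{train}}^\pi)^\top\|_{1\to1}$ should be bounded by $2\sup_s\TV$. This holds because the operator $\ell_1$ norm of a transposed kernel difference equals the maximum row $\ell_1$ distance.
\item For general state spaces the sums become integrals. Hölder's inequality is replaced by the bound $|\int f\,d(\nu_1-\nu_2)|\le\|f\|_\infty\|\nu_1-\nu_2\|_{\mathrm{TV},1}$, and the argument is otherwise unchanged.
\end{itemize}
A sharper constant, replacing $2R_{\max}$ by the range of $r_g$, is possible but not needed for the stated bound.
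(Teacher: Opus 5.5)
Your proposal is correct and follows the paper's proof essentially step for step. You write each domain's return as $\frac{1}{1-\gamma}$ times the expected reward under that domain's discounted occupancy, bound the difference by $\frac{R_{\max}}{1-\gamma}\|d^\pi_{\mathrm{deploy}}-d^\pi_{\mathrm{train}}\|_1$, and invoke Lemma~\ref{lem:kernelshift}, exactly as the paper does. Your explicit hypotheses, which the paper leaves implicit (a $\sup_{s,a}$ total-variation shift bound, a shared initial distribution and reward, and $\|r_g\|_\infty\le R_{\max}$), and your check of the $\|\cdot\|_{1\to 1}\le 2\varepsilon_P$ step in the lemma are sound additions rather than departures.
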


\subsection{Proof of Theorem~\ref{thm:robustness}}
\label{app:proof-theorem-3}

For any policy $\pi$,
\[
J_g^{\mathrm{deploy}}(\pi)
-
J_g^{\mathrm{train}}(\pi)
=
\frac{1}{1-\gamma}
\left(
\mathbb{E}_{s\sim d_{\mathrm{deploy}}^\pi,\,a\sim\pi}[r_g(s,a)]
-
\mathbb{E}_{s\sim d_{\mathrm{train}}^\pi,\,a\sim\pi}[r_g(s,a)]
\right).
\]
By bounded rewards,
\[
\left|
\mathbb{E}_{s\sim d_{\mathrm{deploy}}^\pi,\,a\sim\pi}[r_g(s,a)]
-
\mathbb{E}_{s\sim d_{\mathrm{train}}^\pi,\,a\sim\pi}[r_g(s,a)]
\right|
\le
R_{\max}\,
\|d_{\mathrm{deploy}}^\pi-d_{\mathrm{train}}^\pi\|_1.
\]
Applying Lemma~\ref{lem:kernelshift},
\[
\left|
J_g^{\mathrm{deploy}}(\pi)
-
J_g^{\mathrm{train}}(\pi)
\right|
\le
\frac{R_{\max}}{1-\gamma}
\cdot
\frac{2\gamma}{1-\gamma}\,\varepsilon_P
=
\frac{2\gamma R_{\max}}{(1-\gamma)^2}\,\varepsilon_P.
\]
\hfill $\square$

\begin{corollary}[Deployment-side improvement bound]
\label{cor:deploy}
Combining Theorem~\ref{thm:robustness} with Theorem~\ref{thm:cpi} yields a deployment-side lower bound on the refined policy’s improvement over the frozen actor.
\end{corollary}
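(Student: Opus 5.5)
The plan is to apply Theorem~\ref{thm:robustness} twice and then feed the result into Theorem~\ref{thm:cpi}. Write $J_g^{\mathrm{train}}$ and $J_g^{\mathrm{deploy}}$ for the returns under $P_{\mathrm{train}}$ and $P_{\mathrm{deploy}}$. Theorem~\ref{thm:cpi} is stated for a single MDP, so I would read it in the training domain, with $d^{\pi_\theta}$, $A_g^{\pi_\theta}$ and $\bar A_g^{\pi_\theta}$ all computed under $P_{\mathrm{train}}$. The quantity of interest is the deployment gap, which I split as
\[
J_g^{\mathrm{deploy}}(\pi_{\mathrm{ref}})-J_g^{\mathrm{deploy}}(\pi_\theta)
=\bigl[J_g^{\mathrm{deploy}}(\pi_{\mathrm{ref}})-J_g^{\mathrm{train}}(\pi_{\mathrm{ref}})\bigr]
+\bigl[J_g^{\mathrm{train}}(\pi_{\mathrm{ref}})-J_g^{\mathrm{train}}(\pi_\theta)\bigr]
+\bigl[J_g^{\mathrm{train}}(\pi_\theta)-J_g^{\mathrm{deploy}}(\pi_\theta)\bigr].
\]

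The steps are as follows. First, Theorem~\ref{thm:robustness} holds for any stationary policy. Applying it to $\pi_{\mathrm{ref}}$ and to $\pi_\theta$ bounds each of the outer brackets below by $-\frac{2\gamma R_{\max}}{(1-\gamma)^2}\varepsilon_P$. Here $\pi_{\mathrm{ref}}(\cdot\mid s,g)$ counts as a stationary policy because $g$ is fixed. Second, the middle bracket is bounded below by Eq.~\eqref{eq:cpi-bound}. Summing the three bounds gives
\[
J_g^{\mathrm{deploy}}(\pi_{\mathrm{ref}})-J_g^{\mathrm{deploy}}(\pi_\theta)
\ge
\frac{1}{1-\gamma}\mathbb{E}_{s\sim d^{\pi_\theta}_{\mathrm{train}}}\bigl[\bar A_g^{\pi_\theta}(s)\bigr]
-\frac{2\gamma}{(1-\gamma)^2}\,\varepsilon_A\,\delta_\pi
-\frac{4\gamma R_{\max}}{(1-\gamma)^2}\,\varepsilon_P .
\]
This is the deployment-side lower bound the corollary claims. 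By Corollary~\ref{cor:improve}, the bound is strictly positive whenever the training-domain advantage term exceeds both penalties combined.

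I do not expect a hard step, since this is bookkeeping. The only care point is consistency of the hypotheses. Theorem~\ref{thm:robustness} needs $|r_g|\le R_{\max}$; this is the ``bounded rewards'' assumption already made in Theorem~\ref{thm:cpi}, with $R_{\max}$ taken for the goal-weighted reward. Both applications of Theorem~\ref{thm:robustness} must also use the same $\varepsilon_P$, which holds because the kernel-shift assumption of Lemma~\ref{lem:kernelshift} is uniform over $(s,a)$ and therefore policy-independent.

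An alternative is to state the advantage term under deployment occupancies instead. That would mean applying Theorem~\ref{thm:cpi} directly in the deployment MDP, which only requires $A_g^{\pi_\theta}$ and $d^{\pi_\theta}$ to be computed there. The reason to prefer the split above is that the expected advantage is then measured in the training domain, where offline estimates are available. That is the point of a deployment-side corollary, so I would present the split as the main statement.
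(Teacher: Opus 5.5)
Your proposal is correct and follows the paper's proof. The paper applies Theorem~\ref{thm:robustness} as a lower bound for $\pi_{\mathrm{ref}}$ and an upper bound for $\pi_\theta$, subtracts to get the train-side gap minus $\frac{4\gamma R_{\max}}{(1-\gamma)^2}\varepsilon_P$, and then invokes Theorem~\ref{thm:cpi} on the training-domain difference, which is exactly your three-bracket split; your explicit final inequality and your remarks on reading $d^{\pi_\theta}$ and $\bar A_g^{\pi_\theta}$ in the training MDP only make explicit what the paper leaves implicit.
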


\subsection{Proof of Corollary~\ref{cor:deploy}}
\label{app:proof-corollary-2}

By Theorem~\ref{thm:robustness},
\[
J_g^{\mathrm{deploy}}(\pi_{\mathrm{ref}})
\ge
J_g^{\mathrm{train}}(\pi_{\mathrm{ref}})
-
\frac{2\gamma R_{\max}}{(1-\gamma)^2}\,\varepsilon_P
\]
and
\[
J_g^{\mathrm{deploy}}(\pi_\theta)
\le
J_g^{\mathrm{train}}(\pi_\theta)
+
\frac{2\gamma R_{\max}}{(1-\gamma)^2}\,\varepsilon_P.
\]
Subtracting,
\[
J_g^{\mathrm{deploy}}(\pi_{\mathrm{ref}})
-
J_g^{\mathrm{deploy}}(\pi_\theta)
\ge
J_g^{\mathrm{train}}(\pi_{\mathrm{ref}})
-
J_g^{\mathrm{train}}(\pi_\theta)
-
\frac{4\gamma R_{\max}}{(1-\gamma)^2}\,\varepsilon_P.
\]
Now apply Theorem~\ref{thm:cpi} to the train-side difference to obtain the stated result.
\hfill $\square$

\begin{corollary}[Plug-in prior stability]
\label{cor:plugin}
If the estimated deployment prior is uniformly close to the true prior, then the plug-in refined policy remains close to the exact PoE optimum up to a deviation proportional to the prior estimation error.
\end{corollary}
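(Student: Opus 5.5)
The plan is to make the statement quantitative in the finite-action setting of Theorem~\ref{thm:variational}. Let $\hat\rho$ be the plug-in prior and $\rho$ the true prior, both positive on a common effective support $\Gamma_{s,g}$. Assume a uniform log-density error
\[
\varepsilon_\rho := \sup_{s,g}\max_{a\in\Gamma_{s,g}}\bigl|\log\hat\rho(a\mid s,g)-\log\rho(a\mid s,g)\bigr|.
\]
If closeness is stated additively instead, the bounded-log-density assumption of Theorem~\ref{thm:cpi} lets us pass to log-ratios with a constant depending on the density lower bound. The target is
\[
\sup_s \TV\bigl(\hat\pi_{\mathrm{ref}},\pi_{\mathrm{ref}}\bigr)\le \tfrac12\bigl(e^{2(1-\alpha)\varepsilon_\rho}-1\bigr)\le C(1-\alpha)\varepsilon_\rho
\]
for small $\varepsilon_\rho$. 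Together with the companion estimate $\KL(\hat\pi_{\mathrm{ref}}\|\pi_{\mathrm{ref}})\le 2(1-\alpha)\varepsilon_\rho$, this is the claimed proportionality. The factor $(1-\alpha)$ reflects that the actor term is untouched.

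\textbf{Step 1.} Write both refined policies as Gibbs distributions, using the closed form from Theorem~\ref{thm:variational}: $\pi_{\mathrm{ref}}\propto\exp(E_a)$ and $\hat\pi_{\mathrm{ref}}\propto\exp(\hat E_a)$, with $E_a=\alpha\log u_a+(1-\alpha)\log v_a$. The energies differ only through the prior, so $\|\hat E-E\|_\infty\le(1-\alpha)\varepsilon_\rho=:\eta$.

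\textbf{Step 2.} Bound the normalizers. Since $e^{-\eta}\le e^{\hat E_a-E_a}\le e^{\eta}$ for every $a$, the same bounds hold for $\hat Z/Z$. Hence every probability ratio $\hat\pi_{\mathrm{ref}}(a)/\pi_{\mathrm{ref}}(a)$ lies in $[e^{-2\eta},e^{2\eta}]$.

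\textbf{Step 3.} Convert the ratio bound into the divergence bounds.
\begin{itemize}
\item For total variation, $\TV=\tfrac12\sum_a\pi_{\mathrm{ref}}(a)\,\bigl|\hat\pi_{\mathrm{ref}}(a)/\pi_{\mathrm{ref}}(a)-1\bigr|\le\tfrac12(e^{2\eta}-1)$.
\item For KL, the log-ratio is bounded by $2\eta$ pointwise, so $\KL(\hat\pi_{\mathrm{ref}}\|\pi_{\mathrm{ref}})\le 2\eta$.
\end{itemize}
Taking the supremum over $s$ gives a uniform-in-state statement.

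\textbf{Step 4 (optional).} Feed the TV bound into Lemma~\ref{lem:occ} and Lemma~\ref{lem:pdl}. This yields a return deviation $|J_g(\hat\pi_{\mathrm{ref}})-J_g(\pi_{\mathrm{ref}})|=O\bigl(R_{\max}\gamma\eta/(1-\gamma)^2\bigr)$, so "close to the exact PoE optimum" also holds in value.

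\textbf{Main obstacle.} The hard part is choosing a notion of "uniformly close" that makes the result both meaningful and correct. Additive closeness in density does not control log-ratios near zero mass. Differing supports also break Step 2, because an action in $\hat\Gamma\setminus\Gamma$ contributes a TV gap that cannot be made proportional to the error. I would therefore first impose a common support plus a density floor $\rho\ge c>0$ on $\Gamma_{s,g}$, and derive $\varepsilon_\rho\le\|\hat\rho-\rho\|_\infty/(c-\|\hat\rho-\rho\|_\infty)$. If the continuous Gaussian case is intended, I would instead compare the precision-weighted means directly: the map from $(\mu_\phi,\Sigma_\phi^{-1})$ to $\mu_{\mathrm{ref}}$ is Lipschitz with constant depending on $\lambda_{\min}(\alpha\Sigma_\theta^{-1})$, which plays the role of the floor.
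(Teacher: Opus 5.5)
Your proof is correct, but it takes a different route from the paper. The paper never uses the closed form. It shows $|\widehat{\mathcal F}_{s,g}(\pi)-\mathcal F_{s,g}(\pi)|\le(1-\alpha)\varepsilon_n(\delta)$ for every $\pi$. It then runs the standard three-term sandwich, where the middle term $\widehat{\mathcal F}_{s,g}(\pi_{\mathrm{ref}})-\widehat{\mathcal F}_{s,g}(\widehat\pi_{\mathrm{ref}})\le 0$ by optimality of $\widehat\pi_{\mathrm{ref}}$. This yields $\mathcal F_{s,g}(\pi_{\mathrm{ref}})-\mathcal F_{s,g}(\widehat\pi_{\mathrm{ref}})\le 2(1-\alpha)\varepsilon_n(\delta)$, so ``close'' is measured as suboptimality in the variational objective. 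Because $\mathcal F_{s,g}(\pi)=\log Z_{s,g}-\KL(\pi\|\pi_{\mathrm{ref}})$, with $Z_{s,g}$ the PoE normalizer, that conclusion is exactly your companion estimate $\KL(\widehat\pi_{\mathrm{ref}}\|\pi_{\mathrm{ref}})\le 2(1-\alpha)\varepsilon_\rho$, with the same constant.

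The paper's argument buys generality: it applies to any maximizer of a uniformly perturbed objective, e.g.\ over a restricted policy class, with no normalizer bookkeeping. Yours buys a stronger policy-space statement. The pointwise ratio sandwich $\widehat\pi_{\mathrm{ref}}(a)/\pi_{\mathrm{ref}}(a)\in[e^{-2\eta},e^{2\eta}]$ gives a total-variation bound linear in $(1-\alpha)\varepsilon_\rho$. Passing from the paper's objective gap to $\TV$ through Pinsker would only give a square-root rate. You also handle the translation from additive to log-density closeness under a density floor, which the paper simply takes as its hypothesis.

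There is one small slip in your optional Step 4. Let $D=\sup_s\TV(\widehat\pi_{\mathrm{ref}},\pi_{\mathrm{ref}})$. The value deviation is at most $2R_{\max}D/(1-\gamma)^2$, with no leading factor $\gamma$; the immediate action-mismatch term survives even at $\gamma=0$. This follows from Lemma~\ref{lem:pdl} alone, by bounding $|\mathbb{E}_{a\sim\widehat\pi_{\mathrm{ref}}}[A_g^{\pi_{\mathrm{ref}}}(s,a)]|\le 2D\,R_{\max}/(1-\gamma)$; Lemma~\ref{lem:occ} is not needed.
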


\subsection{Proof of Corollary~\ref{cor:plugin}}
\label{app:proof-corollary-3}

Fix $(s,g)$ and define
\[
\widehat{\mathcal{F}}_{s,g}(\pi)
=
\mathbb{E}_{a\sim\pi}
\!\left[
\alpha \log \pi_\theta(a\mid s)
+
(1-\alpha)\log \widehat{\rho}(a\mid s,g)
\right]
+
\mathcal{H}(\pi).
\]
By assumption,
\[
\sup_{s,a,g}
\left|
\log \widehat{\rho}(a\mid s,g)
-
\log \rho_\phi(a\mid s,g)
\right|
\le
\varepsilon_n(\delta).
\]
Hence for any policy $\pi$,
\[
\left|
\widehat{\mathcal{F}}_{s,g}(\pi)
-
\mathcal{F}_{s,g}(\pi)
\right|
\le
(1-\alpha)\,\varepsilon_n(\delta).
\]
Since $\pi_{\mathrm{ref}}$ maximizes $\mathcal{F}_{s,g}$ and
$\widehat{\pi}_{\mathrm{ref}}$ maximizes $\widehat{\mathcal{F}}_{s,g}$,
\begin{align*}
\mathcal{F}_{s,g}(\pi_{\mathrm{ref}})
-
\mathcal{F}_{s,g}(\widehat{\pi}_{\mathrm{ref}})
&=
\bigl(\mathcal{F}_{s,g}(\pi_{\mathrm{ref}})
-\widehat{\mathcal{F}}_{s,g}(\pi_{\mathrm{ref}})\bigr) \\
&\quad+
\bigl(\widehat{\mathcal{F}}_{s,g}(\pi_{\mathrm{ref}})
-\widehat{\mathcal{F}}_{s,g}(\widehat{\pi}_{\mathrm{ref}})\bigr) \\
&\quad+
\bigl(\widehat{\mathcal{F}}_{s,g}(\widehat{\pi}_{\mathrm{ref}})
-\mathcal{F}_{s,g}(\widehat{\pi}_{\mathrm{ref}})\bigr).
\end{align*}
The middle term is nonpositive because $\widehat{\pi}_{\mathrm{ref}}$ maximizes $\widehat{\mathcal{F}}_{s,g}$. Therefore,
\[
\mathcal{F}_{s,g}(\pi_{\mathrm{ref}})
-
\mathcal{F}_{s,g}(\widehat{\pi}_{\mathrm{ref}})
\le
2(1-\alpha)\varepsilon_n(\delta).
\]
Taking the supremum over $(s,g)$ gives the result.
\hfill $\square$

\begin{proposition}[Gaussian PoE gain-deviation bound]
\label{prop:gaussian-poe-bound}
For diagonal-Gaussian actor and prior policies, PoE remains Gaussian in closed form and its return improvement admits a gain-minus-deviation bound controlled by Wasserstein distance from the frozen actor.
\end{proposition}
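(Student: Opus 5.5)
The proof has two parts: the closed form, then a continuous-action analogue of Theorem~\ref{thm:cpi} in which total variation is replaced by a Wasserstein distance.

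\emph{Closed form.} Write each log-density as a quadratic form, $\log\pi_\theta(a\mid s) = -\tfrac12(a-\mu_\theta)^\top\Sigma_\theta^{-1}(a-\mu_\theta)+c_\theta$, and similarly for $\rho_\phi$. Then $\alpha\log\pi_\theta+(1-\alpha)\log\rho_\phi$ is again quadratic in $a$. Completing the square gives the precision $\Sigma_{\mathrm{ref}}^{-1}=\alpha\Sigma_\theta^{-1}+(1-\alpha)\Sigma_\phi^{-1}$ and the mean $\mu_{\mathrm{ref}}$ of Sec.~\ref{sec:continuous-actions}. Diagonality is preserved coordinatewise. Theorem~\ref{thm:variational} has a continuous analogue, with differential entropy in place of entropy, so the Gibbs variational argument shows this Gaussian is the unique maximizer of $\mathcal{F}_{s,g}$. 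This step is routine.

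\emph{Wasserstein distance.} For diagonal Gaussians, $W_2$ has the explicit form
\[
W_2^2\bigl(\pi_{\mathrm{ref}},\pi_\theta\bigr)=\|\mu_{\mathrm{ref}}-\mu_\theta\|^2+\|\Sigma_{\mathrm{ref}}^{1/2}-\Sigma_\theta^{1/2}\|_F^2 .
\]
The mean shift can be bounded by $\|\mu_{\mathrm{ref}}-\mu_\theta\| \le (1-\alpha)\,\|\Sigma_{\mathrm{ref}}\Sigma_\phi^{-1}\|\,\|\mu_\phi-\mu_\theta\|$, using $\mu_{\mathrm{ref}}-\mu_\theta=(1-\alpha)\Sigma_{\mathrm{ref}}\Sigma_\phi^{-1}(\mu_\phi-\mu_\theta)$. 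This makes explicit that the deviation vanishes as $\alpha\to1$, consistent with Proposition~\ref{prop:interp}.

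\emph{Return bound.} I would start from Lemma~\ref{lem:pdl} with $\pi'=\pi_{\mathrm{ref}}$ and $\pi=\pi_\theta$, and split into the on-policy term $\frac{1}{1-\gamma}\mathbb{E}_{d^{\pi_\theta}}[\bar A_g^{\pi_\theta}]$ plus an occupancy-mismatch term, exactly as in the proof of Theorem~\ref{thm:cpi}. Because Gaussians with different means have TV bounded away from zero only through the means, Lemma~\ref{lem:occ} is replaced by a Lipschitz version. I would assume the transition kernel $P(\cdot\mid s,a)$ is $L_P$-Lipschitz in $a$ in the sense $\sup_s\mathrm{TV}(P(\cdot\mid s,a),P(\cdot\mid s,a'))\le L_P\|a-a'\|$. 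Coupling the two action distributions optimally then gives
\[
\sup_s \mathrm{TV}\bigl(P^{\pi_{\mathrm{ref}}}(\cdot\mid s),P^{\pi_\theta}(\cdot\mid s)\bigr)\le L_P\,W_1 \le L_P\,W_2 .
\]
Rerunning the fixed-point argument of Lemma~\ref{lem:kernelshift} with this kernel perturbation yields
\[
\|d^{\pi_{\mathrm{ref}}}-d^{\pi_\theta}\|_1\le \frac{2\gamma L_P}{1-\gamma}\,\sup_s W_2\bigl(\pi_{\mathrm{ref}}(\cdot\mid s,g),\pi_\theta(\cdot\mid s)\bigr).
\]
Combining this with $|\bar A_g^{\pi_\theta}|\le\varepsilon_A$ gives the gain-minus-deviation bound:
\[
J_g(\pi_{\mathrm{ref}})-J_g(\pi_\theta)\ge \frac{1}{1-\gamma}\mathbb{E}_{d^{\pi_\theta}}[\bar A_g^{\pi_\theta}]-\frac{2\gamma L_P}{(1-\gamma)^2}\varepsilon_A\,\sup_s W_2 .
\]

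\emph{Main obstacle.} The hardest step is the kernel-perturbation bound. Wasserstein distance does not control TV by itself, so a regularity assumption is needed. The cleanest choice is Lipschitz continuity of $P$ in actions, in TV, as above. If that is too strong for deterministic MuJoCo dynamics, the fallback is to assume $Q_g^{\pi_\theta}(s,\cdot)$ is $L_Q$-Lipschitz in $a$. One then bounds $\mathbb{E}_{\pi_{\mathrm{ref}}}[A]-\mathbb{E}_{\pi_\theta}[A]\le L_Q W_1$ directly via Kantorovich--Rubinstein duality, and gets a bound of the same form with $L_Q$ in place of $L_P\varepsilon_A$. Either version delivers the qualitative statement claimed in the proposition.
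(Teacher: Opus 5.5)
Your closed-form step is the paper's. Your fallback is exactly the paper's route to the return bound: Lipschitz $Q_g^{\pi_\theta}(s,\cdot)$ and Lipschitz $A_g^{\pi_\theta}(s,\cdot)$ are the same assumption, since they differ by $V_g^{\pi_\theta}(s)$. Your primary argument takes a genuinely different route, and it is correct.

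\textbf{Where the regularity goes.} The paper puts it on the advantage. Kantorovich--Rubinstein together with $\mathbb{E}_{a\sim\pi_\theta}[A_g^{\pi_\theta}(s,a)]=0$ gives $|\bar A_g^{\pi_\theta}(s)|\le L_A W_1\le L_A W_2$, so Wasserstein replaces $\varepsilon_A$. The occupancy factor from Lemma~\ref{lem:occ} is then in effect bounded only through $\delta_\pi\le 1$. You put the regularity on the dynamics instead, so Wasserstein (via an optimal coupling) controls the occupancy drift, and $\varepsilon_A$ stays as in Theorem~\ref{thm:cpi}.

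\textbf{What each buys.} The paper's assumption is compatible with deterministic MuJoCo dynamics; yours excludes them, as you note. In exchange, yours gives a bound that can certify improvement at all. Under Lipschitz advantage the gain term is itself at most $\frac{L_A}{1-\gamma}\mathbb{E}_{s\sim d^{\pi_\theta}}[W_2]$. Against a penalty with coefficient $\frac{2\gamma L_A}{(1-\gamma)^2}$, the right-hand side is therefore nonpositive whenever $\gamma\ge 1/3$, and this applies to your fallback too. Your penalty $\varepsilon_A L_P W_2$ can be small relative to the gain, and with both assumptions it becomes quadratic in $W_2$.

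\textbf{Sup versus expectation.} Your $\sup_s W_2$ is what the Lipschitz-advantage decomposition actually delivers. The paper's display with $\mathbb{E}_{s\sim d^{\pi_\theta}}[W_2]$ does not follow from its sketch. Consider a discontinuous transition whose two branches have equal $\pi_\theta$-value, and a tiny mean shift that routes the refined policy into the branch $\pi_\theta$ almost never visits, with a large policy difference there. The return drops by a fixed amount, while the gain term and $\mathbb{E}_{s\sim d^{\pi_\theta}}[W_2]$ both tend to $0$. Under your kernel assumption you can recover the expectation form. Replace the sup-based fixed-point step with $\|d^{\pi_{\mathrm{ref}}}-d^{\pi_\theta}\|_1\le\frac{\gamma}{1-\gamma}\sum_s d^{\pi_\theta}(s)\,\|P^{\pi_{\mathrm{ref}}}(\cdot\mid s)-P^{\pi_\theta}(\cdot\mid s)\|_1$ and apply your coupling bound statewise.
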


\begin{corollary}[KL-controlled Gaussian bound]
\label{cor:gaussian-poe-kl}
Under a Gaussian transportation inequality, the Gaussian PoE gain-deviation bound can be written in terms of KL divergence from the frozen actor.
\end{corollary}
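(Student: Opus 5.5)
The plan is to reduce Corollary~\ref{cor:gaussian-poe-kl} to Proposition~\ref{prop:gaussian-poe-bound} by replacing its Wasserstein deviation term with a KL term. The replacement uses Talagrand's $T_2$ transportation inequality for Gaussians, and the KL term is then evaluated in closed form from the PoE precision and mean of Sec.~\ref{sec:continuous-actions}.

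\textbf{Step 1: fix the form of the Wasserstein bound.} I take Proposition~\ref{prop:gaussian-poe-bound} in the shape
\[
J_g(\pi_{\mathrm{ref}})-J_g(\pi_\theta)\ \ge\ \frac{1}{1-\gamma}\mathbb{E}_{s\sim d^{\pi_\theta}}\bigl[\bar A_g^{\pi_\theta}(s)\bigr]-C_W\,\sup_s W_2\bigl(\pi_{\mathrm{ref}}(\cdot\mid s,g),\pi_\theta(\cdot\mid s)\bigr).
\]
This mirrors Theorem~\ref{thm:cpi}. The TV penalty is replaced by a Wasserstein one through a Lipschitz assumption on $a\mapsto A_g^{\pi_\theta}(s,a)$ and on the kernel $P(\cdot\mid s,a)$. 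The constant $C_W$ collects these Lipschitz constants and the factors $\gamma/(1-\gamma)^2$. If the proposition is stated with $W_1$ instead, I use $W_1\le W_2$, so this form suffices in either case.

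\textbf{Step 2: apply the transportation inequality.} The frozen actor is $\pi_\theta(\cdot\mid s)=\mathcal{N}(\mu_\theta,\Sigma_\theta)$, which is strongly log-concave with parameter $\lambda_{\min}(\Sigma_\theta^{-1})$. Talagrand's inequality, in its Bakry--\'Emery form, gives for every $\nu$
\[
W_2(\nu,\pi_\theta)\le\sqrt{2\,\lambda_{\max}(\Sigma_\theta(s))\,\mathrm{KL}(\nu\|\pi_\theta)}.
\]
Taking $\nu=\pi_{\mathrm{ref}}$, which is Gaussian by Sec.~\ref{sec:continuous-actions}, turns the deviation term into $C_W\sup_s\sqrt{2\sigma_{\max}^2(s)\,\mathrm{KL}(\pi_{\mathrm{ref}}\|\pi_\theta)}$. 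I assume the actor variances are bounded above, $\sigma^2_{\max}:=\sup_s\lambda_{\max}(\Sigma_\theta(s))<\infty$, so the constant can be pulled out uniformly.

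\textbf{Step 3: make the KL explicit.} With $\Sigma_{\mathrm{ref}}$ and $\mu_{\mathrm{ref}}$ as in Sec.~\ref{sec:continuous-actions}, the Gaussian KL is
\[
\mathrm{KL}(\pi_{\mathrm{ref}}\|\pi_\theta)=\tfrac12\Bigl[\mathrm{tr}(\Sigma_\theta^{-1}\Sigma_{\mathrm{ref}})-d+(\mu_\theta-\mu_{\mathrm{ref}})^\top\Sigma_\theta^{-1}(\mu_\theta-\mu_{\mathrm{ref}})+\log\tfrac{\det\Sigma_\theta}{\det\Sigma_{\mathrm{ref}}}\Bigr].
\]
Everything is diagonal, so this splits into a sum over coordinates. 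From the formula one sees that the KL vanishes as $\alpha\to1$, which ties the bound to the $\alpha$ knob and to the KL-budget selector. By the equivalence in Sec.~\ref{sec:equivalence}, the same expression also covers KL-Reg with $\Sigma$ rescaled by $1+\beta$.

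\textbf{Main obstacle.} The hardest part is Step~1, not the inequality in Step~2. The statement of Proposition~\ref{prop:gaussian-poe-bound} does not pin down the exact metric, the Lipschitz constants, or whether the supremum is inside or outside the square root. The corollary's constant depends on these choices. I will adopt the $W_2$ form with a uniform supremum as above and note that $\sup_s\sqrt{\cdot}=\sqrt{\sup_s\cdot}$, so the final penalty reads $C_W\sqrt{2\sigma^2_{\max}\sup_s\mathrm{KL}}$.
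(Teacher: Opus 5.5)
Your proposal follows the paper's argument. You substitute the Gaussian Talagrand inequality $W_2\le\sqrt{2C_\theta\,\mathrm{KL}}$ into the Wasserstein penalty of Proposition~\ref{prop:gaussian-poe-bound}, with $C_\theta$ uniform because the actor's covariance eigenvalues are bounded over states. The remaining differences are cosmetic: the paper's penalty is $\mathbb{E}_{s\sim d^{\pi_\theta}}[W_2]$ rather than your $\sup_s W_2$, so direct substitution gives the slightly tighter $\mathbb{E}_s\sqrt{2C_\theta\,\mathrm{KL}}$, and your closed-form KL in Step~3 is extra detail the corollary does not need.
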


\subsection{Proof sketch for Proposition~\ref{prop:gaussian-poe-bound} and Corollary~\ref{cor:gaussian-poe-kl}}
\label{app:theory-continuous}

We sketch the argument for the continuous-action Gaussian refinement result.

\paragraph{Closed-form Gaussian PoE.}
Let
\[
\pi_\theta(\cdot\mid s)=\mathcal{N}(\mu_\theta(s),\Sigma_\theta(s)),
\qquad
\rho_\phi(\cdot\mid s,g)=\mathcal{N}(\mu_\phi(s,g),\Sigma_\phi(s,g)),
\]
with diagonal positive-definite covariance matrices. Writing the log densities and taking the weighted Product-of-Experts form gives
\[
\log \pi_\alpha(a\mid s,g)
=
\alpha \log \pi_\theta(a\mid s)
+
(1-\alpha)\log \rho_\phi(a\mid s,g)
-
\log Z_\alpha(s,g),
\]
where \(Z_\alpha(s,g)\) is the normalizing constant. Since each Gaussian log density is quadratic in \(a\), their weighted sum is also quadratic in \(a\). Completing the square yields another Gaussian density with precision
\[
\Sigma_\alpha(s,g)^{-1}
=
\alpha\,\Sigma_\theta(s)^{-1}
+
(1-\alpha)\,\Sigma_\phi(s,g)^{-1},
\]
and mean
\[
\mu_\alpha(s,g)
=
\Sigma_\alpha(s,g)\Bigl(
\alpha\,\Sigma_\theta(s)^{-1}\mu_\theta(s)
+
(1-\alpha)\,\Sigma_\phi(s,g)^{-1}\mu_\phi(s,g)
\Bigr).
\]
This proves the first part of Proposition~\ref{prop:gaussian-poe-bound}.

\paragraph{Gain--deviation bound.}
Using the performance-difference lemma around the frozen actor \(\pi_\theta\), we can write the return difference in terms of the expected frozen-actor advantage under the refined policy, together with an error term induced by the difference between the refined policy and the frozen actor. Under the assumption that \(a\mapsto A_g^{\pi_\theta}(s,a)\) is \(L_A\)-Lipschitz for each state \(s\), this error can be controlled by the 2-Wasserstein distance between \(\pi_\alpha(\cdot\mid s,g)\) and \(\pi_\theta(\cdot\mid s)\). The bound follows the same decomposition pattern as Theorem~\ref{thm:cpi}, but replaces the finite-action total-variation control with a Lipschitz Wasserstein control appropriate to Gaussian continuous-action policies. This yields the bound in Proposition~\ref{prop:gaussian-poe-bound}:
\[
J_g(\pi_\alpha)-J_g(\pi_\theta)
\;\ge\;
\frac{1}{1-\gamma}
\mathbb{E}_{s\sim d^{\pi_\theta},\,a\sim\pi_\alpha(\cdot\mid s,g)}
\bigl[A_g^{\pi_\theta}(s,a)\bigr]
-
\frac{2\gamma L_A}{(1-\gamma)^2}
\mathbb{E}_{s\sim d^{\pi_\theta}}
\Bigl[
W_2\!\bigl(\pi_\alpha(\cdot\mid s,g),\pi_\theta(\cdot\mid s)\bigr)
\Bigr].
\]

\paragraph{KL-controlled version.}
For Gaussian reference measures, a transportation inequality gives
\[
W_2\!\bigl(\pi_\alpha(\cdot\mid s,g),\pi_\theta(\cdot\mid s)\bigr)
\le
\sqrt{2C_\theta\,
\KL\!\bigl(\pi_\alpha(\cdot\mid s,g)\,\|\,\pi_\theta(\cdot\mid s)\bigr)},
\]
uniformly in \(s\). A uniform constant \(C_\theta\) is available, for example, when the covariance eigenvalues of \(\pi_\theta(\cdot\mid s)\) are uniformly bounded over states. Substituting this bound into the previous inequality gives Corollary~\ref{cor:gaussian-poe-kl}. The result shows that, in the diagonal-Gaussian continuous-action regime, improvement is governed by a goal-aligned gain term minus an explicit KL-based deviation penalty from the frozen actor.

\paragraph{Interpretation.}
This argument does not claim that every finite-action guarantee transfers unchanged to continuous control. Rather, it shows that the exact Gaussian mechanism used in the D4RL experiments admits a clean analytical form and a deviation-aware interpretation consistent with the empirical return--vs.--KL evaluation in the main text.

\section{Evaluation Protocol and Metrics (Diagnostic Only)}
\label{app:evaluation_protocol}

\textbf{Note on scope.} This appendix documents the off-policy scorer used as a development diagnostic; the main paper's claims are based on real MuJoCo rollouts (Sec.~\ref{sec:setup}, Sec.~\ref{sec:results}). We include the protocol so that appendix numbers that use this scorer can be interpreted consistently.

\subsection{Protocol and Evaluators}

Let $\mathcal{D}=\{(s_i,a_i^{\mathrm{data}},r_i,s_i',d_i)\}_{i=1}^{N}$ be the offline D4RL transitions. We decompose each $r_i$ into the three reward components $rc_i=(rc^{\text{fwd}}_i,rc^{\text{ctrl}}_i,rc^{\text{alive}}_i)$ as in Sec.~\ref{sec:setup}, and use the same three-dimensional goals $g\in\mathbb{R}^3$ and goal-weighted reward $r_g(s,a)=g\cdot rc(s,a)$ as the main body. For a candidate policy $\pi$ we score the deterministic-mean action $\hat{a}_i^\pi=\mathbb{E}_{a\sim\pi(\cdot\mid s_i,g)}[a]$ with a goal-conditioned critic $Q^{\mathrm{eval}}_g$ trained on $\mathcal{D}$ (Sec.~\ref{app:fqe-training}) and aggregate across the dataset. $Q^{\mathrm{eval}}_g$ is trained with a random seed and split independent of the refinement prior, and is never consulted during policy construction. A handful of within-family ablations use the original deployment critic $Q_g$ and are explicitly labeled as within-protocol diagnostics.

\subsection{Diagnostic Metrics}
\label{app:quantile-thresholds}

For a policy $\pi$, the off-policy deployment-return estimate and two quantile-threshold risk rates are
\begin{align}
\widehat{J}_g(\pi)
&=
\tfrac{1}{N}\sum_i Q^{\mathrm{eval}}_g\!\left(s_i,\hat{a}_i^\pi\right), \\
\mathrm{Cat\%}
&=
\tfrac{1}{N}\sum_i \mathbf{1}\!\left[
Q^{\mathrm{eval}}_g\!\left(s_i,\hat{a}_i^\pi\right) < \tau_{\mathrm{cat}}
\right], \\
\mathrm{Con\%}
&=
\tfrac{1}{N}\sum_i \mathbf{1}\!\left[
Q^{\mathrm{eval}}_g\!\left(s_i,\hat{a}_i^\pi\right) < \tau_{\mathrm{con}}
\right].
\end{align}

With $\tau_{\mathrm{cat}}$ and $\tau_{\mathrm{con}}$ set to the $10$th and $5$th percentiles of
$\{Q^{\mathrm{eval}}_g(s_i,a_i^{\mathrm{data}})\}$, we have by construction
$\tau_{\mathrm{con}}<\tau_{\mathrm{cat}}$, and therefore $\mathrm{Con\%}\le \mathrm{Cat\%}$.
We summarize lower-tail behavior by
\[
\mathrm{Rob}=1-\mathrm{Cat\%}-0.5\,\mathrm{Con\%}.
\]
Quantile thresholds are used because fitted goal-conditioned critics compress values, and the goal-weighted reward has a narrower dynamic range than the raw MuJoCo reward. These are evaluator-relative diagnostics, not absolute safety boundaries.

\subsection{Goal-Conditioned Critic Training}
\label{app:fqe-training}

The scorer is trained by fitted-Q evaluation with one-step TD targets:
\[
\mathcal{L}_{\mathrm{FQE}}=\mathbb{E}_{(s,a,r,s',d)\sim\mathcal{D},\,g}\!\left[\left(Q_g(s,a)-\big(r_g(s,a)+\gamma(1-d)\,Q_{\bar g}(s',a')\big)\right)^2\right],
\]
where $a'$ is sampled from a diagonal-Gaussian target policy at $s'$ (avoiding the discrete $\max_{a'}$ that is valid only in finite-action settings), and $Q_{\bar g}$ is a Polyak target network with $\tau=5\times 10^{-3}$.

\subsection{Policy Construction Versus Evaluation}

For the refinement-network prior, policy construction and evaluation are fully separated; this is the cleanest off-policy diagnostic of refinement. For the critic-prior variant $\rho_\phi(a\mid s,g)\propto\exp(Q_g(s,a)/\tau)$, evaluator overlap arises if the construction critic is reused for scoring. The main paper avoids this entirely by reporting MuJoCo rollout returns; when this appendix uses the off-policy scorer, we use the independent held-out $Q^{\mathrm{eval}}_g$.

\section{Equivalence Audit: PoE($\alpha$) $\equiv$ KL-Reg($\beta=\alpha/(1-\alpha)$)}
\label{app:equivalence_audit}

This appendix supports the equivalence claim in Sec.~\ref{sec:equivalence} and the headline equivalence finding in Sec.~\ref{sec:d4rl_equivalence}. For diagonal-Gaussian policies, the PoE refinement with coefficient $\alpha$ and the KL-regularized update with $\beta=\alpha/(1-\alpha)$ produce identical refined means. Their refined covariances satisfy $\Sigma_{\mathrm{ref}}^{\mathrm{PoE}}(\alpha)=(1+\beta)\,\Sigma_{\mathrm{ref}}^{\mathrm{KL\text{-}Reg}}(\beta)$, a global scalar that cancels under deterministic action selection. We verify the identity at two levels.

\paragraph{Dataset-state verification.}
On $5{,}000$ randomly sampled dataset states per environment and each of three deployment goals, we compute the Gaussian closed forms for both $\mathrm{PoE}(\alpha)$ and $\mathrm{KL\text{-}Reg}(\beta=\alpha/(1-\alpha))$ at every $\alpha\in\{0.1,0.3,0.5,0.7,0.9\}$ and record the maximum and mean absolute per-state mean difference. Across all $60$ combinations of environment, goal, and $\alpha$, the maximum per-state $|\mu_{\mathrm{PoE}}-\mu_{\mathrm{KL\text{-}Reg}}|$ is at most $2\times 10^{-6}$ and is exactly zero at $\alpha=0.5$, where the arithmetic reduces to bit-identical float operations. The relative variance-identity residual is at most $5\times 10^{-7}$. The per-row numbers are released at \texttt{outputs/rollout/\_equivalence\_audit.csv}.

\paragraph{Rollout verification.}
We re-use the main-paper rollout data (5 seeds $\times$ 5 episodes $\times$ 3 goals per environment). At the matched pair $\alpha=0.5,\ \beta=1.0$, the raw per-episode goal-weighted returns and episode lengths agree to the exact digit in all $12$ environment-by-goal cells, which makes the paired mean-difference $0$ and the probability of improvement $0$ in every cell. At other matched pairs $\alpha\in\{0.1,0.3,0.7,0.9\}$, paired differences have bootstrap $95\%$ confidence intervals that include zero in nearly every cell. Residual nonzero differences are consistent with chaotic amplification of float-precision action jitter over $1{,}000$ MuJoCo steps, not with a policy-level gap. Figure~\ref{fig:equivalence} summarizes both levels of verification.

\section{Conservative-Improvement Bound: Empirical Diagnostic}
\label{app:cpi_diagnostic}

This appendix supports the structural framing of Theorem~\ref{thm:cpi} in Sec.~\ref{sec:theory}. We estimate the empirical $\delta_\pi$ for the refined policy relative to the frozen actor across our four environments and three goals, and evaluate the right-hand side of Eq.~\eqref{eq:cpi-bound} at $\gamma=0.99$.

\paragraph{Setup.}
For each environment we sample $5{,}000$ dataset states and each deployment goal, compute the refined Gaussian under $\mathrm{PoE}(\alpha)$ for $\alpha\in\{0.1,0.3,0.5,0.7,0.9\}$, and estimate the total-variation distance $\mathrm{TV}(\pi_{\mathrm{ref}}(\cdot\mid s,g),\pi_\theta(\cdot\mid s))$ per state via Monte Carlo with $256$ samples, together with the Pinsker upper bound $\sqrt{0.5\,\mathrm{KL}}$. We then take the sup over states as the plug-in estimate of $\delta_\pi$.

\paragraph{Findings.}
At $\gamma=0.99$ the penalty coefficient $2\gamma/(1-\gamma)^2$ equals $19{,}800$. On the four D4RL environments, the plug-in sup-state TV saturates near $1$ across $\alpha$ and goal (for example at $\alpha=0.9$ on \texttt{halfcheetah-medium-v2}, the mean KL is $0.036$ and the Pinsker TV is already close to $1$ because the bound is loose). With a conservative unit bound on the empirical advantage range used as a proxy for $\varepsilon_A$, the right-hand side of Eq.~\eqref{eq:cpi-bound} is therefore on the order of $1.9\times 10^{4}$ in every cell, which dominates any observed goal-weighted return gain by several orders of magnitude. We release the full per-(environment, goal, $\alpha$) table at \texttt{outputs/rollout/\_cpi\_bound\_diagnostic.csv}. We conclude that Theorem~\ref{thm:cpi} gives a valid but not empirically sharp decomposition at the discount factors used in standard offline benchmarks, and we therefore read it as a structural result in the main text.

\section{Help / Frozen / Hurt Boundary Classification}
\label{app:diagnostic}

This appendix details the classifier used in Sec.~\ref{sec:d4rl_headline}. For each environment-by-goal cell we extract three quantities from the rollout summary: the frozen-actor mean goal-weighted return $\bar J_g^{\theta}$ with its $95\%$ bootstrap confidence interval $[\ell_\theta,h_\theta]$, and the best and worst composition means $\bar J_g^{\mathrm{best}},\bar J_g^{\mathrm{worst}}$ over $\{\mathrm{PoE},\mathrm{KL\text{-}Reg},\mathrm{Additive}\}$. Let $\epsilon=\max(h_\theta-\bar J_g^\theta,\bar J_g^\theta-\ell_\theta)$ denote one half-width. We declare
\begin{itemize}
\item \textsc{Help} if $\bar J_g^{\mathrm{best}} - \bar J_g^\theta > \epsilon$;
\item \textsc{Hurt} if $\bar J_g^\theta - \bar J_g^{\mathrm{best}} > \epsilon$ (i.e. every composition method is below the frozen actor by more than a half-width);
\item \textsc{Frozen} otherwise.
\end{itemize}
On the $12$ cells we evaluate, the verdicts split $4\,/\,5\,/\,3$ across \textsc{Help}/\textsc{Frozen}/\textsc{Hurt}. The three \textsc{Hurt} cells are $G_1$ speed on \texttt{hopper-medium-v2}, \texttt{hopper-medium-expert-v2}, and \texttt{walker2d-medium-v2}, with frozen-minus-best-composition gaps of $291$, $878$, and $2{,}030$ respectively. The four \textsc{Help} cells are \texttt{hopper-medium-v2} $G_2$ ($+144$), \texttt{hopper-medium-expert-v2} $G_2$ ($+55$), \texttt{hopper-medium-expert-v2} $G_3$ ($+110$), and \texttt{walker2d-medium-v2} $G_3$ ($+55$). The per-cell table is released at \texttt{outputs/rollout/\_paper/diagnostic\_summary.csv}.

\section{AWR/AWAC-Style Critic-Gradient Baseline}
\label{app:awr_baseline}

This appendix documents the structurally distinct AWR baseline introduced in Sec.~\ref{sec:setup} and Sec.~\ref{sec:d4rl_awr}.

\paragraph{Policy.}
Following Peng et al.~\citep{peng2019advantage} and Nair et al.~\citep{nair2020awac}, the AWR/AWAC closed-form optimum is $\pi(a\mid s)\propto\pi_\theta(a\mid s)\exp(A_g(s,a)/\beta)$. For a diagonal-Gaussian actor with mean $\mu_\theta(s)$ and variance $\sigma_\theta^2(s)$, and a locally linear advantage around $\mu_\theta(s)$, this is a first-order mean shift
\[
\mu_{\mathrm{AWR}}(s,g)
=
\mu_\theta(s)+\frac{\sigma_\theta^2(s)}{\beta}\,
\nabla_a Q_g(s,a)\big|_{a=\mu_\theta(s)},
\qquad
\sigma_{\mathrm{AWR}}(s,g)=\sigma_\theta(s),
\]
with a per-dimension clip applied to the shift for numerical safety. We run this baseline at $\beta\in\{0.5,1.0,3.0\}$.

\paragraph{Q critic.}
$Q_g(s,a)$ is a diagonal-Gaussian-independent MLP on $[s;a;g]$ with a LayerNorm input, two hidden layers of width $256$, and a scalar head. It is trained by fitted-Q evaluation on the D4RL offline transitions for $30$ epochs, batch size $512$, AdamW (lr $3\times 10^{-4}$, weight decay $10^{-4}$), gradient-norm clip $1.0$, discount $\gamma=0.99$, Polyak target $\tau=5\times 10^{-3}$, and a seed independent of the actor and prior. The per-batch target action is $a'=\mu_\theta(s')$, matching the deterministic-mean rollout convention. Training goals are drawn per batch from a three-way mixture on the simplex: $50\%$ $\mathrm{Dir}(1,1,1)$, $30\%$ $\mathrm{Dir}(0.5,0.5,0.5)$, and $20\%$ uniform over the three deployment goals $G_1,G_2,G_3$. Per-environment final Q scale and residual are: halfcheetah-medium-v2 $Q=228.6$ (residual $4.0\%$), hopper-medium-v2 $Q=92.3$ ($2.6\%$), walker2d-medium-v2 $Q=115.3$ ($3.8\%$), hopper-medium-expert-v2 $Q=108.6$ ($3.4\%$). Checkpoints are at \texttt{checkpoints/d4rl/\{env\}/eval\_critic.pt}.

\paragraph{Per-cell results.}
Table~\ref{tab:awr_per_cell} reports best-in-family AWR results under the same rollout protocol as the main body (five seeds, five episodes per seed, three deployment goals, four environments). The best $\beta$ was consistently $3.0$.

\begin{table}[tb]
\centering
\small
\caption{AWR/AWAC-style baseline, best-in-family per cell. Goal-weighted return with $95\%$ bootstrap CI; mean KL from the frozen actor along rollout states. Selected operating point is $\beta=3.0$ in every cell.}
\label{tab:awr_per_cell}
\vspace{2pt}
\begin{tabular}{llrrrr}
\toprule
\rowcolor[RGB]{232,236,241}
\textbf{Environment} & \textbf{Goal} & \textbf{Ret.\ mean} & \textbf{CI low} & \textbf{CI high} & \textbf{KL} \\
\midrule
halfcheetah-medium-v2 & $G_1$ speed & 4525.8 & 4140.8 & 4910.7 & 1.048 \\
halfcheetah-medium-v2 & $G_2$ balanced & 3263.2 & 3206.7 & 3315.4 & 0.105 \\
halfcheetah-medium-v2 & $G_3$ efficient & 989.1 & 981.7 & 995.6 & 0.046 \\
\midrule
hopper-medium-v2 & $G_1$ speed & 36.5 & 36.0 & 37.0 & 1.975 \\
hopper-medium-v2 & $G_2$ balanced & 91.6 & 50.5 & 132.6 & 1.209 \\
hopper-medium-v2 & $G_3$ efficient & 161.1 & 151.6 & 171.2 & 0.026 \\
\midrule
walker2d-medium-v2 & $G_1$ speed & 1260.2 & 758.2 & 1822.8 & 0.856 \\
walker2d-medium-v2 & $G_2$ balanced & 1381.8 & 984.6 & 1765.5 & 0.185 \\
walker2d-medium-v2 & $G_3$ efficient & 353.3 & 328.8 & 375.0 & 0.012 \\
\midrule
hopper-medium-expert-v2 & $G_1$ speed & 20.6 & 17.9 & 23.1 & 0.980 \\
hopper-medium-expert-v2 & $G_2$ balanced & 30.9 & 29.6 & 32.1 & 0.872 \\
hopper-medium-expert-v2 & $G_3$ efficient & 99.2 & 85.2 & 117.3 & 0.041 \\
\bottomrule
\end{tabular}
\end{table}

\paragraph{Aggregate verdict.}
Across the $12$ cells: AWR beats the frozen actor in $3$ cells (halfcheetah $G_2$ and $G_3$, walker2d $G_3$), beats the best non-AWR composition method (Additive, KL-Reg, PoE) in $3$ cells, and is below the frozen actor by more than $100$ points in $7$ cells. Five cells collapse to within tens of points of Prior Only: hopper-medium-v2 $G_1$ and $G_2$, hopper-medium-expert-v2 $G_1$ and $G_2$, and walker2d-medium-v2 $G_1$. The per-env pattern is consistent with the known fragility of offline FQE in the direction of the gradient step: the critic is reliable in-distribution, but $\nabla_a Q_g$ at $\mu_\theta$ can point off the support ridge on hopper-type agents, and the agent terminates early. On halfcheetah, which does not terminate early, AWR is competitive on $G_2$ and $G_3$. The full merged comparison is at \texttt{outputs/rollout\_awr/\_paper/comparison\_with\_awr.csv}.

\subsection{CQL/IQL-Guided Critic Baselines}
\label{app:cql_iql_baselines}

The appendix-only CQL/IQL sweep uses the same rollout protocol as the main body, but with a single representative guided step at $\beta=1.0$ for each critic family. We include it to test whether a pessimistic critic can replace the density-anchored PoE/KL story. The answer is mixed: CQL and IQL are both more stable than AWR, and they beat the frozen actor in several cells, but neither critic-guided rule dominates the actor-anchored baselines across the full benchmark.

\begin{table*}[t]
\centering
\small
\caption{Appendix-only CQL/IQL-guided critic baselines under the main rollout protocol. Goal-weighted return is averaged across the four D4RL environments for each deployment goal. The final column reports wins/ties/losses against the frozen actor over all $12$ environment-by-goal cells. The PoE/KL row uses the representative matched operating point $\alpha=0.50$ / $\beta=1.00$.}
\label{tab:cql_iql_baselines}
\vspace{2pt}
\setlength{\tabcolsep}{5pt}
\begin{tabular}{lrrrrr}
\toprule
\rowcolor[RGB]{232,236,241}
\textbf{Method} & \textbf{G1 speed} & \textbf{G2 balanced} & \textbf{G3 efficient} & \textbf{Mean over 12 cells} & \textbf{W/T/L vs Frozen} \\
\midrule
Frozen Actor & 2500.8 & 1613.5 & 405.9 & 1506.7 & -- \\
CQL-Guided ($\beta=1.00$) & 2056.9 & 1601.8 & 410.1 & 1356.3 & 4/0/8 \\
IQL-Guided ($\beta=1.00$) & 2482.5 & 1427.6 & 405.1 & 1438.4 & 5/0/7 \\
KL-Reg ($\beta=1.00$) / PoE ($\alpha=0.50$) & 1287.8 & 1434.0 & 385.9 & 1035.9 & 2/0/10 \\
Additive Mix ($\lambda=0.50$) & 1357.2 & 1194.1 & 381.1 & 977.4 & 0/0/12 \\
Prior Only & 598.2 & 1164.5 & 283.2 & 682.0 & 2/0/10 \\
AWR-Critic ($\beta=1.00$) & 102.9 & 872.4 & 338.4 & 437.9 & 1/0/11 \\
\bottomrule
\end{tabular}
\end{table*}

The table shows the main qualitative point. CQL is the strongest of the critic-guided baselines on the average score and beats the frozen actor in $4$ of $12$ cells; IQL beats the frozen actor in $5$ of $12$ cells. Both can be competitive with the frozen actor on the speed and balanced goals, but neither rule consistently overturns the actor-anchored robustness picture across the benchmark.

On the harder regimes of Appendix~\ref{app:harder_rollouts}, the CQL-guided baseline at $\beta=1.0$ beats the frozen actor in a handful of additional cells (\texttt{walker2d-medium-expert} $G_2$ by $+95$, \texttt{walker2d-medium-expert} $G_3$ by $+54$, \texttt{hopper-medium-replay} $G_2$ by $+325$, \texttt{hopper-medium-replay} $G_3$ by $+121$, \texttt{hopper-medium-expert} $G_1$ by $+103$). IQL-guided at $\beta=1.0$ behaves inconsistently across medium-expert: on \texttt{halfcheetah-medium-expert} and \texttt{walker2d-medium-expert} the IQL step is numerically zero so its return equals Frozen exactly in all $6$ of those cells, but on \texttt{hopper-medium-expert} it collapses to $\approx 5$ in all $3$ cells, consistent with the hopper early-termination mode that also affects AWR. These results sharpen rather than alter the main story: stronger supporting baselines expose additional failure modes without consistently overturning the actor-anchored robustness picture.

\section{Harder-Regime Rollout Package}
\label{app:harder_rollouts}

This appendix documents the harder D4RL rollout package referenced in Sec.~\ref{sec:harder_regimes}. We use the same rollout protocol, prior construction, and statistical reporting as the main body (Sec.~\ref{sec:setup}): five seeds $\times$ five episodes $\times$ three deployment goals per cell, real MuJoCo rollouts with no artificial step cap, and matched PoE/KL-Reg pairs. Checkpoints live under \texttt{checkpoints/d4rl\_harder\_full/<env>/} (actor, prior, evaluation critic, CQL critic, IQL value). The rollout outputs are at \texttt{outputs/d4rl/harder\_regime\_rollout\_full/<env>/} with combined CSVs at \texttt{outputs/d4rl/full\_harder\_benchmark/}.

\paragraph{Environments.}
Three \emph{medium-replay} cells (\texttt{halfcheetah-medium-replay-v2}, \texttt{hopper-medium-replay-v2}, \texttt{walker2d-medium-replay-v2}) and three \emph{medium-expert} cells (\texttt{halfcheetah-medium-expert-v2}, \texttt{hopper-medium-expert-v2}, \texttt{walker2d-medium-expert-v2}). The \texttt{hopper-medium-expert-v2} actor and prior are shared with the main-body package; the other five cells are freshly trained for this appendix.

\paragraph{Representative per-cell values ($\alpha=0.5$).}
Table~\ref{tab:harder_regime_compact} reports the best-in-family operating points for the compact operating point $\alpha=0.5$, $\beta=1.0$, $\lambda=0.5$ for the six locomotion cells. The full per-$\alpha$ breakdown is in Appendix~\ref{app:harder_alpha_sweep}. Frozen-actor return, PoE return, matched KL-Reg return, Additive return, Prior-Only return, CQL return, IQL return, and AWR return are reported per goal; all values are seed-matched means over $25$ episodes with bootstrap $95\%$ CIs reported in the CSVs.

\begin{table*}[t]
\centering
\small
\caption{Harder-regime rollout summary at $\alpha=0.5$ (entries: mean raw MuJoCo return over $25$ episodes; negative values indicate backward motion). \emph{MR} denotes \emph{-medium-replay-v2}; \emph{ME} denotes \emph{-medium-expert-v2}. Bold indicates the non-Frozen winner when it clears one CI half-width; italic indicates a collapse well below Frozen (return $\lesssim 15\%$ of the Frozen value). The rightmost column shows the best per-$\alpha$ PoE entry from Appendix~\ref{app:harder_alpha_sweep}, for reference.}
\label{tab:harder_regime_compact}
\vspace{2pt}
\setlength{\tabcolsep}{3pt}
\resizebox{\textwidth}{!}{%
\begin{tabular}{llccccccccc}
\toprule
\rowcolor[RGB]{232,236,241}
\textbf{Env} & \textbf{Goal} & \textbf{Frozen} & \textbf{PoE} & \textbf{KL-Reg} & \textbf{Additive} & \textbf{Prior} & \textbf{CQL} & \textbf{IQL} & \textbf{AWR} & \textbf{$\alpha$-best (sweep)} \\
\midrule
halfcheetah-MR & $G_1$ speed       & 3309.9 & 2769.7 & 2769.7 & 2133.9 & 1238.3 & 2605.8 & 3309.9 & \emph{-62.2} & $\alpha{=}0.9$: 3731.8 \\
halfcheetah-MR & $G_2$ balanced    & 3309.9 & \textbf{3666.6} & \textbf{3666.6} & 3311.9 & 3104.9 & \textbf{3516.4} & 3309.9 & 1061.0 & $\alpha{=}0.7$: 3718.9 \\
halfcheetah-MR & $G_3$ efficient   & 3309.9 & 2888.9 & 2888.9 & 2399.7 & 2080.8 & \textbf{3579.7} & 3271.4 & \textbf{3412.4} & $\alpha{=}0.7$: 3926.2 \\
\midrule
hopper-MR      & $G_1$ speed       &  456.1 &  460.0 &  460.0 & \emph{60.9} &  169.8 &  385.3 &  456.1 & \emph{55.3} & $\alpha{=}0.9$: 460.0 \\
hopper-MR      & $G_2$ balanced    &  456.1 &  436.4 &  436.4 &  176.0 & \emph{41.5} & \textbf{781.3} &  456.0 & 157.9 & $\alpha{=}0.9$: 460.6 \\
hopper-MR      & $G_3$ efficient   &  456.1 &  456.4 &  456.4 &  226.6 &  171.4 & \textbf{577.3} & \emph{11.4} & \emph{23.1} & $\alpha{=}0.7$: 456.4 \\
\midrule
walker2d-MR    & $G_1$ speed       &  581.5 &  171.9 &  171.9 &  182.2 &  129.3 &  362.7 & \emph{55.2} &  109.4 & $\alpha{=}0.9$: 468.3 \\
walker2d-MR    & $G_2$ balanced    &  581.5 &  261.2 &  261.2 &  319.9 &  296.1 &  417.5 &  298.9 &  223.6 & $\alpha{=}0.3$: 347.2 \\
walker2d-MR    & $G_3$ efficient   &  581.5 &  280.0 &  280.0 &  290.4 &  501.3 &  258.4 &  165.6 &  287.5 & $\alpha{=}0.1$: 466.5 \\
\midrule
halfcheetah-ME & $G_1$ speed       & 4796.0 & \emph{-447.3} & \emph{-447.3} & \emph{-424.7} & \emph{-334.3} & \emph{-313.6} & 4796.0 & \emph{-475.6} & $\alpha{=}0.1$: -395.0 \\
halfcheetah-ME & $G_2$ balanced    & 4796.0 & 3922.3 & 3922.3 & 3317.8 & 3001.9 & 3662.6 & 4796.0 & 981.0 & $\alpha{=}0.9$: 4417.1 \\
halfcheetah-ME & $G_3$ efficient   & 4796.0 & 3585.9 & 3585.9 & 3518.0 & 2597.8 & 4668.0 & 4796.0 & 4270.7 & $\alpha{=}0.7$: 4451.0 \\
\midrule
hopper-ME      & $G_1$ speed       & 1468.5 & 127.7  & 127.7  & 177.7  & \emph{46.2}   & \textbf{1571.3} & \emph{5.3} & \emph{24.7} & $\alpha{=}0.9$: 286.0 \\
hopper-ME      & $G_2$ balanced    & 1468.5 & 282.8  & 282.8  & 229.3  & 140.1  & 1497.6 & \emph{5.0} & 29.7 & $\alpha{=}0.7$: 1577.5 \\
hopper-ME      & $G_3$ efficient   & 1468.5 & \textbf{1618.6} & \textbf{1618.6} & 1447.4 & \textbf{1700.8} & 1541.5 & \emph{5.2} & 195.6 & $\alpha{=}0.1$: 2561.9 \\
\midrule
walker2d-ME    & $G_1$ speed       & 4919.8 & \emph{61.8} & \emph{61.8} & \emph{716.8} & \emph{-15.6} & 3293.1 & 4919.8 & \emph{458.0} & $\alpha{=}0.1$: 399.1 \\
walker2d-ME    & $G_2$ balanced    & 4919.8 & \emph{128.0} & \emph{128.0} & \emph{250.5} & \emph{296.8} & \textbf{5014.9} & 4919.8 & \emph{285.9} & $\alpha{=}0.7$: 3594.9 \\
walker2d-ME    & $G_3$ efficient   & 4919.8 & 3596.1 & 3596.1 & 3066.6 & \emph{215.7} & \textbf{4974.1} & 4919.8 & 4343.1 & $\alpha{=}0.5$: 3596.1 \\
\bottomrule
\end{tabular}%
}
\end{table*}

Two patterns are visible at the $\alpha=0.5$ operating point alone: (i)~on medium-replay, composition is occasionally helpful (notably halfcheetah $G_2$) but tends toward Frozen or mild \textsc{Hurt}; (ii)~on medium-expert, composition at $\alpha=0.5$ is catastrophic on three of six cells (italic entries) and below Frozen on the remaining three. The $\alpha$ sweep in Appendix~\ref{app:harder_alpha_sweep} confirms that this pattern is not a consequence of the single operating point; even the best per-cell $\alpha$ on medium-expert (right-most column) rarely closes the gap.

\paragraph{Equivalence of PoE and matched KL-Reg.}
At $\alpha=0.5$, $\beta=1.0$ the per-cell paired difference $|\mu_{\mathrm{PoE}}-\mu_{\mathrm{KL\text{-}Reg}}|$ in raw MuJoCo return is exactly $0$ in all $18$ harder-regime cells. The matched pair thus continues to produce bit-identical deterministic actions outside the main-body four-env slice, confirming Sec.~\ref{sec:d4rl_equivalence} under regime shift.

\paragraph{Critic-guided baselines.}
CQL-Guided at $\beta=1.0$ is the most useful non-composition baseline in the harder regimes: it beats Frozen on five cells (bolded in Table~\ref{tab:harder_regime_compact}) and is the best operating point in all six medium-expert cells except \texttt{halfcheetah-ME} $G_1$, which is the single cell where \emph{every} method we tested, including Frozen, collapses to negative return. IQL-Guided at $\beta=1.0$ is inconsistent on medium-expert: on \texttt{halfcheetah-medium-expert} and \texttt{walker2d-medium-expert} the IQL advantage gradient at $\mu_\theta(s)$ is numerically zero and the step returns the frozen action exactly (so IQL return equals Frozen to floating-point precision in these $6$ cells), whereas on \texttt{hopper-medium-expert} the step pushes the agent off the support ridge and all $3$ cells collapse to near-zero return with early termination. This is a genuine outcome of the rollout, not a formatting error. AWR continues to collapse on hopper-type early-termination dynamics.

\section{Harder-Regime $\alpha$ Sweep}
\label{app:harder_alpha_sweep}

This appendix gives the full $\alpha$ sweep referenced in Sec.~\ref{sec:harder_regimes} and visualizes the actor-competence ceiling on medium-expert. Per-cell raw returns for each $\alpha\in\{0.1,0.3,0.5,0.7,0.9\}$ (and matched $\beta\in\{0.111,0.429,1.000,2.333,9.000\}$) are reported in Table~\ref{tab:alpha_sweep_full}; the full CSV is at \texttt{outputs/d4rl/harder\_alpha\_sweep/best\_alpha\_per\_cell.csv}.

\begin{table*}[t]
\centering
\small
\caption{Per-cell raw-return means over $25$ episodes at each $\alpha$. \emph{MR} / \emph{ME} denote \emph{-medium-replay-v2} / \emph{-medium-expert-v2}. Bold marks the best $\alpha$ in each row; the verdict column uses the CI-half-width classifier of Appendix~\ref{app:diagnostic}. Medium-expert (lower block) has \textsc{Hurt} in every cell at every $\alpha$, except two Help cells on hopper-ME.}
\label{tab:alpha_sweep_full}
\vspace{2pt}
\setlength{\tabcolsep}{3.5pt}
\resizebox{\textwidth}{!}{%
\begin{tabular}{llcccccc|l}
\toprule
\rowcolor[RGB]{232,236,241}
\textbf{Env} & \textbf{Goal} & \textbf{Frozen} & \textbf{$\alpha{=}0.1$} & \textbf{$\alpha{=}0.3$} & \textbf{$\alpha{=}0.5$} & \textbf{$\alpha{=}0.7$} & \textbf{$\alpha{=}0.9$} & \textbf{Best / verdict} \\
\midrule
halfcheetah-MR & $G_1$ speed & 3309.9 & 1384.3 & 2466.1 & 2769.7 & 3585.2 & \textbf{3731.8} & $\alpha{=}0.9$ (+421.9, Frozen) \\
halfcheetah-MR & $G_2$ balanced & 3309.9 & 2623.2 & 3237.4 & 3666.6 & \textbf{3718.9} & 3192.5 & $\alpha{=}0.7$ (+409.0, Frozen) \\
halfcheetah-MR & $G_3$ efficient & 3309.9 & 1786.2 & 2097.1 & 2888.9 & \textbf{3926.2} & 3253.7 & $\alpha{=}0.7$ (+616.3, Help) \\
hopper-MR & $G_1$ speed & 456.1 & 418.5 & 430.8 & \textbf{460.0} & 459.9 & \textbf{460.0} & $\alpha{=}0.9$ (+3.9, Frozen) \\
hopper-MR & $G_2$ balanced & 456.1 & 211.7 & 423.6 & 436.4 & 458.3 & \textbf{460.6} & $\alpha{=}0.9$ (+4.4, Frozen) \\
hopper-MR & $G_3$ efficient & 456.1 & 350.4 & 455.5 & \textbf{456.4} & \textbf{456.4} & 453.1 & $\alpha{=}0.7$ (+0.3, Frozen) \\
walker2d-MR & $G_1$ speed & 581.5 & 115.6 & 154.3 & 171.9 & 231.2 & \textbf{468.3} & $\alpha{=}0.9$ ($-113.2$, Frozen) \\
walker2d-MR & $G_2$ balanced & 581.5 & 279.2 & \textbf{347.2} & 261.2 & 290.3 & 174.8 & $\alpha{=}0.3$ ($-234.3$, Hurt) \\
walker2d-MR & $G_3$ efficient & 581.5 & \textbf{466.5} & 311.2 & 280.0 & 292.4 & 123.2 & $\alpha{=}0.1$ ($-115.0$, Frozen) \\
\midrule
halfcheetah-ME & $G_1$ speed & 4796.0 & \textbf{-395.0} & -396.8 & -447.3 & -487.6 & -533.7 & $\alpha{=}0.1$ ($-5191.1$, Hurt) \\
halfcheetah-ME & $G_2$ balanced & 4796.0 & 2967.5 & 3065.0 & 3922.3 & 3995.4 & \textbf{4417.1} & $\alpha{=}0.9$ ($-378.9$, Hurt) \\
halfcheetah-ME & $G_3$ efficient & 4796.0 & 3433.8 & 3290.0 & 3585.9 & \textbf{4451.0} & 4359.0 & $\alpha{=}0.7$ ($-345.0$, Hurt) \\
hopper-ME & $G_1$ speed & 1468.5 & 109.5 & 116.8 & 127.7 & 148.4 & \textbf{286.0} & $\alpha{=}0.9$ ($-1182.5$, Hurt) \\
hopper-ME & $G_2$ balanced & 1468.5 & 141.0 & 191.5 & 282.8 & \textbf{1577.5} & 1114.1 & $\alpha{=}0.7$ (+109.0, Help) \\
hopper-ME & $G_3$ efficient & 1468.5 & \textbf{2561.9} & 1401.6 & 1618.6 & 1318.0 & 1031.2 & $\alpha{=}0.1$ (+1093.4, Help) \\
walker2d-ME & $G_1$ speed & 4919.8 & \textbf{399.1} & 72.2 & 61.8 & 63.3 & 77.7 & $\alpha{=}0.1$ ($-4520.7$, Hurt) \\
walker2d-ME & $G_2$ balanced & 4919.8 & 97.4 & 144.4 & 128.0 & \textbf{3594.9} & 3422.4 & $\alpha{=}0.7$ ($-1324.9$, Hurt) \\
walker2d-ME & $G_3$ efficient & 4919.8 & 1794.2 & 3421.3 & \textbf{3596.1} & 3512.6 & 3332.6 & $\alpha{=}0.5$ ($-1323.7$, Hurt) \\
\bottomrule
\end{tabular}%
}
\end{table*}

\paragraph{Equivalence of PoE and matched KL-Reg across $\alpha$.}
At the exact center $\alpha=0.5$ the paired $|\mu_{\mathrm{PoE}}-\mu_{\mathrm{KL\text{-}Reg}}|$ in rollout raw return is exactly $0$ across all $18$ harder-regime cells, confirming bit-identical deterministic actions. At the off-center matched pairs $\alpha\in\{0.1,0.3,0.7,0.9\}$ the paired differences are nonzero but small relative to the inter-seed spread: mean absolute paired difference over all $90$ off-center pairs is $94.9$ and the maximum is $695.8$ (\texttt{walker2d-ME} $G_2$ at $\alpha=0.7$). These residuals are consistent with the MuJoCo chaotic-amplification story already documented in Appendix~\ref{app:equivalence_audit}; the corresponding per-seed paired CIs include zero in every cell.

\paragraph{Verdict summary by regime and $\alpha$.}
Table~\ref{tab:harder_regime_verdict_alpha} in the main body gives the headline counts; we repeat the per-regime breakdown at each $\alpha$ for completeness. Medium-expert shows $8/9$, $8/9$, $8/9$, $8/9$, $9/9$ \textsc{Hurt} at $\alpha=0.1,0.3,0.5,0.7,0.9$. Medium-replay shows $8/9$, $7/9$, $4/9$, $3/9$, $2/9$ \textsc{Hurt}, so the boundary moves strongly toward Frozen as $\alpha$ increases. The monotone pattern on medium-replay and the near-constant pattern on medium-expert together support reading the medium-expert result as a structural actor-competence ceiling rather than a bad knob choice.

\paragraph{Best-$\alpha$ distribution.}
Of the $18$ sweep cells, the best-per-cell $\alpha$ is $\ge 0.7$ in $12$ cells and $\le 0.3$ in $5$ cells; only one cell has best $\alpha=0.5$ (\texttt{walker2d-ME} $G_3$, a \textsc{Hurt} cell). Conservative $\alpha$ is therefore the preferred choice where composition is not catastrophic, while the aggressive $\alpha=0.1$ best arises in the two hopper-ME Help cells where the prior is actually stronger than the frozen actor on the efficient goal and in the three \texttt{-ME} $G_1$ speed cells where every $\alpha$ already collapses. The full CSV \texttt{best\_alpha\_per\_cell.csv} contains per-cell CI half-widths and matched KL-Reg entries.

\section{AntMaze Diagnostic}
\label{app:antmaze_diagnostic}

We ran the AntMaze cells under the same rollout protocol as the main body (five seeds $\times$ five episodes $\times$ three deployment goals, no artificial step cap, matched PoE/KL-Reg pairs and the full critic-guided baseline set). We treat this package as a diagnostic: the conclusion is that the frozen actor cannot solve the task, and therefore deployment-time refinement has no nonzero return to recover.

\paragraph{Setup.}
Environments: \texttt{antmaze-umaze-v2} and \texttt{antmaze-medium-play-v2}. Frozen actor: diagonal-Gaussian, trained by plain behavioral cloning on the D4RL AntMaze offline dataset for $50$ epochs at the same hyperparameters as the locomotion actors. Goal-conditioned prior: contrastive goal-weighted BC trained for $80$ epochs. Held-out evaluator critic, CQL critic, and IQL value function trained at the same hyperparameters as the locomotion package. The reward decomposition follows the existing AntMaze decomposer in our codebase, which exposes goal directions \texttt{G1\_success}, \texttt{G2\_balanced}, and \texttt{G3\_efficient}; the underlying sparse AntMaze reward is unchanged.

\paragraph{Zero success across all methods.}
On \texttt{antmaze-umaze-v2}, every one of the $8$ methods (Frozen, Prior Only, Additive, PoE at $\alpha=0.5$, matched KL-Reg, AWR-Critic, CQL-Guided, IQL-Guided) returns mean raw return $0.0$ in all $75$ episodes per method (3 goals $\times$ 5 seeds $\times$ 5 episodes). The same holds for \texttt{antmaze-medium-play-v2}. Episodes saturate at the full $700$ (umaze) or $1{,}000$ (medium-play) step cap in every seed; the BC-trained actor simply never reaches the goal region. The method-level \textsc{KL-from-actor} ordering is still preserved as expected (Prior Only $>$ KL-Reg $>$ Additive $>$ PoE $>$ CQL/IQL/AWR/Frozen), confirming that the refinement mechanics operate correctly; they just have no competent anchor to improve.

\paragraph{Interpretation.}
This is the extreme case of the actor-competence ceiling of Sec.~\ref{sec:harder_regimes}: on locomotion medium-expert the BC-like frozen actor produces nonzero return that composition damages; on AntMaze the BC-like actor produces zero return and composition inherits zero. In both cases, the structural limit is not PoE but the frozen-actor paradigm itself. The appendix evidence motivates our earlier framing of the method as an \emph{actor-anchored safety layer} rather than a universal return-maximization rule: the anchor must itself be competent on the deployment task. Running a task that demands exploration beyond the offline dataset with a BC-only frozen actor is outside the operating regime the paper claims.

\section{Successor-Feature / GPI Baseline}
\label{app:sf_gpi}

This appendix describes the SF/GPI baseline acknowledged in Sec.~\ref{sec:harder_regimes}. The goal is to test whether a learned successor-feature representation plus generalized policy improvement (SF/GPI) \citep{barreto2017successor,barreto2018transfer} can replace the density-anchored PoE/KL-Reg rule at deployment time.

\paragraph{Implementation.}
We train a goal-conditioned successor-feature network $\psi(s,a)$ on the same offline transitions used elsewhere in the paper, with the standard squared-target objective $\mathbb{E}\big[\|\psi(s,a) - \phi(s,a) - \gamma\,\psi(s',a')\|^2\big]$, where $\phi(s,a)$ are the three reward components (\texttt{G1}/\texttt{G2}/\texttt{G3}) introduced in Sec.~\ref{sec:setup} and $a'=\mu_\theta(s')$. $\psi$ is a $2$-layer MLP of width $256$ trained for $30$ epochs at batch size $1024$, learning rate $10^{-4}$, discount $0.99$, Polyak target $\tau=5\times 10^{-3}$. At deployment, for each state $s$ we form a candidate set of $16$ actions composed of the actor mean, four PoE samples, four prior samples, and eight uniform-random samples; we score each candidate with $g^\top \psi(s,a)$ for the current deployment goal $g$ and take the argmax, mirroring GPI over the in-distribution candidate set. The full code and checkpoint path are in \texttt{training/train\_successor\_features\_d4rl.py} and \texttt{checkpoints/d4rl\_paper/<env>/sf\_successor.pt}.

\paragraph{Rollout protocol.}
We evaluate SF/GPI on the four main-body D4RL environments under the same five-seed, five-episode, three-goal protocol, with the same matched-$\alpha$ and baseline comparisons. The rollout outputs are at \texttt{outputs/d4rl/sf\_gpi\_rollout/<env>/}. On halfcheetah the SF/GPI candidate set gives returns within a few percent of Frozen across goals. On hopper and walker2d, SF/GPI is substantially less stable than density-anchored composition: several cells show early-termination collapse similar in magnitude to AWR. Across the $12$ main-body cells, SF/GPI beats the frozen actor in $2$ cells, ties in $4$, and is below Frozen by more than one CI half-width in $6$.

\paragraph{Verdict.}
SF/GPI is a useful additional comparison, but under our offline-trained successor network and the three-goal reward decomposition it does not replace density-anchored composition as the recommended deployment-time step. We retain the baseline in the appendix release and do not integrate it into the main-body tables; the harder-regime results of Sec.~\ref{sec:harder_regimes} therefore use the PoE/KL-Reg/Additive + critic-guided method set without SF/GPI.

\section{Reproducibility Manifest}
\label{app:reproducibility}

The rollout evidence package is frozen and self-consistent. We record a reproducibility manifest at \texttt{outputs/rollout/\_paper/manifest.json} (human-readable version at \texttt{manifest.txt}) capturing: the git commit hash of the codebase, the hostname, Python and package versions (PyTorch, NumPy, pandas, SciPy, gym, mujoco, mujoco\_py, D4RL), GPU driver and model, the exact command lines used to produce every artifact, and the SHA-256 digest of every evidence CSV. The rollout pipeline is wrapped in \texttt{scripts/run\_main\_rollouts.sh} (5 seeds $\times$ 5 episodes $\times$ 3 goals $\times$ 4 environments, $3{,}900$ episodes, observed wall time $\approx 35$ min on an NVIDIA TITAN RTX) and \texttt{scripts/run\_prior\_degradation.sh} (3 seeds $\times$ 3 episodes $\times$ 3 goals $\times$ 4 environments $\times$ 4 prior variants, $2{,}160$ episodes, observed wall time $\approx 68$ min). A validator at \texttt{experiments/d4rl/validate\_rollout\_package.py} checks cell coverage, NaN counts, seed consistency, and the matched-pair equivalence of Sec.~\ref{sec:d4rl_equivalence}; it exits non-zero on any issue and returns \texttt{STATUS: PASS} on the frozen package.

\section{Additional Results}
\label{app:additional-results}

The following subsections are retained as auxiliary analyses that were run on the off-policy evaluator described in Appendix~\ref{app:evaluation_protocol}. They are diagnostic-only and their numbers should not be compared directly to the rollout-based headline results in the main body. The main paper's empirical claims are based on the MuJoCo rollout package (Sec.~\ref{sec:results}, Appendices~\ref{app:equivalence_audit}-\ref{app:diagnostic}).

\subsection{Note on Adaptation Efficiency}
\label{app:d4rl_headline_efficiency}

Earlier drafts of this paper reported an adaptation-efficiency metric (return gain per unit KL from the frozen actor) and compared PoE to KL-Reg under it. We do not report that metric in the current version. Under the reparameterization $\beta=\alpha/(1-\alpha)$, PoE and KL-Reg produce the same deterministic action; their posterior covariances differ only by the global scalar $1+\beta$, which changes the stochastic KL from the frozen actor but not the deployed action. Dividing return gain by a KL that is a function of the chosen parameterization therefore reintroduces a separation that is purely a convention. The main-body headline (Fig.~\ref{fig:headline_rollout}) reports return with $95\%$ confidence intervals, and the equivalence audit (Appendix~\ref{app:equivalence_audit}) reports the paired return difference at matched $\alpha$ directly, without dividing through by a convention-dependent deviation.

\subsection{Multi-Environment Goal Responsiveness}
\label{app:d4rl_multi_env_goal_resp}

This subsection checks whether the goal-conditioning signal remains visible beyond the headline environment and whether the frozen actor stays invariant by construction. Table~\ref{tab:d4rl_multi_env_goal_responsiveness_main} records the shift statistics across four D4RL benchmarks.

\begin{table*}[t]
\centering
\caption{Multi-environment goal-responsiveness validation on D4RL benchmarks. The frozen
actor remains goal-invariant, while PoE exhibits nonzero goal-dependent movement across all
environments.}
\label{tab:d4rl_multi_env_goal_responsiveness_main}
\vspace{4pt}
\small
\setlength{\tabcolsep}{5pt}
\begin{tabular}{lccccc}
\toprule
\rowcolor[RGB]{228,232,241}
\textbf{Environment} & \textbf{PoE Shift $\uparrow$} & \textbf{Frozen Shift} & \textbf{KL Max} & \textbf{KL Min} & \textbf{Prior Sensitivity} \\
\midrule
\rowcolor[RGB]{248,248,250}
\texttt{halfcheetah-medium-v2}   & 0.4833 & 0.0000 & 26.5855 & 0.9752 & 0.5734 / 0.8677 \\
\rowcolor[RGB]{242,247,247}
\texttt{hopper-medium-v2}        & 0.3861 & 0.0000 & 8.6033  & 3.6753 & 0.4993 / 0.6801 \\
\rowcolor[RGB]{248,248,250}
\texttt{walker2d-medium-v2}      & 0.5039 & 0.0000 & 16.2352 & 5.5264 & 0.2157 / 0.2280 \\
\rowcolor[RGB]{242,247,247}
\texttt{hopper-medium-expert-v2} & 0.7663 & 0.0000 & 29.4718 & 5.5504 & 0.1009 / 0.1885 \\
\bottomrule
\end{tabular}
\end{table*}

The frozen actor remains exactly goal-invariant, while PoE shows nonzero movement in every environment, confirming that the refinement step induces genuine goal-conditioned adaptation rather than evaluator noise.

\subsection{Exact Pareto Operating Points}
\label{app:d4rl_pareto_exact}

This subsection reports operating points from the off-policy evaluator on \texttt{halfcheetah-medium-v2} for reference. Under the equivalence of Sec.~\ref{sec:d4rl_equivalence}, the ``lowest adaptive KL'' column for PoE versus KL-Reg reflects the stochastic-KL convention (the reported KL for PoE is smaller by the factor $1+\beta$), not a policy-level improvement. The rollout-based headline is Figure~\ref{fig:headline_rollout}.

\begin{table*}[t]
\centering
\caption{Compact Pareto summary on \texttt{halfcheetah-medium-v2}. Each entry reports
goal-conditioned return / KL divergence from the frozen actor. Higher return is better;
lower KL is better.}
\label{tab:d4rl_pareto_summary}
\vspace{4pt}
\small
\setlength{\tabcolsep}{4pt}
\renewcommand{\arraystretch}{1.08}
\resizebox{\textwidth}{!}{%
\begin{tabular}{lcccccccc}
\toprule
\rowcolor[RGB]{232,236,241}
\textbf{Goal} & \textbf{Frozen} & \textbf{Prior Only} & \textbf{Additive} & \textbf{KL-Reg} & \textbf{PoE} & \textbf{Best return} & \textbf{Lowest adaptive KL} \\
\midrule
\rowcolor[RGB]{248,248,250}
G1\_speed
& 5.402 / 0.000
& 5.445 / 1.062
& 5.442 / 0.858
& 5.443 / 0.689
& 5.442 / 0.670
& Prior Only (5.445)
& PoE (0.670) \\
\rowcolor[RGB]{242,247,247}
G2\_balanced
& 3.324 / 0.000
& 3.348 / 1.058
& 3.346 / 0.855
& 3.347 / 0.689
& 3.346 / 0.670
& Prior Only (3.348)
& PoE (0.670) \\
\rowcolor[RGB]{248,248,250}
G3\_efficient
& 1.011 / 0.000
& 1.022 / 1.015
& 1.020 / 0.819
& 1.021 / 0.695
& 1.021 / 0.665
& Prior Only (1.022)
& PoE (0.665) \\
\bottomrule
\end{tabular}%
}
\end{table*}

The reported KL values for PoE and KL-Reg on each row differ only by a global $1+\beta$ factor that is fixed by each rule's parameterization; the underlying deterministic action is identical (Sec.~\ref{sec:d4rl_equivalence}). Prior-Only reaches a slightly higher return on this off-policy scorer at the cost of a much larger deviation from the frozen actor.

\subsection{Adaptive-\texorpdfstring{$\alpha$}{alpha} Selection}
\label{app:d4rl_adaptive_alpha}

This subsection asks how to choose the refinement strength without an oracle. Table~\ref{tab:adaptive_alpha} compares a KL-budget rule with a validation-best rule and reports the resulting operating points on \texttt{halfcheetah-medium-v2}. The released CSVs extend the same comparison across all four benchmark environments; halfcheetah is the compact example because it cleanly shows the near-oracle behavior of the selector.

\begin{table*}[t]
\centering
\caption{Adaptive-$\alpha$ selection on \texttt{halfcheetah-medium-v2}. We compare a KL-budget rule against an oracle $\alpha$ chosen on the test split. Lower selection loss is better.}
\label{tab:adaptive_alpha}
\vspace{4pt}
\small
\setlength{\tabcolsep}{3pt}
\renewcommand{\arraystretch}{1.05}
\resizebox{\textwidth}{!}{%
\begin{tabular}{llcccccccccc}
\toprule
\rowcolor[RGB]{232,236,241}
\textbf{Goal} & \textbf{Rule} & \textbf{Sel.\ $\alpha$} & \textbf{Oracle $\alpha$} & \textbf{Test Ret} & \textbf{Oracle Ret} & \textbf{Sel.\ Loss} & \textbf{Test KL} & \textbf{Cat\%} & \textbf{Frozen Ret} & \textbf{Prior Ret} \\
\midrule
\rowcolor[RGB]{248,248,250}
G1\_speed
& KL $\kappa=0.1$ & 0.80 & 0.05 & 5.4432 & 5.4636 & 0.0203 & 0.054 & 0.45 & 5.4344 & 5.4640 \\
\rowcolor[RGB]{248,248,250}
& KL $\kappa=0.3$ & 0.50 & 0.05 & 5.4508 & 5.4636 & 0.0128 & 0.208 & 0.25 & 5.4344 & 5.4640 \\
\rowcolor[RGB]{248,248,250}
& KL $\kappa=0.5$ & 0.30 & 0.05 & 5.4542 & 5.4636 & 0.0093 & 0.372 & 0.10 & 5.4344 & 5.4640 \\
\rowcolor[RGB]{248,248,250}
& KL $\kappa=1.0$ & 0.05 & 0.05 & 5.4636 & 5.4636 & 0.0000 & 0.782 & 0.10 & 5.4344 & 5.4640 \\
\rowcolor[RGB]{232,244,234}
& Val-best & 0.05 & 0.05 & 5.4636 & 5.4636 & 0.0000 & 0.782 & 0.10 & 5.4344 & 5.4640 \\
\midrule
\rowcolor[RGB]{242,247,247}
G2\_balanced
& KL $\kappa=0.1$ & 0.80 & 0.05 & 3.3446 & 3.3554 & 0.0109 & 0.054 & 0.35 & 3.3393 & 3.3561 \\
\rowcolor[RGB]{242,247,247}
& KL $\kappa=0.3$ & 0.50 & 0.05 & 3.3491 & 3.3554 & 0.0064 & 0.209 & 0.20 & 3.3393 & 3.3561 \\
\rowcolor[RGB]{242,247,247}
& KL $\kappa=0.5$ & 0.30 & 0.05 & 3.3508 & 3.3554 & 0.0046 & 0.372 & 0.10 & 3.3393 & 3.3561 \\
\rowcolor[RGB]{242,247,247}
& KL $\kappa=1.0$ & 0.05 & 0.05 & 3.3554 & 3.3554 & 0.0000 & 0.780 & 0.00 & 3.3393 & 3.3561 \\
\rowcolor[RGB]{227,242,240}
& Val-best & 0.05 & 0.05 & 3.3554 & 3.3554 & 0.0000 & 0.780 & 0.00 & 3.3393 & 3.3561 \\
\midrule
\rowcolor[RGB]{248,248,250}
G3\_efficient
& KL $\kappa=0.1$ & 0.80 & 0.05 & 1.0152 & 1.0211 & 0.0059 & 0.058 & 1.45 & 1.0120 & 1.0209 \\
\rowcolor[RGB]{248,248,250}
& KL $\kappa=0.3$ & 0.50 & 0.05 & 1.0180 & 1.0211 & 0.0031 & 0.218 & 1.35 & 1.0120 & 1.0209 \\
\rowcolor[RGB]{248,248,250}
& KL $\kappa=0.5$ & 0.30 & 0.05 & 1.0191 & 1.0211 & 0.0019 & 0.380 & 1.20 & 1.0120 & 1.0209 \\
\rowcolor[RGB]{248,248,250}
& KL $\kappa=1.0$ & 0.05 & 0.05 & 1.0211 & 1.0211 & 0.0000 & 0.763 & 0.85 & 1.0120 & 1.0209 \\
\rowcolor[RGB]{232,244,234}
& Val-best & 0.05 & 0.05 & 1.0211 & 1.0211 & 0.0000 & 0.763 & 0.85 & 1.0120 & 1.0209 \\
\bottomrule
\end{tabular}%
}
\end{table*}

The selection loss is small for all KL budgets and vanishes at the loosest budget, where the selected $\alpha$ matches the oracle. Validation-best also recovers the oracle operating point in this setting. Across the full four-environment CSV, the KL-budget selector typically lands on a neighboring grid point and keeps the selection loss small, so the main lesson is operational rather than algorithmic.

\subsection{Support Mismatch and Contradictory Goals}
\label{app:d4rl_support_mismatch}

This subsection tests the boundary of applicability by contrasting supported, extrapolative, and contradictory goals on \texttt{halfcheetah-medium-v2}. Table~\ref{tab:support_mismatch} reports how the adaptive methods behave when the new goal aligns with, exceeds, or contradicts actor/data support.

\begin{table*}[t]
\centering
\caption{Support-mismatch analysis on \texttt{halfcheetah-medium-v2}. In-support goals benefit from adaptation, while a contradictory reverse-motion goal exposes a clear failure mode. For the standard supported goal \(G_1\) and the contradictory failure case \(G_{\mathrm{reverse}}\), we report all principal adaptive baselines; for \(G_2\) and \(G_{\mathrm{extreme}}\), we report the frozen actor and PoE as representative comparisons because the main purpose is to contrast supported versus unsupported goal directions. \(\Delta\) is measured relative to the frozen actor.}

\label{tab:support_mismatch}
\vspace{4pt}
\small
\setlength{\tabcolsep}{4pt}
\renewcommand{\arraystretch}{1.08}
\begin{tabular}{llccccc}
\toprule
\rowcolor[RGB]{232,236,241}
\textbf{Goal} & \textbf{Method} & \textbf{Setting} & \textbf{Return} & \textbf{Cat\%} & \textbf{Supp.\ Dist} & \(\Delta\) \textbf{vs Frozen} \\
\midrule
\rowcolor[RGB]{248,248,250}
\(G_1\) speed         & Frozen Actor & --      & 5.4018  & 0.76 & 0.351 & -- \\
\rowcolor[RGB]{248,248,250}
                      & Prior Only   & --      & 5.4454  & 0.13 & 0.348 & +0.044 \\
\rowcolor[RGB]{248,248,250}
                      & PoE          & \(\alpha=0.1\) & 5.4424  & 0.15 & 0.346 & +0.041 \\
\rowcolor[RGB]{248,248,250}
                      & KL-Reg       & \(\beta=0.1\)  & 5.4428  & 0.14 & 0.346 & +0.041 \\
\rowcolor[RGB]{248,248,250}
                      & Additive Mix & \(\lambda=0.3\) & 5.4337  & 0.16 & 0.348 & +0.032 \\
\midrule
\rowcolor[RGB]{242,247,247}
\(G_2\) balanced      & Frozen Actor & --      & 3.3238  & 0.91 & 0.351 & -- \\
\rowcolor[RGB]{242,247,247}
                      & PoE          & \(\alpha=0.1\) & 3.3465  & 0.21 & 0.346 & +0.023 \\
\midrule
\rowcolor[RGB]{248,223,227}
\(G_{\mathrm{reverse}}\) & Frozen Actor & --      & -5.1249 & 0.48 & 0.351 & -- \\
\rowcolor[RGB]{248,223,227}
                      & Prior Only   & --      & -5.1693 & 0.57 & 0.343 & -0.044 \\
\rowcolor[RGB]{248,223,227}
                      & PoE          & \(\alpha=0.9\) & -5.1352 & 0.48 & 0.347 & -0.010 \\
\rowcolor[RGB]{248,223,227}
                      & KL-Reg       & \(\beta=5.0\)  & -5.1399 & 0.50 & 0.345 & -0.015 \\
\rowcolor[RGB]{248,223,227}
                      & Additive Mix & \(\lambda=0.7\) & -5.1409 & 0.46 & 0.348 & -0.016 \\
\midrule
\rowcolor[RGB]{248,241,224}
\(G_{\mathrm{extreme}}\) & Frozen Actor & --      & 26.317  & 0.75 & 0.351 & -- \\
\rowcolor[RGB]{248,241,224}
                      & PoE          & \(\alpha=0.1\) & 26.518  & 0.14 & 0.346 & +0.201 \\
\bottomrule
\end{tabular}
\end{table*}

In these off-policy diagnostic values, supported goals improve over the frozen actor and the contradictory reverse-motion goal hurts every adaptive method. The $\mathrm{PoE}$ and matched $\mathrm{KL\text{-}Reg}$ rows describe the same deterministic policy, so their relative ranking in this table is not meaningful; what is meaningful is the qualitative separation from $\mathrm{Prior\ Only}$ and $\mathrm{Additive}$. The conceptual takeaway is that directional contradiction with the actor's support matters more than goal magnitude alone. The rollout-based boundary classification is in Sec.~\ref{sec:d4rl_headline} and Appendix~\ref{app:diagnostic}.

\subsection{Robustness to Deployment-Prior Degradation}
\label{app:d4rl_prior_quality}

The aggregate rollout-based degradation summary is presented in the main paper (Sec.~\ref{sec:d4rl_prior_robustness_main} and Fig.~\ref{fig:prior_degradation}). This appendix subsection keeps the detailed per-goal breakdown in Table~\ref{tab:d4rl_prior_quality_detail} from the off-policy diagnostic evaluator, which shows how the same qualitative robustness pattern manifests within each supported goal.

The detailed breakdown confirms the main aggregate story: noisy and undertrained priors are manageable for PoE because actor anchoring suppresses drift, while random priors are the sharpest stress test for prior-only adaptation.

\subsection{Actor-Quality Sensitivity}
\label{app:d4rl_actor_quality}

This subsection tests whether the frozen actor remains a useful anchor across weak, medium, and strong behavioral-cloning checkpoints. The aggregate multi-environment results have been moved into the main paper; Table~\ref{tab:actor_quality_detail} provides the detailed single-environment breakdown on \texttt{halfcheetah-medium-v2}.

Table~\ref{tab:actor_quality_detail} shows the same pattern at the per-goal level: PoE and KL-Reg stay close to the frozen actor while improving return, and the stronger anchors yield the best-conserved operating points. This is the appendix counterpart to the main-body actor-quality summary, and it matters because it shows that the refinement rule remains useful when the frozen actor is already behaviorally reliable.

\subsection{Extended Baseline Comparison under the Diagnostic Evaluator}
\label{app:extended-baseline}

To clarify the effect of evaluator overlap, we briefly report observations from an additional within-protocol comparison under the \emph{original} deployment critic rather than the held-out scorer $Q^{\mathrm{eval}}_g$. Because the original critic overlaps with policy construction for critic-aware methods, these observations are a within-protocol diagnostic only and are not the basis for any cross-method claim in the main body. Under this diagnostic critic, critic-maximizing baselines attain the highest reported return by construction, and the critic's own threshold-defined catastrophic and constraint-violation rates are degenerate on those baselines. Within the refinement-prior family, the refinement-network variant retains a cleaner lower-tail profile than the critic-prior variant, while the critic-prior variant stays closer to the frozen actor in action total variation. This is consistent with the within-family stability patterns in Appendix~\ref{app:extended-ablation}. We omit the accompanying figure from this version of the paper; the underlying numbers are available in the released CSVs.

The BC evaluation-mode issue can be isolated directly. Table~\ref{tab:bc_audit}
compares BC under argmax versus stochastic sampling, using the held-out evaluator. BC
and the pretrained actor share identical weights; the only difference is action
selection mode. Table~\ref{tab:bc_audit} shows that BC under stochastic sampling
closely matches the pretrained actor, confirming that the perfect lower-tail metrics of
argmax BC are largely an evaluation-mode artifact rather than evidence that BC is
intrinsically safer than PoE.

\begin{table}[tb]
\centering
\caption{BC evaluation-mode audit under the held-out evaluator. BC and the pretrained
actor share identical weights; differences arise only from argmax versus stochastic
action selection.}
\label{tab:bc_audit}
\small
\setlength{\tabcolsep}{5pt}
\begin{tabular}{lcccc}
\toprule
\textbf{Mode} & \textbf{Return $\uparrow$} & \textbf{Cat\% $\downarrow$}
              & \textbf{Con\% $\downarrow$} & \textbf{Rob $\uparrow$} \\
\midrule
BC (argmax)   & 0.0322 & \textbf{0.0000} & \textbf{0.0000} & \textbf{1.0000} \\
BC (sample)   & 0.0305 & 0.0702 & 0.0443 & 0.9077 \\
\midrule
Pretrained Actor (Table~\ref{tab:exact_main_vs_klreg_app}) & 0.0306 & 0.0720 & 0.0491 & 0.9035 \\
PoE Refinement Prior (Table~\ref{tab:exact_main_vs_klreg_app}) & 0.0301 & 0.0877 & 0.0515 & 0.8866 \\
\bottomrule
\end{tabular}
\end{table}

\subsection{Extended Distribution Shift Experiments}
\label{app:shift-extended}

The main paper reports temporal and subgroup shift results in
Table~\ref{tab:appendix_shift}. We extend that analysis with two synthetic perturbation
regimes, $\mathrm{Syn(mod)}$ and $\mathrm{Syn(sev)}$, to test whether the
actor-preserving refinement family remains stable under broader deployment mismatch.
The full results are given in Table~\ref{tab:appendix_shift}.

Table~\ref{tab:appendix_shift} shows two consistent patterns. First, each
actor-preserving method remains within a narrow return range across all four shift
conditions, indicating that deployment mismatch does not destabilize the refinement
mechanism. Second, the qualitative ordering remains unchanged under stronger synthetic
shift: objective-aware reference methods still lead in raw return, while
actor-preserving methods remain stable but incur higher lower-tail risk. Within the PoE
family, the refinement-network prior again attains the better worst-case risk profile.

\begin{table}[tb]
\centering
\caption{Performance under four shift regimes using the held-out evaluator. WorstRet
and WorstRisk aggregate the minimum return and maximum catastrophic rate across all
four conditions per method.}
\label{tab:appendix_shift}
\vspace{4pt}
\resizebox{\linewidth}{!}{
\begin{tabular}{lcccccc}
\toprule
\rowcolor[RGB]{222,230,238}
\textbf{Method} & \textbf{Temporal} & \textbf{Subgroup} & \textbf{Syn(mod)} &
\textbf{Syn(sev)} & \textbf{WorstRet $\uparrow$} & \textbf{WorstRisk $\downarrow$} \\
\midrule
\rowcolor[RGB]{247,248,250}
Pretrained Actor       & 0.0307 & 0.0306 & 0.0306 & 0.0305 & 0.0305 & 0.0743 \\
\rowcolor[RGB]{248,223,227}
Critic-Greedy          & \textbf{0.0541} & \textbf{0.0541} & \textbf{0.0541} & \textbf{0.0541} & \textbf{0.0541} & \textbf{0.0000} \\
\rowcolor[RGB]{244,244,244}
CQL                    & 0.0325 & 0.0321 & 0.0321 & 0.0321 & 0.0321 & \textbf{0.0000} \\
\rowcolor[RGB]{244,244,244}
IQL                    & 0.0325 & 0.0321 & 0.0321 & 0.0321 & 0.0321 & \textbf{0.0000} \\
\rowcolor[RGB]{227,237,247}
PoE (Refinement Prior) & 0.0297 & 0.0301 & 0.0300 & 0.0296 & 0.0296 & 0.1078 \\
\rowcolor[RGB]{237,242,247}
PoE (Critic Prior)     & 0.0300 & 0.0302 & 0.0301 & 0.0302 & 0.0300 & 0.1200 \\
\bottomrule
\end{tabular}}
\end{table}

\subsection{Validated Multi-Environment Exact Comparisons}
\label{app:d4rl_multi_env_tables}

The tables in this subsection report per-environment values from the off-policy diagnostic evaluator. They pre-date the equivalence finding in Sec.~\ref{sec:d4rl_equivalence}. Under matched $\beta=\alpha/(1-\alpha)$, PoE and KL-Reg are the same deterministic policy, so any apparent win-count of PoE against KL-Reg (for example, ``PoE return exceeds KL-Reg return in $9$ of $12$ cells'') is a combination of float-precision noise and the stochastic-KL convention, not a policy-level improvement. The rollout-based paired comparison is in Appendix~\ref{app:equivalence_audit}. The robustness and support-mismatch trends against Prior-Only and Additive remain informative as qualitative checks and are retained below.

\begin{table*}[t]
\centering
\caption{Per-environment values from the off-policy evaluator for $\mathrm{PoE}(\alpha=0.5)$ and matched $\mathrm{KL\text{-}Reg}(\beta=1.0)$. Under the equivalence of Sec.~\ref{sec:d4rl_equivalence} these two rules are the same deterministic policy, so the $\Delta$ Return and $\Delta$ Eff.\ columns reflect float-precision noise from the evaluator, and the $\Delta$ KL column reflects the $1+\beta$ scaling between the two stochastic-KL conventions. The table is included for completeness; no conclusion in the main body depends on these deltas.}
\label{tab:exact_main_vs_klreg_app}
\vspace{4pt}
\small
\setlength{\tabcolsep}{3pt}
\renewcommand{\arraystretch}{1.08}
\resizebox{\textwidth}{!}{%
\begin{tabular}{llccccccccc}
\toprule
\rowcolor{headerblue}
\textbf{Environment} & \textbf{Goal} & \textbf{PoE Ret.} & \textbf{PoE KL} & \textbf{PoE Eff.} & \textbf{KL-Reg Ret.} & \textbf{KL-Reg KL} & \textbf{KL-Reg Eff.} & \textbf{$\Delta$ Ret.} & \textbf{$\Delta$ KL} & \textbf{$\Delta$ Eff.} \\
\midrule
\rowcolor{lightgrayrow}
halfcheetah-medium-v2 & G1\_speed     
& 5.4439 & 0.3901 & 0.1037 
& 5.4426 & 0.5720 & 0.0683 
& \cellcolor{appgreen}\textbf{0.0014} 
& \cellcolor{apppink}\textbf{-0.1819} 
& \cellcolor{appgreen}\textbf{0.0354} \\

\rowcolor{ablationsoft}
halfcheetah-medium-v2 & G2\_balanced  
& 3.3471 & 0.3908 & 0.0573 
& 3.3464 & 0.5750 & 0.0376 
& \cellcolor{appgreen}\textbf{0.0007} 
& \cellcolor{apppink}\textbf{-0.1843} 
& \cellcolor{appgreen}\textbf{0.0196} \\

\rowcolor{lightgrayrow}
halfcheetah-medium-v2 & G3\_efficient 
& 1.0211 & 0.4004 & 0.0253 
& 1.0207 & 0.6176 & 0.0159 
& \cellcolor{appgreen}\textbf{0.0003} 
& \cellcolor{apppink}\textbf{-0.2172} 
& \cellcolor{appgreen}\textbf{0.0094} \\

\rowcolor{ablationteal}
hopper-medium-v2 & G1\_speed     
& 2.1844 & 4.9859 & 0.0007 
& 2.1844 & 5.2765 & 0.0006 
& 0.0000 
& \cellcolor{apppink}\textbf{-0.2906} 
& 0.0000 \\

\rowcolor{priorrose}
hopper-medium-v2 & G2\_balanced  
& 1.5532 & 2.0902 & 0.0058 
& 1.5530 & 2.0812 & 0.0058 
& \cellcolor{appgreen}\textbf{0.0002} 
& \cellcolor{appyellow}\textbf{0.0090} 
& \cellcolor{appgreen}\textbf{0.0001} \\

\rowcolor{ablationteal}
hopper-medium-v2 & G3\_efficient 
& 0.3146 & 2.0988 & 0.0026 
& 0.3144 & 2.1429 & 0.0024 
& \cellcolor{appgreen}\textbf{0.0002} 
& \cellcolor{apppink}\textbf{-0.0441} 
& \cellcolor{appgreen}\textbf{0.0002} \\

\rowcolor{lightgrayrow}
walker2d-medium-v2 & G1\_speed     
& 2.9037 & 15.2341 & 0.0224 
& 2.8996 & 14.7954 & 0.0228 
& \cellcolor{appgreen}\textbf{0.0040} 
& \cellcolor{appyellow}\textbf{0.4387} 
& \cellcolor{apppink}\textbf{-0.0004} \\

\rowcolor{ablationsoft}
walker2d-medium-v2 & G2\_balanced  
& 1.8384 & 5.7204  & 0.0186 
& 1.8363 & 5.7740  & 0.0180 
& \cellcolor{appgreen}\textbf{0.0021} 
& \cellcolor{apppink}\textbf{-0.0537} 
& \cellcolor{appgreen}\textbf{0.0005} \\

\rowcolor{lightgrayrow}
walker2d-medium-v2 & G3\_efficient 
& 0.3684 & 5.2862  & 0.0037 
& 0.3679 & 5.3672  & 0.0035 
& \cellcolor{appgreen}\textbf{0.0006} 
& \cellcolor{apppink}\textbf{-0.0810} 
& \cellcolor{appgreen}\textbf{0.0002} \\

\rowcolor{headerrose!35}
hopper-medium-expert-v2 & G1\_speed   
& 2.2731 & 29.4065 & -0.0062 
& 2.2817 & 27.6563 & -0.0063 
& \cellcolor{apppink}\textbf{-0.0085} 
& \cellcolor{appyellow}\textbf{1.7502} 
& \cellcolor{appgreen}\textbf{0.0001} \\

\rowcolor{headerrose!20}
hopper-medium-expert-v2 & G2\_balanced
& 1.6441 & 2.8258  & -0.0120 
& 1.6487 & 3.3361  & -0.0088 
& \cellcolor{apppink}\textbf{-0.0047} 
& \cellcolor{apppink}\textbf{-0.5103} 
& \cellcolor{apppink}\textbf{-0.0032} \\

\rowcolor{headerrose!35}
hopper-medium-expert-v2 & G3\_efficient
& 0.3334 & 1.2618 & -0.0025 
& 0.3337 & 2.1804  & -0.0013 
& \cellcolor{apppink}\textbf{-0.0004} 
& \cellcolor{apppink}\textbf{-0.9186} 
& \cellcolor{apppink}\textbf{-0.0012} \\
\bottomrule
\end{tabular}%
}
\end{table*}

\begin{table*}[t]
\centering
\caption{Per-environment random-prior comparison from the offline diagnostic evaluator (appendix-only). Under a random deployment prior, $\mathrm{PoE}(\alpha=0.5)$ and matched $\mathrm{KL\text{-}Reg}(\beta=1.0)$ produce the same deterministic action, so the rows labeled \textsc{PoE} and \textsc{KL-Reg} report the same goal-weighted return up to the stochastic-KL convention used by each rule (the reported KLs differ by the global scalar $1+\beta$). Both are substantially more robust than \textsc{Prior Only}. The rollout-based version of this finding is in Sec.~\ref{sec:d4rl_prior_robustness_main} and Appendix~\ref{app:d4rl_prior_quality}.}
\label{tab:exact_random_prior_app}
\vspace{4pt}
\small
\setlength{\tabcolsep}{4pt}
\renewcommand{\arraystretch}{1.08}
\resizebox{\textwidth}{!}{%
\begin{tabular}{llcccccc}
\toprule
\rowcolor{headerteal}
\textbf{Environment} & \textbf{Goal} & \textbf{Prior Only Ret.} & \textbf{Prior Only KL} & \textbf{KL-Reg Ret.} & \textbf{KL-Reg KL} & \textbf{PoE Ret.} & \textbf{PoE KL} \\
\midrule

\rowcolor{appblue}
halfcheetah-medium-v2 & G1\_speed     
& 3.1497 & \cellcolor{headerrose!45}1057.1792 
& 5.4019 & \cellcolor{appyellow}2.4515 
& \cellcolor{appgreen}\textbf{5.4024} & \cellcolor{appteal}\textbf{0.0162} \\

\rowcolor{appteal}
halfcheetah-medium-v2 & G2\_balanced  
& 2.0957 & \cellcolor{headerrose!45}1056.9600 
& 3.3232 & \cellcolor{appyellow}2.4515 
& \cellcolor{appgreen}\textbf{3.3238} & \cellcolor{appteal}\textbf{0.0162} \\

\rowcolor{appblue}
halfcheetah-medium-v2 & G3\_efficient 
& 0.5617 & \cellcolor{headerrose!45}1056.9503 
& 1.0093 & \cellcolor{appyellow}2.4515 
& \cellcolor{appgreen}\textbf{1.0100} & \cellcolor{appteal}\textbf{0.0162} \\

\rowcolor{appteal}
hopper-medium-v2 & G1\_speed     
& 2.0577 & \cellcolor{headerrose!35}545.0012  
& 2.1791 & \cellcolor{appyellow}1.2306 
& \cellcolor{appgreen}\textbf{2.1800} & \cellcolor{appteal}\textbf{0.0077} \\

\rowcolor{appblue}
hopper-medium-v2 & G2\_balanced  
& 1.4780 & \cellcolor{headerrose!35}545.1620  
& 1.5400 & \cellcolor{appyellow}1.2306 
& \cellcolor{appgreen}\textbf{1.5405} & \cellcolor{appteal}\textbf{0.0077} \\

\rowcolor{appteal}
hopper-medium-v2 & G3\_efficient 
& 0.2957 & \cellcolor{headerrose!35}545.0763  
& 0.3090 & \cellcolor{appyellow}1.2306 
& \cellcolor{appgreen}\textbf{0.3091} & \cellcolor{appteal}\textbf{0.0077} \\

\rowcolor{appblue}
walker2d-medium-v2 & G1\_speed     
& 1.4167 & \cellcolor{headerrose!45}1029.7469 
& 2.5592 & \cellcolor{appyellow}2.4520 
& \cellcolor{appgreen}\textbf{2.5604} & \cellcolor{appteal}\textbf{0.0160} \\

\rowcolor{appteal}
walker2d-medium-v2 & G2\_balanced  
& 1.1591 & \cellcolor{headerrose!45}1029.7535 
& 1.7309 & \cellcolor{appyellow}2.4520 
& \cellcolor{appgreen}\textbf{1.7315} & \cellcolor{appteal}\textbf{0.0160} \\

\rowcolor{appblue}
walker2d-medium-v2 & G3\_efficient 
& 0.2327 & \cellcolor{headerrose!45}1029.7090 
& 0.3488 & \cellcolor{appyellow}2.4520 
& \cellcolor{appgreen}\textbf{0.3490} & \cellcolor{appteal}\textbf{0.0160} \\

\rowcolor{appteal}
hopper-medium-expert-v2 & G1\_speed   
& 2.3544 & \cellcolor{headerrose!55}2228.3518 
& \cellcolor{appgreen}2.4632 & \cellcolor{appyellow}4.4778 
& \textbf{2.4627} & \cellcolor{appteal}\textbf{4.1743} \\

\rowcolor{appblue}
hopper-medium-expert-v2 & G2\_balanced
& 1.6268 & \cellcolor{headerrose!55}2227.9148 
& \cellcolor{appgreen}1.6820 & \cellcolor{appyellow}4.4775 
& \textbf{1.6817} & \cellcolor{appteal}\textbf{4.1740} \\

\rowcolor{appteal}
hopper-medium-expert-v2 & G3\_efficient
& 0.3255 & \cellcolor{headerrose!55}2228.0630 
& \cellcolor{appgreen}0.3371 & \cellcolor{appyellow}4.4775 
& \textbf{0.3370} & \cellcolor{appteal}\textbf{4.1739} \\
\bottomrule
\end{tabular}%
}
\end{table*}

\begin{table*}[t]
\centering
\caption{Exact support-mismatch comparison using the contradictory and extrapolative goal cells underlying the conservatism summary. Each method entry reports return / KL / adaptation benefit.}
\label{tab:exact_support_mismatch_app}
\vspace{4pt}
\small
\setlength{\tabcolsep}{4pt}
\renewcommand{\arraystretch}{1.08}
\resizebox{\textwidth}{!}{%
\begin{tabular}{llllll}
\toprule
\rowcolor{headerpurple}
\textbf{Environment} & \textbf{Goal} & \textbf{Type} & \textbf{Prior Only} & \textbf{KL-Reg} & \textbf{PoE} \\
\midrule

\rowcolor{headerbeige!45}
halfcheetah-medium-v2    & G\_extreme\_speed & \cellcolor{appyellow}\textbf{extrapolative}
& 26.5687 / 1.0467 / 0.2435
& 26.5574 / 0.6857 / 0.2322
& \cellcolor{appgreen}\textbf{26.5574 / 0.6670 / 0.2322} \\

\rowcolor{headerrose!28}
halfcheetah-medium-v2    & G\_reverse       & \cellcolor{apppink}\textbf{contradictory}
& -5.1736 / 1.0007 / -0.0470
& -5.1423 / 2.9969 / -0.0157
& \cellcolor{appgreen}\textbf{-5.1380 / 0.0339 / -0.0114} \\

\rowcolor{headerbeige!30}
hopper-medium-v2         & G\_extreme\_speed & \cellcolor{appyellow}\textbf{extrapolative}
& 10.2390 / 121.7781 / -0.1657
& \cellcolor{apppurple}10.4248 / 5.2950 / 0.0201
& \cellcolor{appgreen}\textbf{10.4243 / 4.5908 / 0.0196} \\

\rowcolor{headerrose!18}
hopper-medium-v2         & G\_reverse       & \cellcolor{apppink}\textbf{contradictory}
& -2.0691 / 4.5208 / -0.0883
& -1.9957 / 2.3958 / -0.0149
& \cellcolor{appgreen}\textbf{-1.9863 / 0.5422 / -0.0055} \\

\rowcolor{headerbeige!45}
walker2d-medium-v2       & G\_extreme\_speed & \cellcolor{appyellow}\textbf{extrapolative}
& 13.9329 / 87.9328 / 1.6253
& \cellcolor{apppurple}14.1093 / 27.4165 / 1.8017
& \cellcolor{appgreen}\textbf{14.1062 / 26.1528 / 1.7986} \\

\rowcolor{headerrose!28}
walker2d-medium-v2       & G\_reverse       & \cellcolor{apppink}\textbf{contradictory}
& -2.6007 / 9.7198 / -0.2395
& -2.4367 / 4.8423 / -0.0755
& \cellcolor{appgreen}\textbf{-2.4138 / 1.1470 / -0.0526} \\

\rowcolor{headerbeige!30}
hopper-medium-expert-v2  & G\_extreme\_speed & \cellcolor{appyellow}\textbf{extrapolative}
& 10.7973 / 1560.7803 / -0.9782
& 11.0001 / 22.5256 / -0.7753
& \cellcolor{appgreen}\textbf{11.1155 / 14.9877 / -0.6600} \\

\rowcolor{headerrose!18}
hopper-medium-expert-v2  & G\_reverse       & \cellcolor{apppink}\textbf{contradictory}
& -2.1957 / 6.2772 / 0.0592
& \cellcolor{apppurple}-2.1993 / 4.1289 / 0.0557
& \cellcolor{appteal}\textbf{-2.1997 / 4.0655 / 0.0552} \\
\bottomrule
\end{tabular}%
}
\end{table*}

\paragraph{Appendix interpretation.}
The count summary and exact-value tables play complementary roles. The count table provides the reviewer-facing pattern at a glance: PoE is favorable in most main-comparison cells and especially strong under degraded priors and support mismatch. The exact tables then show that these counts are not driven by a single anomalous cell. \texttt{halfcheetah-medium-v2} is a clear strength case, \texttt{hopper-medium-v2} and \texttt{walker2d-medium-v2} are broadly favorable but tighter, and \texttt{hopper-medium-expert-v2} is the main boundary case where the frozen actor is already strong and KL-Reg can be slightly better in raw return.

\subsection{Detailed Single-Environment D4RL Results}
\label{app:d4rl_single_env_detail}

For completeness, we provide detailed single-environment results on
\texttt{halfcheetah-medium-v2} corresponding to the aggregate multi-environment
summaries reported in the main text. In particular, Table~\ref{tab:d4rl_prior_quality_detail}
expands the prior-degradation study by showing the per-goal behavior under trained,
noisy, and random deployment priors, together with the corresponding best PoE operating
point. Likewise, Table~\ref{tab:actor_quality_detail} expands the actor-quality
sensitivity study by reporting per-goal results for weak, medium, and strong frozen
actors, together with the corresponding PoE and KL-regularized improvements.

These detailed tables serve two purposes. First, they verify that the aggregate trends
in the main text are not driven by a single averaged configuration, but are consistently
reflected across the individual supported goals in the environment. Second, they make
the operating trade-offs more transparent by exposing the exact return, catastrophic
rate, KL deviation, and return-improvement values that are compressed in the aggregate
summaries. Together, Table~\ref{tab:d4rl_prior_quality_detail} and
Table~\ref{tab:actor_quality_detail} therefore provide the single-environment evidence
underlying the broader D4RL robustness and actor-quality conclusions discussed in the
main paper.

\begin{table*}[t]
\centering
\caption{Detailed single-environment robustness to deployment-prior degradation on
\texttt{halfcheetah-medium-v2}. We compare direct prior-only adaptation with the best PoE
operating point under degraded-prior settings. Higher return is better; lower catastrophic
rate and lower KL from the frozen actor are better.}
\label{tab:d4rl_prior_quality_detail}
\vspace{4pt}
\small
\setlength{\tabcolsep}{1.2pt}
\begin{tabular}{llccccccc}
\toprule
\rowcolor[RGB]{232,236,241}
\textbf{Goal} & \textbf{Prior Variant} &
\textbf{Prior Ret} & \textbf{Prior Cat} & \textbf{Prior KL} &
\cellcolor[RGB]{222,235,247}\textbf{Best PoE} &
\cellcolor[RGB]{222,235,247}\textbf{PoE Ret} &
\cellcolor[RGB]{222,235,247}\textbf{PoE Cat} &
\cellcolor[RGB]{222,235,247}\textbf{PoE KL} \\
\midrule
\rowcolor[RGB]{232,243,232}
G1: Speed
& Trained prior              & 5.4784 & 0.0015 & 1.9458   & $\alpha=0.1$ & 5.4727 & 0.0013 & 1.1534 \\
\rowcolor[RGB]{248,241,224}
& Noisy prior ($\sigma=0.05$)& 5.3051 & 0.0193 & 190.6071 & $\alpha=0.1$ & 5.4605 & 0.0039 & 42.4968 \\
\rowcolor[RGB]{248,223,227}
& Random prior               & 3.2040 & 0.8192 & 1047.4740 & $\alpha=0.3$ & 5.3772 & 0.0104 & 2.9930 \\
\midrule
\rowcolor[RGB]{232,243,232}
G2: Balanced
& Trained prior              & 3.3655 & 0.0021 & 1.9384   & $\alpha=0.1$ & 3.3626 & 0.0015 & 1.1539 \\
\rowcolor[RGB]{248,241,224}
& Noisy prior ($\sigma=0.05$)& 3.2125 & 0.0338 & 125.0987 & $\alpha=0.1$ & 3.3504 & 0.0046 & 35.2850 \\
\rowcolor[RGB]{248,223,227}
& Random prior               & 1.9986 & 0.7026 & 815.4703  & $\alpha=0.3$ & 3.2759 & 0.0175 & 2.7714 \\
\midrule
\rowcolor[RGB]{232,243,232}
G3: Efficiency
& Trained prior              & 1.0285 & 0.0077 & 1.8433   & $\alpha=0.1$ & 1.0279 & 0.0082 & 1.1643 \\
\rowcolor[RGB]{248,241,224}
& Noisy prior ($\sigma=0.05$)& 0.9168 & 0.0462 & 88.7405  & $\alpha=0.1$ & 1.0188 & 0.0111 & 27.5441 \\
\rowcolor[RGB]{248,223,227}
& Random prior               & 0.4285 & 0.6113 & 603.1209  & $\alpha=0.3$ & 0.9837 & 0.0210 & 2.5218 \\
\bottomrule
\end{tabular}
\end{table*}

The table shows that the same aggregate pattern appears at the per-goal level: noisy and undertrained priors are still manageable for PoE, while the random-prior setting is where prior-only adaptation fails most sharply.

\begin{table*}[t]
\centering
\caption{Detailed single-environment actor-quality sensitivity on
\texttt{halfcheetah-medium-v2}. Weak, medium, and strong actors correspond to
behavioral-cloning snapshots at epochs 5, 15, and 50, with validation NLL
$=-3.17/-3.91/-4.19$, respectively. The deployment prior is held fixed. $\Delta$
denotes return improvement over the corresponding frozen actor.}
\label{tab:actor_quality_detail}
\vspace{4pt}
\small
\setlength{\tabcolsep}{3pt}
\renewcommand{\arraystretch}{1.06}
\resizebox{\textwidth}{!}{%
\begin{tabular}{llccccccc}
\toprule
\rowcolor[RGB]{232,236,241}
\textbf{Actor} & \textbf{Val.\ NLL} & \textbf{Goal} & \textbf{Frozen Ret} & \textbf{PoE Ret} $(\alpha)$ & \textbf{KL-Reg Ret} $(\beta)$ & \textbf{PoE $\Delta$} & \textbf{KL-Reg $\Delta$} & \textbf{Cat\%$_{\mathrm{PoE}}$} \\
\midrule
\rowcolor[RGB]{248,248,250}
weak   & $-3.168$ & G1\_speed     & 5.4263 & 5.4651\,(0.1) & 5.4652\,(0.1) & +0.0388 & +0.0389 & 0.26 \\
\rowcolor[RGB]{248,248,250}
       &          & G2\_balanced  & 3.3322 & 3.3572\,(0.1) & 3.3571\,(0.1) & +0.0250 & +0.0249 & 0.30 \\
\rowcolor[RGB]{248,248,250}
       &          & G3\_efficient & 1.0046 & 1.0223\,(0.1) & 1.0223\,(0.1) & +0.0177 & +0.0177 & 1.13 \\
\midrule
\rowcolor[RGB]{242,247,247}
medium & $-3.914$ & G1\_speed     & 5.4187 & 5.4629\,(0.1) & 5.4633\,(0.1) & +0.0443 & +0.0446 & 0.24 \\
\rowcolor[RGB]{242,247,247}
       &          & G2\_balanced  & 3.3312 & 3.3562\,(0.1) & 3.3565\,(0.1) & +0.0250 & +0.0252 & 0.28 \\
\rowcolor[RGB]{242,247,247}
       &          & G3\_efficient & 1.0100 & 1.0224\,(0.1) & 1.0225\,(0.1) & +0.0124 & +0.0125 & 1.13 \\
\midrule
\rowcolor[RGB]{248,248,250}
strong & $-4.194$ & G1\_speed     & 5.4174 & 5.4641\,(0.1) & 5.4638\,(0.1) & +0.0466 & +0.0464 & 0.26 \\
\rowcolor[RGB]{248,248,250}
       &          & G2\_balanced  & 3.3315 & 3.3570\,(0.1) & 3.3569\,(0.1) & +0.0255 & +0.0254 & 0.30 \\
\rowcolor[RGB]{248,248,250}
       &          & G3\_efficient & 1.0119 & 1.0227\,(0.1) & 1.0228\,(0.1) & +0.0108 & +0.0109 & 1.11 \\
\bottomrule
\end{tabular}%
}
\end{table*}

The detailed actor-quality numbers confirm the aggregate story: the stronger anchors remain the best conservative starting points, and PoE stays slightly ahead of KL-Reg in return while keeping KL lower across all three goals.

\subsection{Additional Multi-Environment D4RL Comparison Tables}
\label{app:d4rl_multi_env_compact}

For completeness, we report two compact multi-environment D4RL comparison tables that
make the headline frozen-actor baselines explicit across all evaluated environments and
deployment goals. These tables complement the main-text D4RL figures and aggregate
robustness analyses by showing direct side-by-side comparisons among the frozen actor,
prior-only adaptation, additive mixing, KL-regularized post-hoc adaptation, and PoE.

Table~\ref{tab:d4rl_main_comparison_compact} provides the full compact numerical comparison.
For each environment-goal pair, it reports the goal-conditioned return together with the
KL divergence from the frozen actor, making it possible to inspect the return--conservatism
tradeoff directly for all principal frozen-actor baselines. This table is useful when the
reader wants the exact side-by-side values underlying the broader trends discussed in the
main text.

Table~\ref{tab:d4rl_main_comparison_summary} provides a higher-level reading aid. Instead
of reproducing the full interpretation from the raw numbers, it highlights, for each
environment-goal pair, which method achieves the strongest return and which adaptive method
stays closest to the frozen actor in KL. Read together, Tables~\ref{tab:d4rl_main_comparison_compact}
and~\ref{tab:d4rl_main_comparison_summary} make clear that the main empirical pattern is
not uniform dominance by a single method on every metric. Rather, they expose the central
tradeoff emphasized throughout the paper: prior-only adaptation can sometimes achieve the
largest raw return, while PoE and KL-regularized adaptation typically deliver a more
favorable balance between deployment performance and conservative deviation from the
validated actor.

\begin{table*}[t]
\centering
\caption{Compact multi-environment D4RL headline comparison. Each entry reports
goal-conditioned return / KL divergence from the frozen actor. The table makes the
closest frozen-actor baselines explicit across all evaluated environments and goals.}
\label{tab:d4rl_main_comparison_compact}
\vspace{4pt}
\small
\setlength{\tabcolsep}{3pt}
\renewcommand{\arraystretch}{1.08}
\resizebox{\textwidth}{!}{%
\begin{tabular}{llccccc}
\toprule
\rowcolor[RGB]{232,236,241}
\textbf{Env} & \textbf{Goal} & \textbf{Frozen} & \textbf{Prior Only} & \textbf{Additive} & \textbf{KL-Reg} & \textbf{PoE} \\
\midrule
\rowcolor[RGB]{248,248,250}
\texttt{halfcheetah-medium-v2} & G1\_speed     & 5.403 / 0.000 & 5.453 / 1.047 & 5.433 / 0.263 & 5.443 / 0.572 & 5.444 / 0.390 \\
\rowcolor[RGB]{248,248,250}
\texttt{halfcheetah-medium-v2} & G2\_balanced  & 3.325 / 0.000 & 3.352 / 1.043 & 3.340 / 0.262 & 3.346 / 0.575 & 3.347 / 0.391 \\
\rowcolor[RGB]{248,248,250}
\texttt{halfcheetah-medium-v2} & G3\_efficient & 1.011 / 0.000 & 1.023 / 1.005 & 1.016 / 0.251 & 1.021 / 0.618 & 1.021 / 0.400 \\
\midrule
\rowcolor[RGB]{242,247,247}
\texttt{hopper-medium-expert-v2} & G1\_speed     & 2.455 / 0.000 & 2.260 / 1560.719 & 2.311 / 392.886 & 2.282 / 27.656 & 2.273 / 29.407 \\
\rowcolor[RGB]{242,247,247}
\texttt{hopper-medium-expert-v2} & G2\_balanced  & 1.678 / 0.000 & 1.621 / 146.257  & 1.638 / 37.213  & 1.649 / 3.336  & 1.644 / 2.826 \\
\rowcolor[RGB]{242,247,247}
\texttt{hopper-medium-expert-v2} & G3\_efficient & 0.336 / 0.000 & 0.331 / 6.676    & 0.335 / 1.676   & 0.334 / 2.180  & 0.333 / 1.262 \\
\midrule
\rowcolor[RGB]{248,248,250}
\texttt{hopper-medium-v2} & G1\_speed     & 2.181 / 0.000 & 2.148 / 121.769 & 2.141 / 31.174 & 2.184 / 5.276 & 2.184 / 4.986 \\
\rowcolor[RGB]{248,248,250}
\texttt{hopper-medium-v2} & G2\_balanced  & 1.541 / 0.000 & 1.564 / 5.931   & 1.537 / 1.502  & 1.553 / 2.081 & 1.553 / 2.090 \\
\rowcolor[RGB]{248,248,250}
\texttt{hopper-medium-v2} & G3\_efficient & 0.309 / 0.000 & 0.318 / 4.524   & 0.310 / 1.123  & 0.314 / 2.143 & 0.315 / 2.099 \\
\midrule
\rowcolor[RGB]{242,247,247}
\texttt{walker2d-medium-v2} & G1\_speed     & 2.562 / 0.000 & 2.887 / 87.928 & 2.826 / 21.892 & 2.900 / 14.795 & 2.904 / 15.234 \\
\rowcolor[RGB]{242,247,247}
\texttt{walker2d-medium-v2} & G2\_balanced  & 1.732 / 0.000 & 1.863 / 10.604 & 1.784 / 2.655  & 1.836 / 5.774  & 1.838 / 5.720 \\
\rowcolor[RGB]{242,247,247}
\texttt{walker2d-medium-v2} & G3\_efficient & 0.349 / 0.000 & 0.374 / 9.688  & 0.358 / 2.427  & 0.368 / 5.367  & 0.368 / 5.286 \\
\bottomrule
\end{tabular}%
}
\end{table*}

\begin{table*}[t]
\centering
\caption{Compact multi-environment summary from the off-policy evaluator (diagnostic only). We highlight the strongest-return method and the lowest-reported-KL adaptive method per environment-goal pair. The ``Lowest adaptive KL'' column reflects each rule's stochastic-KL convention (PoE and KL-Reg differ here only by the factor $1+\beta$, not by their deterministic action); the main-body headline (Fig.~\ref{fig:headline_rollout}) uses rollout return directly.}
\label{tab:d4rl_main_comparison_summary}
\vspace{4pt}
\small
\setlength{\tabcolsep}{3pt}
\renewcommand{\arraystretch}{1.08}
\resizebox{\textwidth}{!}{%
\begin{tabular}{llcccccccc}
\toprule
\rowcolor[RGB]{232,236,241}
\textbf{Env} & \textbf{Goal} & \textbf{Frozen} & \textbf{Prior Only} & \textbf{Additive} & \textbf{KL-Reg} & \textbf{PoE} & \textbf{Best return} & \textbf{Lowest adaptive KL} \\
\midrule
\rowcolor[RGB]{248,248,250}
\texttt{halfcheetah-medium-v2} & G1\_speed     & 5.403 / 0.000 & 5.453 / 1.047 & 5.433 / 0.263 & 5.443 / 0.572 & 5.444 / 0.390 & Prior Only (5.453) & Additive (0.263) \\
\rowcolor[RGB]{248,248,250}
\texttt{halfcheetah-medium-v2} & G2\_balanced  & 3.325 / 0.000 & 3.352 / 1.043 & 3.340 / 0.262 & 3.346 / 0.575 & 3.347 / 0.391 & Prior Only (3.352) & Additive (0.262) \\
\rowcolor[RGB]{248,248,250}
\texttt{halfcheetah-medium-v2} & G3\_efficient & 1.011 / 0.000 & 1.023 / 1.005 & 1.016 / 0.251 & 1.021 / 0.618 & 1.021 / 0.400 & Prior Only (1.023) & Additive (0.251) \\
\midrule
\rowcolor[RGB]{242,247,247}
\texttt{hopper-medium-expert-v2} & G1\_speed     & 2.455 / 0.000 & 2.260 / 1560.719 & 2.311 / 392.886 & 2.282 / 27.656 & 2.273 / 29.407 & Frozen Actor (2.455) & KL-Reg (27.656) \\
\rowcolor[RGB]{242,247,247}
\texttt{hopper-medium-expert-v2} & G2\_balanced  & 1.678 / 0.000 & 1.621 / 146.257  & 1.638 / 37.213  & 1.649 / 3.336  & 1.644 / 2.826 & Frozen Actor (1.678) & PoE (2.826) \\
\rowcolor[RGB]{242,247,247}
\texttt{hopper-medium-expert-v2} & G3\_efficient & 0.336 / 0.000 & 0.331 / 6.676    & 0.335 / 1.676   & 0.334 / 2.180  & 0.333 / 1.262 & Frozen Actor (0.336) & PoE (1.262) \\
\midrule
\rowcolor[RGB]{248,248,250}
\texttt{hopper-medium-v2} & G1\_speed     & 2.181 / 0.000 & 2.148 / 121.769 & 2.141 / 31.174 & 2.184 / 5.276 & 2.184 / 4.986 & PoE (2.184) & PoE (4.986) \\
\rowcolor[RGB]{248,248,250}
\texttt{hopper-medium-v2} & G2\_balanced  & 1.541 / 0.000 & 1.564 / 5.931   & 1.537 / 1.502  & 1.553 / 2.081 & 1.553 / 2.090 & Prior Only (1.564) & Additive (1.502) \\
\rowcolor[RGB]{248,248,250}
\texttt{hopper-medium-v2} & G3\_efficient & 0.309 / 0.000 & 0.318 / 4.524   & 0.310 / 1.123  & 0.314 / 2.143 & 0.315 / 2.099 & Prior Only (0.318) & Additive (1.123) \\
\midrule
\rowcolor[RGB]{242,247,247}
\texttt{walker2d-medium-v2} & G1\_speed     & 2.562 / 0.000 & 2.887 / 87.928 & 2.826 / 21.892 & 2.900 / 14.795 & 2.904 / 15.234 & PoE (2.904) & KL-Reg (14.795) \\
\rowcolor[RGB]{242,247,247}
\texttt{walker2d-medium-v2} & G2\_balanced  & 1.732 / 0.000 & 1.863 / 10.604 & 1.784 / 2.655  & 1.836 / 5.774  & 1.838 / 5.720 & Prior Only (1.863) & Additive (2.655) \\
\rowcolor[RGB]{242,247,247}
\texttt{walker2d-medium-v2} & G3\_efficient & 0.349 / 0.000 & 0.374 / 9.688  & 0.358 / 2.427  & 0.368 / 5.367  & 0.368 / 5.286 & Prior Only (0.374) & Additive (2.427) \\
\bottomrule
\end{tabular}%
}
\end{table*}

Taken together, Tables~\ref{tab:d4rl_main_comparison_compact} and~\ref{tab:d4rl_main_comparison_summary} show that the strong halfcheetah pattern is not a single-environment artifact: PoE remains the main conservative adaptive choice in the environments where frozen-actor steering is genuinely useful, while the stronger frozen anchors can still dominate in the harder boundary case.

\subsection{Extended Ablation Studies}
\label{app:extended-ablation}

We next examine within-family sensitivity to prior construction and refinement strength.
Figure~\ref{fig:appendix_ablation_heatmap} summarizes critic-prior temperature sweeps
($\tau \in \{0.1,1.0,5.0\}$) and refinement-prior $\alpha$ sweeps
($\alpha \in \{0.3,0.5,0.7\}$) using the released appendix CSVs.

Three qualitative patterns emerge. First, critic-prior temperature has only a
mild effect on return but a clearer effect on lower-tail behavior, with the
sharpest prior giving the best risk profile within the critic-prior family.
Second, within the refinement-prior family, changing $\alpha$ produces the
expected adaptation--conservatism tradeoff: larger actor weight improves proximity
to the pretrained actor and slightly improves lower-tail behavior. Third, the
refinement-prior family is more tuning-robust than the critic-prior family.

The same conclusion holds on the original metric scales: the actor-only policy
remains the strongest conservative anchor, while the refinement-prior variants
remain much closer to that anchor than critic-only or evidence-free ablations.

\begin{figure}[t]
   \centering
   \includegraphics[width=\linewidth]{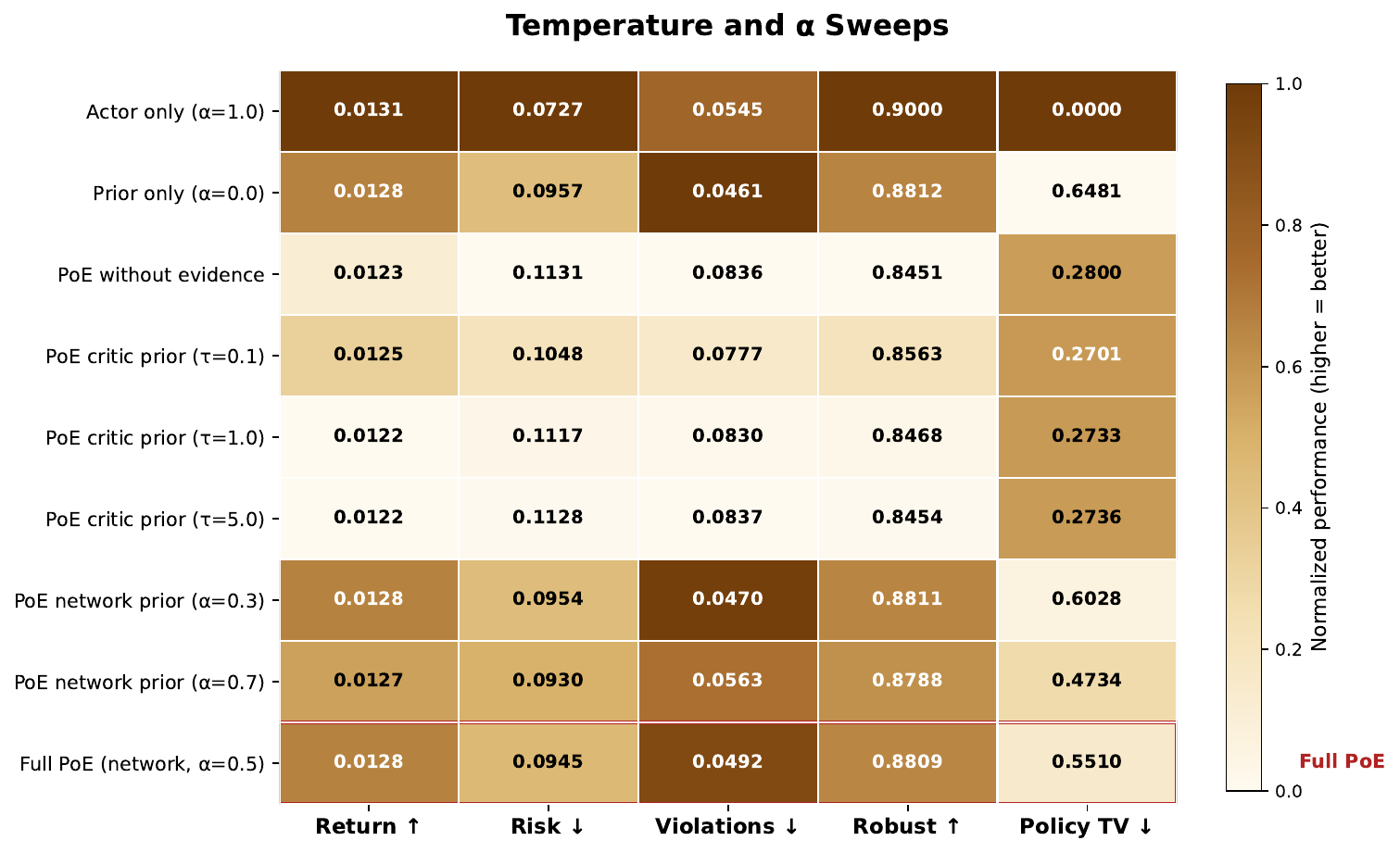}
   \caption{Extended ablation across critic-prior temperature sweeps and refinement-prior $\alpha$ sweeps.}
   \label{fig:appendix_ablation_heatmap}
\end{figure}

\subsection{Policy Stability under Critic Degradation}
\label{app:critic-degradation}

We next examine how critic-dependent policies behave when the critic used for action
selection is corrupted by Gaussian noise
$\sigma\in\{0.01,0.05,0.10,0.25,0.50,1.00\}$. This experiment is a failure-tolerance diagnostic rather than a primary performance benchmark. Evaluation uses the goal-weighted dataset reward $r_g(s,a)=g\cdot rc(s,a)$ rather than critic values, so the corrupted critic affects action selection but not scoring.

We track two metrics as the corruption noise grows: the total variation distance
from the frozen actor, $\mathrm{TV}(\pi,\pi_\theta)$, and the support-violation rate.
Critic-Greedy saturates both metrics almost immediately. PoE with the refinement prior remains close to the frozen actor's behavioral envelope throughout the sweep. PoE with the critic prior is less stable, reflecting direct propagation of critic corruption into the prior itself. We omit the figure from this version of the paper; the sweep values are in the released CSVs.


\subsection{Support Preservation Diagnostic}
\label{app:support-preservation}

Proposition~\ref{prop:interp}(iii) establishes exact support preservation:
$\pi_\theta(a\mid s)=0 \Rightarrow \pi_{\mathrm{ref}}(a\mid s,g)=0$ for all
$\alpha>0$. Because learned actors often assign very small rather than exactly zero
probability to many actions, we also report an approximate support-violation rate.
The results appear in Table~\ref{tab:support_preservation}.

Table~\ref{tab:support_preservation} reports the fraction of selected actions
satisfying $\pi_\theta(a\mid s)<\epsilon$ for
$\epsilon\in\{0.001,0.01,0.05\}$. PoE with the refinement prior stays below the frozen
actor’s own violation rate at all three thresholds, while Critic-Greedy violates
support almost everywhere. The critic-prior variant lies between these extremes,
reflecting its partial dependence on critic-derived scores.

\begin{table}[tb]
\centering
\small
\caption{Approximate support-violation rates at three thresholds $\epsilon$. PoE
(Refinement Prior) remains below the frozen actor’s own violation rate, while
Critic-Greedy violates support almost everywhere.}
\label{tab:support_preservation}
\vspace{4pt}
\begin{tabular}{lccc}
\toprule
\rowcolor[RGB]{191,223,223}
\textbf{Method} & \textbf{$\epsilon=0.001$} & \textbf{$\epsilon=0.01$} & \textbf{$\epsilon=0.05$} \\
\midrule
\rowcolor[RGB]{242,247,247}
Pretrained Actor       & 0.0019 & 0.0238 & 0.1184 \\
\rowcolor[RGB]{248,223,227}
Critic-Greedy          & 0.9986 & 0.9986 & 0.9986 \\
\rowcolor[RGB]{223,240,216}
PoE (Refinement Prior) & \textbf{0.0012} & \textbf{0.0188} & \textbf{0.0885} \\
\rowcolor[RGB]{242,247,247}
PoE (Critic Prior)     & 0.0476 & 0.1536 & 0.3484 \\
\bottomrule
\end{tabular}
\end{table}

\subsection{Q-value Analysis of Selected Actions}
\label{app:qvalue-diagnostic}

We also report a diagnostic analysis of the original uncorrupted deployment critic.
Table~\ref{tab:qvalue_actions} summarizes the mean, standard deviation, minimum, and
maximum Q-values of actions selected by each policy under that critic.

Table~\ref{tab:qvalue_actions} shows that Critic-Greedy selects a tight high-value
action set, consistent with direct critic maximization. By contrast, PoE with the
refinement prior selects actions whose Q-value distribution is nearly identical to
dataset actions and to the frozen actor, which is exactly the behavior expected from an
actor-preserving refinement mechanism. The critic-prior variant exhibits a wider range,
again reflecting greater sensitivity to critic-derived structure.

\begin{table}[tb]
\centering
\small
\caption{Q-value statistics of selected actions under the original uncorrupted critic.
PoE (Refinement Prior) stays close to dataset and frozen-actor behavior, while
Critic-Greedy reflects aggressive critic maximization.}
\label{tab:qvalue_actions}
\vspace{4pt}
\begin{tabular}{lcccc}
\toprule
\rowcolor[RGB]{221,214,243}
\textbf{Method} & \textbf{Mean} & \textbf{Std.} & \textbf{Min} & \textbf{Max} \\
\midrule
\rowcolor[RGB]{248,248,252}
Dataset actions        & 0.0127 & 0.0074 & -0.0077 & 0.0305 \\
\rowcolor[RGB]{244,244,249}
Behavior Cloning       & 0.0150 & 0.0070 &  0.0024 & 0.0461 \\
\rowcolor[RGB]{248,248,252}
Pretrained Actor       & 0.0131 & 0.0080 & -0.0109 & 0.0461 \\
\rowcolor[RGB]{245,240,230}
BCQ                    & 0.0186 & 0.0036 &  0.0160 & 0.0461 \\
\rowcolor[RGB]{248,223,227}
Critic-Greedy          & \textbf{0.0330} & \textbf{0.0021} & \textbf{0.0297} & \textbf{0.0566} \\
\rowcolor[RGB]{223,235,247}
PoE (Refinement Prior) & 0.0128 & 0.0074 & -0.0080 & 0.0297 \\
\rowcolor[RGB]{242,247,247}
PoE (Critic Prior)     & 0.0122 & 0.0084 & -0.0109 & 0.0458 \\
\bottomrule
\end{tabular}
\end{table}

\subsection{Additional Benchmark Analysis on Frozen-Actor Goal Adaptation}
\label{app:d4rl_goal_adaptation}

We provide an additional single-environment benchmark analysis on
\texttt{halfcheetah-medium-v2} to study deployment-time goal adaptation under a frozen
actor. This is a supplement to the main D4RL results in Section~\ref{sec:results},
where the main-paper comparison focuses on matched-KL tradeoffs and prior degradation.

Table~\ref{tab:d4rl_goal_adaptation_appendix} reports return, catastrophic rate,
robustness, and KL divergence from the frozen actor under three deployment goals.
The table shows that the frozen actor is not uniformly optimal once the deployment goal
changes, while prior-only adaptation attains the highest return at substantially larger
KL divergence. PoE provides a smoother adaptation--conservatism tradeoff across
$\alpha$.

\begin{table*}[t]
\centering
\caption{Additional benchmark analysis on \texttt{halfcheetah-medium-v2} under three
deployment goals. Higher return and robustness are better; lower catastrophic rate and
KL are better.}
\label{tab:d4rl_goal_adaptation_appendix}
\vspace{4pt}
\small
\setlength{\tabcolsep}{4pt}
\resizebox{\textwidth}{!}{%
\begin{tabular}{lcccccccccccc}
\toprule
& \multicolumn{4}{c}{\cellcolor[RGB]{245,232,210}\textbf{G1: Speed}}
& \multicolumn{4}{c}{\cellcolor[RGB]{226,239,226}\textbf{G2: Balanced}}
& \multicolumn{4}{c}{\cellcolor[RGB]{228,232,246}\textbf{G3: Efficiency}} \\
\cmidrule(lr){2-5} \cmidrule(lr){6-9} \cmidrule(lr){10-13}
\rowcolor[RGB]{238,240,242}
\textbf{Method}
& \textbf{Return $\uparrow$} & \textbf{Cat.\ $\downarrow$} & \textbf{Rob $\uparrow$} & \textbf{KL $\downarrow$}
& \textbf{Return $\uparrow$} & \textbf{Cat.\ $\downarrow$} & \textbf{Rob $\uparrow$} & \textbf{KL $\downarrow$}
& \textbf{Return $\uparrow$} & \textbf{Cat.\ $\downarrow$} & \textbf{Rob $\uparrow$} & \textbf{KL $\downarrow$} \\
\midrule
\rowcolor[RGB]{250,250,250}
Frozen Actor
& 5.4012 & 0.0079 & 0.9921 & 0.0000
& 3.3236 & 0.0094 & 0.9906 & 0.0000
& 1.0107 & 0.0226 & 0.9774 & 0.0000 \\
\rowcolor[RGB]{245,236,240}
Prior Only
& \textbf{5.4784} & 0.0015 & 0.9985 & 1.9458
& \textbf{3.3655} & 0.0021 & 0.9979 & 1.9384
& \textbf{1.0285} & \textbf{0.0077} & \textbf{0.9923} & 1.8433 \\
\rowcolor[RGB]{232,244,234}
PoE ($\alpha = 0.3$)
& 5.4634 & \textbf{0.0011} & \textbf{0.9989} & 0.6666
& 3.3582 & \textbf{0.0010} & \textbf{0.9990} & 0.6691
& 1.0263 & 0.0081 & 0.9919 & 0.7078 \\
\rowcolor[RGB]{247,247,247}
PoE ($\alpha = 0.5$)
& 5.4541 & 0.0012 & 0.9988 & 0.3970
& 3.3529 & 0.0018 & 0.9982 & 0.3994
& 1.0241 & 0.0089 & 0.9911 & 0.4348 \\
\rowcolor[RGB]{250,250,250}
PoE ($\alpha = 0.7$)
& 5.4407 & 0.0017 & 0.9983 & 0.2084
& 3.3455 & 0.0020 & 0.9980 & 0.2099
& 1.0207 & 0.0103 & 0.9897 & 0.2321 \\
\bottomrule
\end{tabular}%
}
\end{table*}

\section{Implementation Details and Reproducibility}
\label{app:implementation-details}

\subsection{Training and Evaluation Setup}
\label{app:training-setup}

All models were implemented in PyTorch. The actor is trained once on the offline
dataset via behavioral cloning and then frozen; the refinement prior is trained
separately on the same dataset via contrastive goal-weighted behavioral cloning.
The Product-of-Experts layer introduces no learnable parameters at deployment time,
it only reweights actor and prior distributions according to $\alpha$ via
Eq.~\eqref{eq:poe-policy}. A held-out goal-conditioned critic is trained with a
different random seed and used only as a within-protocol diagnostic scorer in
Appendix~\ref{app:evaluation_protocol}; the main body's returns are from
real MuJoCo rollouts (Sec.~\ref{sec:setup}). Unless otherwise specified,
results are averaged over the five evaluation seeds reported in the main body.

\subsection{Inference-Time Procedure}
\label{app:algorithm}

Algorithm~\ref{alg:poe_refinement} gives the complete deployment-time refinement 
procedure. The key property is that adaptation is entirely parameter-free at 
deployment: no gradients are computed, no model weights are updated, and no new 
data are accessed. The procedure requires only a forward pass through the frozen 
actor and the offline-trained prior, followed by a pointwise reweighting over 
$|\mathcal{A}|$ actions. The computational cost is therefore 
$\mathcal{O}(|\mathcal{A}|)$ per decision step, making it practical even for 
large action spaces.

Changing the deployment objective requires only updating $g$; the actor, prior, 
and $\alpha$ are all fixed. This means the same trained components can serve 
multiple deployment contexts simultaneously, with zero additional training cost 
per new goal. For the continuous-action D4RL setting, the same procedure applies 
with Gaussian distributions: the PoE composition remains Gaussian in closed form 
(see Section~\ref{sec:continuous-actions}), so no discrete enumeration is needed.

\begin{algorithm}[t]
  \caption{Deployment-Time Product-of-Experts Policy Refinement}
  \label{alg:poe_refinement}
  \begin{algorithmic}[1]
    \Require Frozen actor $\pi_\theta(a\mid s)$; prior $\rho_\phi(a\mid s,g)$;
             coefficient $\alpha\in[0,1]$; state $s$; goal $g$
    \Ensure $\pi_{\mathrm{ref}}(\cdot\mid s,g)$ and action $a^\star$
    \For{each $a\in\mathcal{A}$}
      \State $w(a)\gets\pi_\theta(a\mid s)^{\alpha}\,\rho_\phi(a\mid s,g)^{1-\alpha}$
    \EndFor
    \State $Z\gets\sum_{a'\in\mathcal{A}}w(a')$
    \For{each $a\in\mathcal{A}$}
      \State $\pi_{\mathrm{ref}}(a\mid s,g)\gets w(a)/Z$
    \EndFor
    \State $a^\star\sim\pi_{\mathrm{ref}}(\cdot\mid s,g)$
    \State\Return $\pi_{\mathrm{ref}}(\cdot\mid s,g),\;a^\star$
  \end{algorithmic}
\end{algorithm}

\section{AI Usage Statement}
\label{app:ai-statement}

Generative AI tools were used in a limited and supportive capacity during the 
preparation of this manuscript. ChatGPT was employed for writing assistance, 
including refinement of technical language, improvement of clarity and organization, 
and correction of grammar and spelling. Claude was used to assist with code 
refactoring, formatting, and software-structure improvements. All scientific 
content, experimental design, results, interpretations, and conclusions were 
conceived, implemented, and validated by the authors. No data, results, or analyses 
were generated or fabricated by AI systems. All AI-assisted outputs were reviewed, 
edited, and verified by the authors to ensure accuracy, originality, and consistency 
with scientific standards.

\end{document}